%% file: main.arxiv.tex
\documentclass[twocolumn]{article}

\usepackage{preprint}

\usepackage{amsmath,amssymb,amsfonts}
\usepackage{url}
\usepackage[numbers,sort&compress]{natbib}
\usepackage{hyperref}
\usepackage{cleveref}
\usepackage{color}
\usepackage{xspace}
\usepackage{mathtools}
\usepackage{graphicx}      
\usepackage{caption}
\usepackage{algorithm}
\usepackage{algorithmic}
\usepackage{wrapfig}
\usepackage{tabularx}
\usepackage{booktabs}
\usepackage{makecell}
\usepackage[inline]{enumitem}
\usepackage{tikz}
\usepackage{amsthm}

\input{preamble-AIDA}

\newtheorem{dfn}{Definition}
\newtheorem{theorem}{Theorem}
\newtheorem{lemma}{Lemma}
\newtheorem{remark}{Remark}

\newtheorem{proposition}{Proposition}
\newtheorem{corollary}{Corollary}

\newtheorem{generalboundednessassumption}{General boundedness assumption}
\crefname{generalboundednessassumption}{general boundedness
assumption}{general boundedness assumptions}
\Crefname{generalboundednessassumption}{General boundedness
assumption}{General boundedness assumptions}

\newcommand{\goaldist}[1][{}]{d_{\G#1}}
\newcommand{\learningpolicy}[1][{}]{
  \ifx#1\empty\policy^{\theta}
  \else\policy^{\theta_{#1}}\fi
}
\newcommand{\baselinepolicy}{\policy^{\mathrm{b}}}
\newcommand{\relprob}{p^{\mathrm{rel}}}

\newcommand{\kappalow}{\kappa^{\mathrm{low}}}
\newcommand{\kappahigh}{\kappa^{\mathrm{high}}}
\newcommand{\DT}{D^{T}}                              
\newcommand{\subscript}[2]{$#1 _ #2$}
\newcommand{\alglinelabel}{%
  \addtocounter{ALC@line}{-1}
  \refstepcounter{ALC@line}
  \label
}
\newcommand{
  \gitrepo
}{{\footnotesize\url{https://github.com/aidagroup/calf-enhance}}\xspace}

\providecommand{\Value}{\mathcal{V}}
\providecommand{\QValue}{Q}
\makeatletter
\providecommand{\leftsquigarrow}{\mathrel{\mathpalette\reflect@squig\relax}}
\newcommand{\reflect@squig}[2]{\reflectbox{$\m@th#1\rightsquigarrow$}}
\makeatother

\begin{document}

\twocolumn[
  \begin{@twocolumnfalse}

\title{An Agency-Transferring Model-Free Policy Enhancement Technique}

\author{
  Anton Bolychev\thanks{The first two authors contributed equally.}\\
  \small Center for Engineering Systems and Sciences\\
  \small \texttt{bolychev.anton@gmail.com}
  \And
  Georgiy Malaniya\footnotemark[1]\\
  \small Center for Engineering Systems and Sciences\\
  \small \texttt{g.malaniya@rcdei.ru}
  \And
  Sinan Ibrahim\\
  \small Center for Engineering Systems and Sciences\\
  \small \texttt{s.ibrahim@rcdei.ru}
  \And
  Pavel Osinenko\thanks{Corresponding author.}\\
  \small Central University; Center for Engineering Systems and Sciences;\\
  \small Sirius University of Science and Technology\\
  \small \texttt{p.osinenko@gmail.com}
}

\date{}

\maketitle

\begin{abstract}
  Training reinforcement learning (RL) policies from scratch is
  costly: it requires careful reward and environment design,
  extensive tuning, and substantial computation.
  Yet many control problems already have a functional but
  suboptimal policy available as a baseline.
  This paper proposes a method for embedding such a baseline into
  the RL training process, simultaneously improving training
  efficiency relative to from-scratch methods and producing a
  learning policy that outperforms the baseline.
  At each step, the method arbitrates between the baseline policy
  and a trainable learning policy, initially relying strongly on
  the baseline policy and then progressively transferring agency to
  the learning policy.
  By the end of training, the learning policy is a standalone
  neural network that operates without baseline policy support.
  The paper formalizes what it means for the baseline policy to be
  functional: under this policy, the agent reaches a goal set and
  remains there with high probability.
  The proposed arbitration mechanism is designed to exploit this
  property during training, yielding high goal-reaching rates right
  from the beginning of training.
  A theoretical analysis provides a formal interpretation of this
  behavior under stated assumptions and extends it to the final
  baseline-free regime, where explicit lower bounds are derived for
  the goal-reaching probability of the standalone learning policy.
  Empirical results on continuous-control benchmarks show that the
  proposed method achieves returns that match or exceed those of
  competitive approaches, while maintaining the highest
  goal-reaching rates throughout training among the compared
  methods---including in the final stage, where the learning policy
  operates without any baseline support.
\end{abstract}

\keywords{Reinforcement learning \and Policy arbitration \and Policy switching}

\vspace{0.35cm}

  \end{@twocolumnfalse}
]

\begingroup
\renewcommand{\thefootnote}{\fnsymbol{footnote}}
\footnotetext[1]{The first two authors contributed equally.}
\footnotetext[2]{Corresponding author.}
\endgroup

\section{Introduction}
\label{sec:introduction}

Reinforcement learning (RL) has repeatedly demonstrated that a single
framework can master an impressive spectrum of complex control problems.
Flagship achievements include AlphaGo and AlphaZero's dominance in
Go, chess, and shogi~\cite{Silver2018generalreinfor}; OpenAI Five's
grand-master-level play in Dota~2~\cite{openai2019dota2largescale};
and AlphaStar's success in StarCraft II~\cite{Vinyals2019Grandmasterlev}.
In robotics, RL enabled a dexterous hand to solve a Rubik's
Cube~\cite{Akkaya2019Solvingrubiks} and has powered a variety of
manipulation and locomotion systems~\cite{Surmann2020DeepReinforcem}.
Yet training an RL agent from scratch is still tricky.
The outcome hinges on a number of small decisions --- how one clips
rewards, sets learning rates, or normalizes inputs --- and on a
toolkit of hard-won tricks~\cite{Engstrom2020Implementation}.
Libraries such as Stable-Baselines3~\cite{stable-baselines3} and
CleanRL~\cite{huang2022cleanrl} bundle these tricks, but they do not
lift all the weight.
One still needs to pick the right hyperparameters, design the
training environment carefully \cite{Eimer2023Hyperparameters}, and
secure enough compute for long runs on high-end hardware.
RL is powerful, yet getting to a working implementation can be
time-consuming and may require a great deal of effort even for an
experienced practitioner.

These practical difficulties, together with the effectiveness of
reinforcement learning itself, constitutes the central motivation of
the present paper.
A class of control problems is considered in which a functional
policy is available as a baseline that completes the task but does
not achieve the desired performance.
Rather than training a reinforcement learning policy from scratch,
this paper proposes a method for embedding prior knowledge, in the
form of the baseline policy, into the RL training process.
This approach simultaneously improves training efficiency compared to
vanilla from-scratch methods such as SAC~\cite{Haarnoja2018},
PPO~\cite{Schulman2017ProximalPolicy}, and TD3~\cite{Fujimoto2018},
and yields a policy that outperforms the baseline.

Problem settings with an available but suboptimal baseline policy
arise naturally across many application domains.
A logistics company may rely on heuristic routing algorithms and seek
to improve delivery efficiency; a financial trader may aim to refine
a profitable but coarse rule-based strategy; a robotics engineer may
use a controller that ensures stability but fails to optimize energy
usage; or a game AI developer may start with rule-based agents and
strive for superhuman performance in complex multi-agent settings.
When accurate modeling is difficult, as in financial trading, vehicle
routing, or video game AI, the available policy may come from
hand-crafted routines, engineering heuristics, textbook techniques,
or other application-specific sources.
Such solutions are unlikely to maximize metrics of interest, just as
in related approaches that also embed a baseline policy into the
learning process the design of that baseline policy is not the
primary concern, with the focus instead placed on how it is
integrated into training.
One such related approach is residual reinforcement learning
(residual RL)~\cite{johannink2018residualreinforcementlearningrobot,
  silver2019residualpolicylearning,
  alakuijala2021residualreinforcementlearningdemonstrations,
Sheng2024Trafficexperti}, which serves as the closest comparison
baseline in this paper and is discussed in more detail in
\Cref{sec:related_work}.

The presence of approaches such as residual RL highlights that
embedding prior knowledge via a pre-existing baseline policy falls
within a well-established class of reinforcement learning methods.
The contribution of the present paper therefore lies in the
development of a reinforcement learning method that leverages a
pre-existing baseline policy, together with its empirical validation
within an established and well-studied problem setting.

\subsection{High-level overview of the proposed method}

In standard reinforcement learning, an agent interacts with an
environment over discrete time steps.
At step $t$, the agent observes the current state $\State_t$ and
selects an action $\Action_t$ according to its policy $\pi(\bullet
\mid \State_t)$, which is formally a conditional probability
distribution over the action space.
The environment executes that action, returns a reward, and produces
the next state.
The overall objective is to maximize cumulative reward.
This objective can be pursued with a variety of standard RL
algorithms, including canonical methods such as
SAC~\cite{Haarnoja2018}, PPO~\cite{Schulman2017ProximalPolicy}, and
TD3~\cite{Fujimoto2018}.
The more formal setup used here is introduced later in \Cref{sec:rl-objective}.
A schematic overview of this standard interaction loop is depicted in
\Cref{fig:rl_interaction_loop} below.

\begin{center}
  \includegraphics[width=0.88\linewidth]{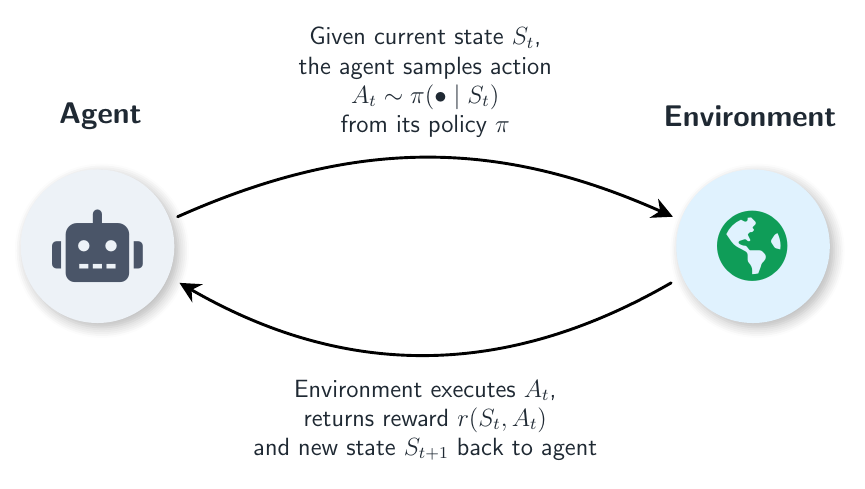}
  \captionof{figure}{ Standard reinforcement learning interaction loop.
    The agent aims to identify a policy that maximizes the expected
    discounted cumulative reward $\sum_{t=0}^{\infty} \gamma^t
    \reward(\State_t, \Action_t)$, where $\gamma \in (0, 1)$ is the
    discount factor and a hyperparameter of the problem.
    The formal problem statement is given in \Cref{sec:rl-objective}
    and \eqref{eq:objective}.
  }
  \label{fig:rl_interaction_loop}
\end{center}

The proposed method can be built on top of any appropriate RL
backbone algorithm, including SAC~\cite{Haarnoja2018},
PPO~\cite{Schulman2017ProximalPolicy}, and TD3~\cite{Fujimoto2018}.
At a high level, such backbone algorithms improve a policy through
repeated interaction with the environment.
They collect transition data generated by the agent and use these
data to train an auxiliary model, commonly called a critic, whose
role is to estimate how favorable particular actions or states are in
terms of expected future reward.
The policy is then updated in a direction favored by these estimates,
so that the critic provides the learning signal used to improve decision making.
Accordingly, the backbone fully determines the updates of the
learning policy, critic, and any other trainable quantities.
The proposed method leaves this learning mechanism intact and
augments it with an arbitration module that governs two policies: the
\textit{learning policy}, represented by the trainable policy
network, and a \textit{baseline policy}.
At each time step, the arbitration module determines whether the
executed action is supplied by the learning policy or by the baseline policy.
Initially, it favors the baseline policy.
As training progresses, it increasingly favors the learning policy,
gradually reducing dependence on the baseline.
Upon training completion, the learning policy, represented by a
standalone neural network, operates independently.


Formally, at each training step $t$, the arbitration module can be
represented by a mixing coefficient $\alpha_t \in [0, 1]$.
This coefficient determines how the action $\Action_t$ executed at
step $t$ is sampled from a probabilistic mixture of the baseline
policy $\baselinepolicy$ and the learning policy $\learningpolicy[t]$
with weights $\theta_t$ at time step $t$:
\begin{equation}\label{eq:calf-action-mixture-distribution}
  \Action_t \sim (1 - \alpha_t) \, \baselinepolicy(\bullet
  \mid \State_t)
  + \alpha_t \, \learningpolicy[t](\bullet \mid \State_t),
\end{equation}
where both $\baselinepolicy(\bullet \mid \State_t)$ and
$\learningpolicy[t](\bullet \mid \State_t)$ are, in general,
stochastic distributions over the action space conditioned on the
current state $\State_t$.
The key design problem of the proposed method is therefore the
construction of the coefficient $\alpha_t$, which in effect
constitutes the arbitration module itself.
The coefficient $\alpha_t$ can be represented as a Bernoulli random variable.
It is equal to one almost surely if the critic value of the candidate
learning action exceeds the current episode-local critic value
$Q_t^{\dagger}$ by the margin $\nu$.
Here, $Q_t^{\dagger}$ is the maximum critic value observed in the
current episode up to the margin $\nu$, where $\nu$ is also an
algorithmic hyperparameter.
Here and below, this critic is denoted by $Q^w$, where $w$ collects
its trainable parameters.
Otherwise, it is sampled with success probability
$\relprob\lambda^{t-\tau}$, where $\tau$ is the beginning time of the
current episode and $t$ is the current training time step.
Equivalently, $\alpha_t$ is obtained by an ordinary logical OR
between the critic-trigger event and the random relaxation event:
\[
  \alpha_t = \mathbb{I}\!\left[
    Q^w(\State_t,\Action_t^{\learningpolicy}) \ge Q_t^\dagger+\nu
  \;\lor\; U_t \le \relprob\lambda^{t-\tau} \right].
\]
Here, $U_t \sim \mathrm{Uniform}[0,1]$ and $\mathbb{I}$ denotes the
indicator function.
The parameters $\relprob$ and $\lambda$ are fixed within an episode
and updated only between episodes, increasing monotonically until
both become equal to one.
A high-level overview is shown in \Cref{fig:diagram}, and a detailed
description is given in \Cref{sec:approach}; see in particular
\Cref{sec:arbitration-module} and Algorithm~\ref{alg:enhance}.
The activation probability $\PP{\alpha_t = 1}$ is designed to remain
relatively small on average during the early stages of training, so
that most actions are still selected from the baseline policy.
As training progresses, this activation probability is designed to
increase on average according to a prescribed schedule until, after a
finite transition time $T_{\mathrm{tran}}$, it becomes equal to $1$.
This terminal case corresponds to both $\relprob$ and $\lambda$
reaching their maximum possible value of $1$, so that $\alpha_t = 1$
almost surely.
As a result, all subsequent actions are selected from the learning
policy and the baseline policy is no longer invoked.

\begin{figure}[!t]
  \includegraphics[width=\linewidth, trim=15mm 0 15mm 0,
  clip]{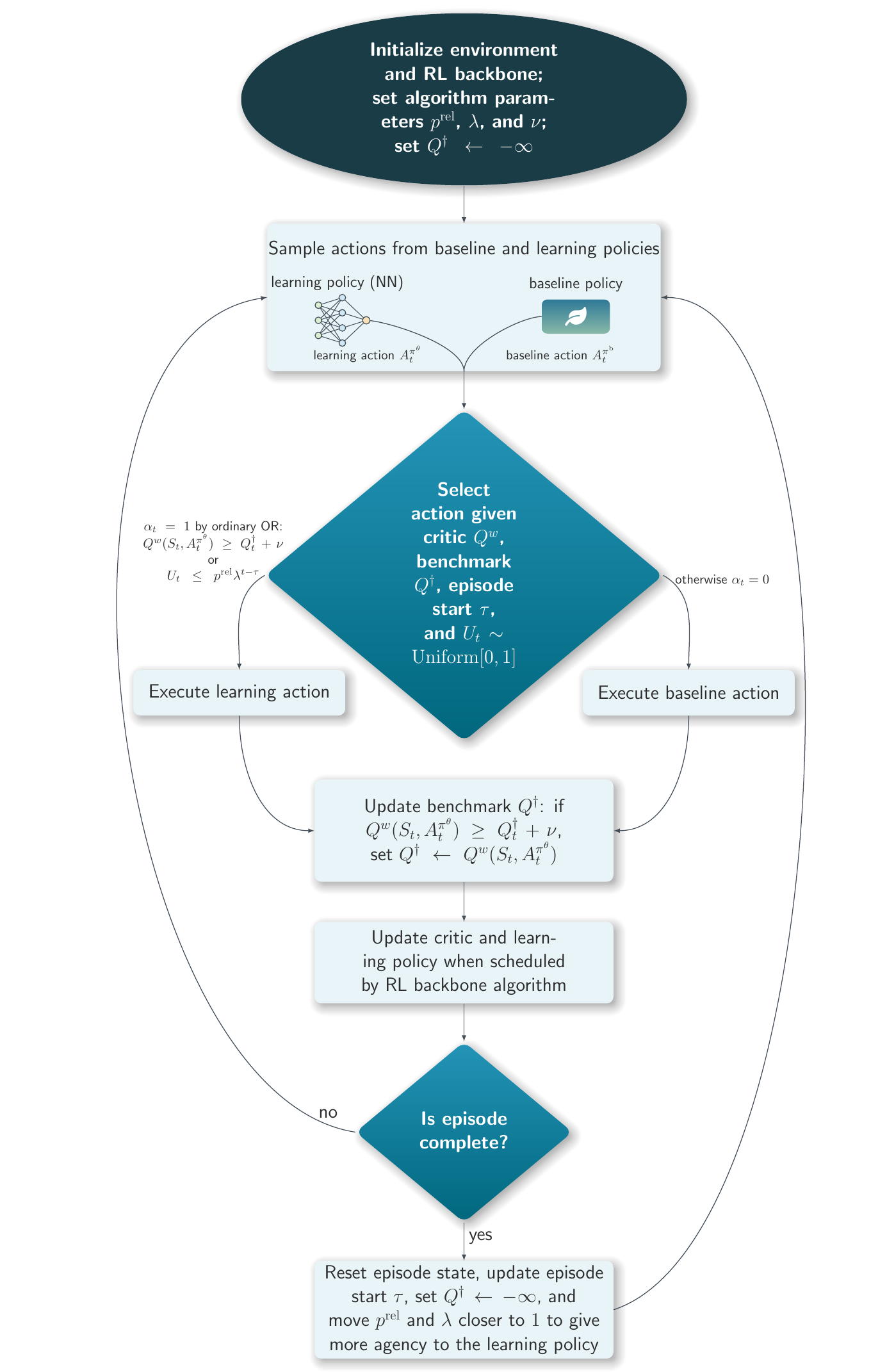}
  \caption{ Diagram of the proposed method.
    The diagram shows action sampling from the baseline and learning
    policies, the arbitration rule, the episode-local benchmark
    update, and the scheduled updates of the RL backbone.
    A detailed version of the algorithm illustrated here is provided
    in \Cref{alg:enhance} within \Cref{sec:approach}.
  }
  \label{fig:diagram}
\end{figure}

In tasks where solving the environment amounts to driving the agent
to a goal set and keeping it there, the proposed method offers an
additional advantage.
In this setting, the design of the mixing coefficient $\alpha_t$
allows the method to retain the beneficial influence of the baseline
policy from the very beginning of training, provided that the
baseline policy itself is able to reach the goal, albeit in a
generally suboptimal manner.
Experimental findings in \Cref{sec:exp_results} indicate that,
compared with the other evaluated algorithms, this is associated with
consistently high goal-reaching rates throughout training.
Moreover, under the stated assumptions, this phenomenon admits a
theoretical interpretation of why it may happen that goal-reaching
performance remains strong, as observed in the experimental findings,
during the early and intermediate stages of learning (i.e., for $t <
T_{\mathrm{tran}}$); see \Cref{sec:theoretical_analysis}.
The uniform result in the main body strengthens this analysis under
additional assumptions on the baseline policy that quantify how
quickly trajectories approach the goal set from a given initial
state; see Definition~\ref{dfn:uniform_eta_improbable} and
\Cref{thm:uniform_goal_reaching}.

However, the baseline policy is fully phased out by design once the
transition time $T_{\mathrm{tran}}$ is reached.
Another question addressed here is whether such performance persists
once the learning policy operates autonomously (i.e., for $t \geq
T_{\mathrm{tran}}$).
Experimental findings in \Cref{sec:exp_results} suggest that it does,
with the proposed method attaining the highest goal-reaching rates
among the compared approaches.
This question is also addressed formally in \ref{sec:transfer}
through a transfer analysis between two regimes: the
baseline-assisted regime, in which baseline actions may still be
selected, and the baseline-free regime, in which actions are
generated exclusively by the learning policy.
Under the stated assumptions, this analysis yields an explicit lower
bound on the probability that the standalone learning policy reaches
the goal region and remains there.
In qualitative terms, the result shows that some performance
degradation may occur once the baseline is removed, but that this
degradation is not arbitrary and can be characterized within the
proposed framework.
The analysis therefore clarifies the conditions under which the loss
after full transfer of agency remains small, and when the standalone
learning policy can still be expected to retain strong goal-reaching behavior.

\subsection{Related work}\label{sec:related_work}

A number of methods leverage existing policies into the training process.
The existing policy can be available in two forms: as recorded traces
(or \emph{demonstrations}) of its behavior (trajectories,
state--action pairs, videos, etc.), or as a callable policy that can
be queried on the environment at any time step.

A large body of work operates in the first regime.
Many well-established methods do not follow the classical RL setting
at all, instead seeking a policy that reproduces or improves upon
existing demonstrations of expert behavior through supervised or
self-supervised objectives.
VPT~\cite{Baker2022VPT} recovers policies from unlabeled internet
video via an inverse-dynamics model.
GAIL~\cite{Ho2016GAIL} does employ RL, but only as an intermediate
mechanism: the core learning signal comes from a discriminator
trained to distinguish expert from generated behavior, with RL
serving to optimize the policy against this surrogate reward.
A separate line of work takes full RL training as the backbone and
injects demonstrations directly into it.
DQfD~\cite{Hester2018DQfD} permanently seeds the replay buffer with
expert transitions, conducts an offline warm-up phase, and augments
the loss with supervised classification and regularization terms;
DAPG~\cite{Rajeswaran2018DAPG} pre-trains the policy via behavioral
cloning and adds a geometrically decaying demonstration-weighted
auxiliary loss to the policy gradient.

All of the methods above operate on a fixed, pre-collected dataset of
demonstrations.
The proposed approach belongs to a different regime: prior knowledge
is available as a callable baseline policy that can be queried at
every time step during training, and the method acts at the level of
action selection without modifying the backbone RL algorithm's loss
functions or gradients.
Methods that share this live-policy regime are therefore the
appropriate point of comparison.
A widely adopted representative of this regime is residual
reinforcement learning (residual
RL)~\cite{johannink2018residualreinforcementlearningrobot,
  silver2019residualpolicylearning,
  alakuijala2021residualreinforcementlearningdemonstrations,
Sheng2024Trafficexperti}.

In residual RL, the control action is decomposed into the algebraic
sum of a pre-designed controller (i.e., the baseline policy) and a
learned residual policy trained via reinforcement learning.
Formally, given the current state $\State_t$, the executed action
$\Action_t$ is defined as
\begin{equation}
  \label{eq:residual_rl}
  \Action_t = \Action_t^{\baselinepolicy} + \Action_t^{\learningpolicy},
\end{equation}
where $\Action_t^{\baselinepolicy} \sim \baselinepolicy(\bullet \mid
\State_t)$ is sampled from the baseline policy $\baselinepolicy$,
which is generally a probability distribution over the action space
conditioned on the current state $\State_t$, and
$\Action_t^{\learningpolicy} \sim \learningpolicy[t](\bullet \mid
\State_t)$ is sampled from the learning (residual) policy
$\learningpolicy[t]$ with parameters $\theta_t$ at time step $t$.

Residual RL has demonstrated superior performance, particularly in
domains of complex robotic manipulation.
Experiments show that learning a residual on top of hand-designed or
model-predictive controllers yields substantial improvements in
challenging manipulation tasks involving partial observability,
sensor noise, model misspecification, and controller
miscalibration~\cite{silver2019residualpolicylearning}.

Complementary work applies residual RL to real-world robot
manipulation with contact and friction
dynamics~\cite{johannink2018residualreinforcementlearningrobot},
extends the paradigm to learning from demonstrations with
high-dimensional visual
inputs~\cite{alakuijala2021residualreinforcementlearningdemonstrations},
and demonstrates generalization to new challenges such as autonomous
vehicle control by combining prior knowledge with residual
learning~\cite{Sheng2024Trafficexperti}.

\begin{table*}[!t]
  \centering \scriptsize \captionsetup{width=0.88\textwidth}
  \caption{
    Functional comparison of related method families.
    The first four columns report required resources and deployed
    dependencies: ``Yes'' indicates that the resource is usually
    required, and a centered dot indicates that it is not.
    The final column reports theoretical support: a check mark
    indicates that formal closed-loop behavior analysis is provided.
  }
  \label{tab:related_work_comparison}
  \setlength{\tabcolsep}{5pt}
  \begin{tabular}{@{}lccccc@{}}
    \toprule & \multicolumn{4}{c}{Required resources and
    dependencies} & \multicolumn{1}{c}{Theoretical support}
    \\ \cmidrule(lr){2-5}\cmidrule(l){6-6} Method family &
    \makecell{Recorded\\external\\behavior} & \makecell{Callable
    external\\policy/controller\\during training} &
    \makecell{External policy/\\controller\\at deployment} &
    \makecell{Model or\\constraint oracle} & \makecell{Formal
      analysis of\\closed-loop
    behavior\\(stability/safety/\\reachability/constraints)}
    \\ \midrule Standard model-free
    RL~\cite{Haarnoja2018,Fujimoto2018,Schulman2017ProximalPolicy} &
    $\cdotp$ & $\cdotp$ & $\cdotp$ & $\cdotp$ & $\cdotp$
    \\ Demonstration-based
    imitation/RL~\cite{Baker2022VPT,Ho2016GAIL,Hester2018DQfD,Rajeswaran2018DAPG}
    & Yes & $\cdotp$ & $\cdotp$ & $\cdotp$ & $\cdotp$ \\ Safe,
    shielded, CBF/CLF, and switching
    control~\cite{Garcia2015SafeRLSurvey,Achiam2017CPO,Chow2018LyapunovSafeRL,Alshiekh2018Shielding,Ames2017CBF,Liberzon2003Switching,Branicky1998MultipleLyapunov}
    & $\cdotp$ & Yes & Yes & Yes & $\checkmark$ \\ Residual
    RL~\cite{johannink2018residualreinforcementlearningrobot,silver2019residualpolicylearning,alakuijala2021residualreinforcementlearningdemonstrations,Sheng2024Trafficexperti}
    & $\cdotp$ & Yes & Yes & $\cdotp$ & $\cdotp$ \\ Proposed method &
    $\cdotp$ & Yes & $\cdotp$ & $\cdotp$ & $\checkmark$ \\ \bottomrule
  \end{tabular}
\end{table*}

\subsubsection{Comparison to Residual Reinforcement
Learning}\label{sec:comparison_to_residual_rl}

Residual RL shares several key methodological similarities with the
proposed method, making it a fair and meaningful point of comparison:
\begin{itemize}
  \item it requires a comparable set of prerequisites.
    Specifically, it explicitly embeds a baseline policy into the
    training process;
  \item it can be instantiated on top of any widely used off-policy
    actor--critic reinforcement learning algorithm (similarly, the
      proposed method can be applied on top of vanilla RL algorithms
      such as SAC~\cite{Haarnoja2018},
    PPO~\cite{Schulman2017ProximalPolicy}, and TD3~\cite{Fujimoto2018});
  \item it has an implementation complexity comparable to the
    proposed method, as both approaches require only a minimal
    modification of the action computation while leaving the core
    training components of the underlying reinforcement learning
    algorithm (i.e., the critic and policy update routines) unchanged.
\end{itemize}

However, it is important to emphasize that residual RL is different
from the proposed method in several key ways:
\begin{itemize}
  \item during deployment, Residual RL continues to rely on the
    baseline policy, whereas in the proposed approach the baseline is
    completely removed through a deliberately designed arbitration
    module that transfers agency from the baseline to the learning policy;
  \item Residual RL computes actions as an algebraic sum of the
    baseline and residual policies:
    \begin{equation}
      \Action_t = \Action_t^{\baselinepolicy} +
      \Action_t^{\learningpolicy}, \text{ where }
      \Action_t^{\baselinepolicy} \sim \baselinepolicy(\bullet \mid
      \State_t),\; \Action_t^{\learningpolicy} \sim
      \learningpolicy(\bullet \mid \State_t).
    \end{equation}
    In contrast, the proposed method employs a switching mechanism in
    which the executed action is sampled from an algebraic mixture of
    two action distributions corresponding to the baseline and the
    learning policy:
    \[
      \Action_t \sim (1 - \alpha_t) \, \baselinepolicy(\bullet \mid
      \State_t) + \alpha_t \, \learningpolicy[t](\bullet \mid \State_t),
    \]

  \item as shown in \Cref{sec:experiments}, the proposed method
    preserves high goal-reaching rates throughout training, including
    the initial and intermediate stages of learning.
    In
    \Cref{sec:theoretical_analysis}, theoretical insight is provided
    into this phenomenon.
    In contrast, the canonical formulations of Residual RL and
    from-scratch reinforcement learning algorithms such as
    PPO~\cite{Schulman2017ProximalPolicy}, TD3~\cite{Fujimoto2018},
    and SAC~\cite{Haarnoja2018} were not originally designed to admit
    theoretical interpretations of this kind.
    Moreover, in practice, their performance during the initial and
    intermediate stages of training is often poor, as the underlying
    learning policies remain significantly undertrained in these stages.

\end{itemize}

\subsubsection{Safe, Shielded, and Switching Control Constitute a
Different Problem Setting} The proposed method differs fundamentally
from safe reinforcement learning, shielded RL, and classical
switching controllers.
Safe and shielded RL approaches introduce persistent safety
mechanisms---such as hard constraints, safety critics, or action
filters---that remain active throughout deployment (and/or training)
to enforce constraint satisfaction~\cite{Garcia2015SafeRLSurvey,
  Achiam2017CPO, Chow2018LyapunovSafeRL,
Bharadhwaj2021ConservativeSafetyCritics, Dalal2018SafeExploration,Ames2017CBF}.
In contrast, the proposed method does not impose safety constraints
nor guarantee constraint satisfaction; instead, it leverages a
functional baseline policy as a \emph{temporary} competence scaffold
during training.
Unlike shielded reinforcement learning, where unsafe actions are
deterministically overridden at runtime by a formally synthesized
\emph{shield}~\cite{Alshiekh2018Shielding}, the result of the
proposed approach is a standalone neural network, and no actions are
overridden during execution.
Finally, while switching and hybrid control rely on pre-defined
state-dependent or rule-based mode
selection~\cite{Liberzon2003Switching, Branicky1998MultipleLyapunov,
Morse1996SupervisoryPartI, Morse1997SupervisoryPartII}, the proposed
method performs learning-adaptive arbitration driven by critic
improvement and probabilistic scheduling, enabling a smooth,
data-driven handover of control authority rather than a fixed or
permanent switching structure.

\subsubsection{Comparative Summary}

The related approaches discussed above can be summarized by five
functional features that determine how external competence enters the
learning problem and whether it remains part of the final controller.
\Cref{tab:related_work_comparison} uses these features to place the
proposed method relative to the main families of related work.
The table separates required resources and dependencies from
theoretical support.
The columns have the following interpretation:
\begin{itemize}
  \item \emph{Recorded external behavior} marks methods that use
    demonstrations, traces, logs, or offline trajectories from another source.
    This feature captures approaches where prior competence is
    present only through data and the generating policy or controller
    cannot be queried during training.
  \item \emph{Callable external policy/controller during training}
    marks methods that can query an external policy, controller,
    shield, safety filter, or prior while training the learning policy.
  \item \emph{External policy/controller required at deployment}
    marks methods whose deployed system still needs that external
    component after training.
  \item \emph{Model or constraint oracle required} marks methods that
    rely on dynamics, a learned model, a constraint checker, a
    control barrier or Lyapunov function, a reachability oracle, or a
    comparable external formal object.
  \item \emph{Formal analysis of closed-loop behavior} marks method
    families that provide an analysis of stability, safety,
    constraint satisfaction, recoverability, reachability, or
    goal-reaching behavior under the induced policy or controller.
    Generic optimization or convergence analyses are not counted here
    unless they characterize the behavior of the controlled closed-loop system.
\end{itemize}

\section{Problem Statement and Notation}

\subsection{Markov Decision Process}

The environment is modeled as a Markov Decision Process (MDP) defined
by the tuple $(\states, \actions, \transit, \transit_0, \reward)$, where:
\begin{itemize}
  \item $\states$ is the state space (assumed to be a Banach space
    with norm $\norm{\bullet}$ for theoretical completeness);
  \item $\actions$ is the action space (assumed to be a compact
    topological space);
  \item $\transit(\bullet \mid \state, \action) : \states \times
    \states \times \actions \to \R_{\geq 0}$ is the transition
    probability density function for the sampling of the next state
    $\State'$ given the current state $\state$ and the action
    $\action$ \ie $\State' \sim \transit(\bullet \mid \state, \action)$.
    The theoretical analysis assumes the existence of an upper
    semi-continuous function $\bar{p} : \states \times \actions \to
    \mathbb{R}_{\geq 0}$ such that for any $\state \in \states$ and
    $\action \in \actions$, the next state $\State'$ sampled from
    $\transit(\bullet \mid \state, \action)$ satisfies
    $\PP{\norm{\State'} \leq \bar{p}(\state, \action)} = 1$,
    effectively bounding the system's one-step transition magnitude;
  \item $\transit_0(\bullet) : \states \to \R_{\geq 0}$ is the
    initial state distribution, where $\State_0 \sim \transit_0(\bullet)$;
  \item $\reward(\state, \action) : \states \times \actions \to
    \mathbb{R}$ is the reward function.
\end{itemize}

\subsection{Policy Definitions}

A distinction is drawn between two types of policies:

\begin{dfn}[Stationary Policy]
  A stationary policy $\policy(\bullet \mid \state) : \states \to
  \R_{\geq 0}$ maps each state $\state$ to a probability density over
  action space $\actions$, independent of time.
\end{dfn}

\begin{dfn}[Non-stationary Policy]
  A non-stationary policy $\policy(\bullet \mid \state, t) : \states
  \times \mathbb{N} \to \R_{\geq 0}$ maps each state-time pair
  $(\state, t)$ to a probability density over action space $\actions$.
\end{dfn}

Let $\policies_{\text{stat}}$ and $\policies_{\text{nstat}}$ denote
the sets of stationary and non-stationary policies, respectively.
Note that $\policies_{\text{stat}} \subseteq \policies_{\text{nstat}}$.

\subsection{Reinforcement Learning Objective}
\label{sec:rl-objective}

The standard RL objective seeks a policy $\policy \in
\policies_{\text{nstat}}$ that maximizes the expected discounted return:
\begin{equation}
  \label{eq:objective}
  J(\policy) = \mathbb{E} \left[ \sum_{t=0}^{\infty} \gamma^t
  \reward(\State_t, \Action_t) \right],
\end{equation}
where $\gamma \in (0, 1]$ is the discount factor, and the trajectory
$\{(\State_t, \Action_t)\}_{t=0}^{\infty}$ is generated according to:
\begin{equation}
  \State_0 \sim \transit_0(\bullet), \;
  \Action_t \sim \policy(\bullet \mid \State_t, t), \;
  \State_{t+1} \sim \transit(\bullet \mid \State_t, \Action_t).
\end{equation}
For a fixed stationary policy $\policy \in \policies_{\text{stat}}$,
the state-value function is defined by
\begin{equation}
  \label{eq:value-function}
  \Value^{\policy}(\state)
  =
  \mathbb{E}
  \left[
    \sum_{t=0}^{\infty}
    \gamma^t
    \reward(\State_t, \Action_t)
    \mid
    \State_0 = \state
  \right].
\end{equation}
Similarly, the corresponding action-value function (Q-function) is defined by
\begin{equation}
  \label{eq:action-value-function}
  \QValue^{\policy}(\state, \action)
  =
  \mathbb{E}
  \left[
    \sum_{t=0}^{\infty}
    \gamma^t
    \reward(\State_t, \Action_t)
    \mid
    \State_0 = \state,
    \Action_0 = \action
  \right].
\end{equation}
In both definitions, the trajectory evolves according to $\Action_t
\sim \policy(\bullet \mid \State_t)$ and $\State_{t+1} \sim
\transit(\bullet \mid \State_t, \Action_t)$ after the stated initial condition.

\subsection{Goal Set and Goal-Reaching Property}

For theoretical analysis and experimental validation, the concept of
a goal set is introduced and the goal-reaching property is formalized.

\begin{dfn}[Goal Set]
  The goal set $\G \subseteq \states$ is a closed subset of the state
  space representing the desired target region.
  The environment is considered solved when the system reaches $\G$
  and stays there.
\end{dfn}

The goal set need not always be supplied as an independent object.
It may be specified directly, for instance as a target region, or it
may be induced by the reward function.
When the reward is maximal at a target state or target region and
decreases away from it, a suitable neighborhood of the reward
maximizer defines the goal region.
In this sense, the reward function itself can determine what it means
for the environment to be solved, while the goal-reaching property
formalized in Definition~\ref{dfn:goal_reaching_property} specifies
whether a policy reliably reaches that region.

\begin{generalboundednessassumption}
  \label{ass:reward-upper-bound}
  When the goal region is understood as the region where the task is
  solved, it would be unnatural for rewards away from any
  neighborhood of that region to be unbounded from above.
  Such rewards would make the objective favor behavior unrelated to
  solving the task.
  Thus, in goal-reaching problems of this type, it is natural to
  assume that the reward is bounded from above: there exists
  $r^{\max} < \infty$ such that $\reward(\state,\action) \le
  r^{\max}$ for all $(\state,\action) \in \states \times \actions$.
\end{generalboundednessassumption}

To analyze policy performance with respect to goal-reaching, the
following notation is introduced:

\begin{itemize}
  \item $\State^{\policy}_t(\state_0)$ denotes the state at time $t$
    under policy $\policy \in \policies_{\text{nstat}}$ with initial
    state $\State_0 = \state_0 \in \states$;
  \item $\goaldist(\state) = \inf_{\state' \in \G} \norm{\state -
    \state'}$ denotes the distance from state $\state$ to the goal set $\G$.
\end{itemize}

\begin{dfn}[$\varepsilon$-improbable goal-reaching property]
  A policy $\policy \in \policies_{\text{nstat}}$ satisfies the
  $\varepsilon$-goal-reaching property if for any initial state
  $\state_0 \in \states$, $$ \PP{ \lim_{t \to \infty}
  \goaldist(\State^{\policy}_t(\state_0)) = 0 } \geq 1 - \varepsilon,
  $$ where $\varepsilon \in [0, 1)$ is the failure probability tolerance.
  \label{dfn:goal_reaching_property}
\end{dfn}

This formalization makes it possible to demonstrate that the approach
described in \Cref{sec:approach} preserves high goal-reaching rates
during the initial stages of learning.

\begin{remark}[Practical Interpretation]
  The $\varepsilon$-improbable goal-reaching property is not required
  for applying the proposed approach.
  Rather, it identifies a class of baseline policies for which the
  empirically observed goal-reaching behavior of the proposed method
  can be related to the theoretical analysis developed below.
\end{remark}

\subsection{Additional Notation}
\begin{itemize}
  \item Throughout the paper, $\mathbb{I}\{\bullet\}$ denotes
    indicator function \ie
    \[
      \mathbb{I}\{\bullet\} \coloneqq
      \begin{cases}
        1, & \text{if } \bullet \text{ holds}, \\
        0, & \text{otherwise.}
      \end{cases}
    \]

  \item For any vector $x=(x_1,...,x_n) \in \mathbb{R}^n$, $\|x\|_2$
    denotes the Euclidean norm, $\|x\| \coloneqq \left(\sum_{i=1}^n
    x_i^2\right)^{1/2}$.

  \item For any subsets $X_1, X_2 \subseteq \mathbb{R}^n$,
    $d(X_1,X_2)$ denotes the Euclidean distance between the subsets,
    $d(X_1,X_2) \coloneqq \inf_{x_1 \in X_1, x_2 \in X_2}\| x_1 - x_2\|_2$.

  \item In the algorithm listings only, $\gets$ denotes deterministic
    assignment and $\leftsquigarrow$ denotes stochastic assignment.
    In particular, $X \leftsquigarrow \mathrm{pdf}(\bullet)$ means
    that $X$ is sampled from the specified density $\mathrm{pdf}(\bullet)$.

\end{itemize}

\section{Approach}

\label{sec:approach}

The approach leverages an existing baseline policy to guide the
training process while gradually transitioning to an autonomous learning policy.
Upon training completion, the learning policy becomes a standalone
neural network that operates independently without relying on the
baseline policy.
This autonomy is achieved by integrating the baseline policy into the
learning process of the backbone RL algorithm used to train the learning policy.
The core principle is to make extensive use of the baseline policy
during early training stages and progressively transfer agency to the
learning policy.

The proposed approach is RL-algorithm-agnostic and any standard RL
algorithm can be used as its backbone, including
TD3~\cite{Fujimoto2018}, PPO~\cite{Schulman2017ProximalPolicy},
DDPG~\cite{Lillicrap2016Continuous}, and SAC~\cite{Haarnoja2018}.
The experiments discussed in \Cref{sec:experiments} employ TD3 and
SAC; the TD3 backbone is also consistent with the implementation
choices in residual RL~\cite{johannink2018residualreinforcementlearningrobot}.


The remainder of this section is organized into several subsections,
each introducing one of the main constituents from which the proposed
method, formally specified in \Cref{alg:enhance}, is built.
\Cref{alg:enhance-intra-episode} should not be interpreted as a
separate algorithm executed independently of the proposed method: it
is invoked inside \Cref{alg:enhance}, line~\ref{alg:rollout-call},
and forms part of that algorithm.
This decomposition is adopted for clarity and to reduce the cognitive
load on the reader.

\subsection{Baseline Policy}
The baseline policy $\baselinepolicy \in \policies_{\text{stat}}$ is
a functional but suboptimal policy that successfully completes the task.
While it is assumed that $\baselinepolicy$ satisfies the
$\varepsilon$-improbable goal-reaching property for theoretical
analysis, this is not a strict requirement in practice.
The property serves to formalize the notion of a ``functional'' or
``working'' policy.

\subsection{Learning Policy}
The learning policy $\learningpolicy$ is a policy parameterized by
weights $\theta$ (e.g. artificial neural network) that is trained in
order to outperform the baseline policy.

\subsection{Backbone RL Algorithm}
The backbone RL algorithm is the underlying reinforcement learning
method used to train $\learningpolicy$.
\Cref{alg:enhance} is flexible and permits the usage of an arbitrary
RL algorithm as the backbone.
An RL backbone is characterized by when it performs critic and policy
updates and by the corresponding update operators.
Accordingly, it is assumed that each backbone is specified by four routines:
\begin{itemize}
  \item \texttt{is\_critic\_update\_time},
    \texttt{perform\_critic\_update} determine whether a critic
    update should be performed at time step $t$ and implement that
    update; they are invoked in \Cref{alg:enhance-intra-episode},
    lines \ref{alg:intra-critic-1}-\ref{alg:intra-critic-2},
  \item \texttt{is\_policy\_update\_time},
    \texttt{perform\_policy\_update} determine whether a policy
    update should be performed at time step $t$ and implement that
    update; they are invoked in \Cref{alg:enhance-intra-episode},
    lines \ref{alg:intra-policy-1}-\ref{alg:intra-policy-2}.
\end{itemize}

For formal notation, the critic weights are denoted by $w$ and the
learning policy weights by $\theta$.
\Cref{alg:enhance-intra-episode} and \Cref{alg:enhance} are written
for a backbone with a $Q$-function-based critic.
The value-function version is obtained by a direct notational
replacement, as discussed in Remark~\ref{rem:value-function-critic-version}.
Accordingly, the critic approximation at training time step $t$ is
denoted by $Q^{w_t}(\state, \action)$, where $w_t \in \mathbb{W}$
represents the critic weights at time step $t$ and $\mathbb{W}$ is
the space of all critic weights.
The learning policy at time step $t$ is similarly denoted as
$\learningpolicy[t]$.

\begin{remark}[Upper-consistent critic]
  \label{rem:upper-consistent-critic}
  For a feasible and well-posed discounted-control problem, the
  objective functional in \eqref{eq:objective} has a maximal
  theoretically attainable value.
  Denote this scalar value by $Q^{\star}$.
  This value gives the theoretical upper scale for any policy-induced
  critic value.
  From a theoretical standpoint, it is therefore natural to
  parameterize the critic approximation class used during training so
  that every admissible critic is uniformly bounded from above by this scale.
  Since $Q^{\star}$ is generally unknown, a computable upper scale
  can be chosen from Assumption~\ref{ass:reward-upper-bound}:
  $Q^{\star} \le \bar Q := \sum_{k=0}^{\infty}\gamma^k r^{\max}$.
  This can be enforced, for example, by clipping critic outputs from
  above at $\bar Q$ or by using a final activation of the form $\bar Q - |x|$.
  The enforced critic then satisfies $Q^w(\state,\action) \le \bar Q$
  for all $(\state,\action,w)$.
\end{remark}

\subsection{Hyperparameters}
\label{sec:hyperparameters}

The algorithm makes use of four hyperparameters beyond those of the backbone:
\begin{itemize}
  \item $\relprob_0 \in [0, 1]$ (initial): base probability of
    selecting the learning policy over the baseline policy,
  \item $\lambda_{0} \in (0, 1]$ (initial): decay factor that reduces
    reliance on the baseline policy within episodes,
  \item $\nu > 0$: minimum improvement threshold required for
    updating $Q_t^{\dagger}$ to a new maximum, where $Q_t^{\dagger}$
    is the best critic value observed so far in the current episode,
  \item $T_{\text{tran}}$: the number of steps it takes to fully
    transfer agency to the learning policy,
\end{itemize}

\subsection{Arbitration Module}
\label{sec:arbitration-module}

The arbitration module is implemented in
\Cref{alg:enhance-intra-episode}, lines
\ref{alg:selection-1}--\ref{alg:arbitration-end}.

Let $t$ denote the current training time step and $\tau$ the time
when the current episode began.
Thus, $t - \tau + 1$ is the number of time steps that have passed
since the beginning of the current episode.
The maximum critic value observed in the current episode
$Q_t^{\dagger}$ is tracked online and initialized at the beginning of
every episode as $Q_{\tau}^{\dagger} = -\infty$.

At each time step $t$, the algorithm computes a candidate learning
action $\Action^{\learningpolicy}_t \leftsquigarrow
\learningpolicy[t](\bullet \mid \State_t)$ and a candidate baseline
action $\Action^{\baselinepolicy}_t \leftsquigarrow
\baselinepolicy(\bullet \mid \State_t)$ (lines
\ref{alg:selection-1}--\ref{alg:selection-1b}).
It selects between them by accepting the learning action whenever
\[
  Q^{w_t}(\State_t,\Action^{\learningpolicy}_t)\ge Q_t^{\dagger}+\nu
  \quad \text{or} \quad U_t \le \relprob\,\lambda^{t-\tau},
\]
and otherwise falling back to the baseline action (line
\ref{alg:selection-2}), where $U_t \sim \text{Uniform}[0,1]$.
The next state is then obtained as $\State_{t+1} \leftsquigarrow
\transit(\bullet \mid \State_t, \Action_t)$.

Finally, the episode-local benchmark is updated as follows (lines
\ref{alg:learning-3}--\ref{alg:learning-4}): if the learning action
exceeds the current benchmark by at least $\nu$, i.e.,
$Q^{w_t}(\State_t, \Action^{\learningpolicy}_t) \geq Q_t^{\dagger} +
\nu$, then $Q_{t+1}^{\dagger}$ is set to $Q^{w_t}(\State_t,
\Action^{\learningpolicy}_t)$; otherwise $Q_{t+1}^{\dagger} \gets
Q_t^{\dagger}$.

\paragraph{Intuition}\label{par:intuition}
The purpose of the arbitration module is to drive the trajectory
toward directions in which the critic estimate increases.
At time $t$, the quantity
$Q^{w_t}\!\left(\State_t,\Action^{\learningpolicy}_t\right)$ is
interpreted as an estimate of expected return for taking the learning
action in the current state.
Therefore, when
\[
  Q^{w_t}\!\left(\State_t,\Action^{\learningpolicy}_t\right) \ge
  Q_t^{\dagger} + \nu,
\]
the critic increase is treated as significant, which is interpreted
as a strong signal that the agent is moving in a promising direction;
the learning action in this case is accepted deterministically.
If the condition is violated, the action can still be accepted with
within-episode decaying probability $\relprob\lambda^{t-\tau}$.
This mechanism can be interpreted as controlled risk-taking, enabling
potential performance gains later in the episode.

This acceptance logic is directly analogous to simulated annealing
neighbor selection: improving candidates are accepted
deterministically, while non-improving candidates may still be
accepted with a probability that decreases as the temperature is
lowered (i.e., over the annealing schedule).

\begin{algorithm}[H]
  \caption{Intra-Episode Training}
  \label{alg:enhance-intra-episode}
  \begin{algorithmic}[1]
    \STATE \textbf{Input:} global time $t$, state $\State_t$, critic
    weights $w_t$, learning-policy weights $\theta_t$, baseline
    policy $\baselinepolicy$, schedule parameters $\relprob,\lambda$,
    threshold $\nu$.
    \STATE $\tau \gets t$, $Q^{\dagger}_t \gets -\infty$
    \REPEAT
    \STATE $\Action^{\learningpolicy}_t \leftsquigarrow
    \learningpolicy[t](\bullet \mid \State_t)$
    \alglinelabel{alg:selection-1}
    \STATE $\Action^{\baselinepolicy}_t \leftsquigarrow
    \baselinepolicy(\bullet \mid \State_t)$
    \alglinelabel{alg:selection-1b}
    \STATE $U_t \leftsquigarrow \mathrm{Uniform}[0, 1]$
    \IF{$Q^{w_t}(\State_t, \Action^{\learningpolicy}_t) \geq
      Q^{\dagger}_{t} + \nu$ or $U_t \leq \relprob \lambda^{t -
    \tau}$}
    \alglinelabel{alg:selection-2}
    \STATE $\Action_t \gets \Action^{\learningpolicy}_t$
    \ELSE
    \STATE $\Action_t \gets \Action^{\baselinepolicy}_t$
    \ENDIF
    \alglinelabel{alg:arbitration-end}
    \STATE $\State_{t+1} \leftsquigarrow \transit(\bullet \mid
    \State_t, \Action_t)$
    \STATE Store the transition for the backbone RL algorithm.
    \STATE $Q^{\dagger}_{t+1} \gets Q^{\dagger}_t$, $w_{t+1} \gets
    w_t$, $\theta_{t+1} \gets \theta_t$
    \IF{$Q^{w_t}(\State_t, \Action^{\learningpolicy}_t) \geq
    Q^{\dagger}_{t} + \nu$}
    \alglinelabel{alg:learning-3}
    \STATE $Q^{\dagger}_{t+1} \gets Q^{w_t}(\State_t,
    \Action^{\learningpolicy}_t)$
    \ENDIF
    \alglinelabel{alg:learning-4}
    \IF{\texttt{is\_critic\_update\_time}$(t)$}
    \alglinelabel{alg:intra-critic-1}
    \STATE $w_{t+1} \gets$ \texttt{perform\_critic\_update}($w_t$)
    \ENDIF
    \alglinelabel{alg:intra-critic-2}
    \IF{\texttt{is\_policy\_update\_time}$(t)$}
    \alglinelabel{alg:intra-policy-1}
    \STATE $\theta_{t+1} \gets$
    \texttt{perform\_policy\_update}($\theta_t$)
    \ENDIF
    \alglinelabel{alg:intra-policy-2}
    \STATE $t \gets t + 1$
    \UNTIL{\texttt{episode\_complete}$(t)$}
    \STATE \textbf{Return:} $t$, $\State_t$, $w_t$, $\theta_t$,
    episode length $T = t - \tau$.
  \end{algorithmic}
\end{algorithm}

\begin{algorithm}[H]
  \caption{Inter-Episode Training}
  \label{alg:enhance}
  \begin{algorithmic}[1]
    \STATE \textbf{Hyperparameters:} initial schedule
    $\relprob_0,\lambda_0$, threshold $\nu$, transition horizon
    $T_{\mathrm{tran}}$, and all hyperparameters of the backbone RL algorithm.
    \STATE \textbf{Initialize:} global time $t \gets 0$, schedule
    parameters $\relprob \gets \relprob_0$, $\lambda \gets \lambda_0$,
    initial critic weights $w_0$, initial learning-policy weights
    $\theta_0$, and $\State_0 \leftsquigarrow \transit_0(\bullet)$.
    \WHILE{training is not complete}
    \STATE Run one episode by calling the intra-episode rollout
    routine defined by \Cref{alg:enhance-intra-episode} (denoted
    \textsc{EpRollout} below):\\
    \alglinelabel{alg:rollout-call}
    $
    (t,\State_t,w_t,\theta_t,T) \gets
    \textsc{EpRollout}(t,\State_t,w_t,\theta_t,\baselinepolicy,\relprob,\lambda,\nu).
    $
    \STATE Reset the environment for the next episode:
    $\State_t \leftsquigarrow \transit_0(\bullet)$.
    \STATE Compute $\eta \gets
    \min\left(1,\frac{t-1}{T_{\mathrm{tran}}}\right)$ and
    $\chi_0 \gets \relprob_0\sum_{k = 0}^{T-1}\lambda_0^k$.
    \alglinelabel{alg:inter-episode-1}
    \alglinelabel{alg:chi0}
    \STATE Compute $\chi \gets \chi_0+\eta(T-\chi_0)$.
    \alglinelabel{alg:chi}
    \STATE Set $\relprob \gets \relprob_0+\eta(1-\relprob_0)$.
    \alglinelabel{alg:relprob-update}
    \STATE Set
    $\lambda \gets
    \underset{\lambda' \in [0,1]}{\mathrm{solve}}\!
    \left(\chi =
      \relprob\sum_{k = 0}^{T-1}(\lambda')^k
    \right)$.
    \alglinelabel{alg:inter-episode-2}
    \ENDWHILE
  \end{algorithmic}
\end{algorithm}

\begin{remark}
  \label{rem:value-function-critic-version}
  The listings in \Cref{alg:enhance-intra-episode,alg:enhance} are
  stated in the $Q$-function form because it is more general for the
  present purpose.
  Indeed, the analogous listing for a value-function critic is
  obtained by the literal replacement of every occurrence of
  $Q^{w_t}(\state,\action)$ in the listings with $v^{w_t}(\state)$.
\end{remark}

\subsection{Transition Scheduling Strategy}
\label{sec:transition-scheduling-strategy}

The transition scheduling strategy is implemented in
\Cref{alg:enhance}, lines \ref{alg:inter-episode-1}--\ref{alg:inter-episode-2}.

The hyperparameters $\relprob$ and $\lambda$ fundamentally represent
the degree of trust in the learning policy's capabilities.
When these values are relatively low, the mechanism exhibits limited
confidence in the learning policy and maintains strong reliance on
the baseline policy for guidance.
Conversely, as $\relprob$ and $\lambda$ approach unity, the system
demonstrates increasing trust in the learning policy, invoking the
baseline policy less frequently.
When both parameters reach their target values $\relprob = 1$ and
$\lambda = 1$, the probabilistic selection reduces to $U_t \leq 1$,
which is always satisfied, effectively recovering the pure backbone
RL algorithm with no baseline policy involvement.

The core idea underlying the entire scheme is to begin training with
relatively low values of $\relprob$ and $\lambda$, then
systematically increase them throughout the training process until
they reach unity.
This way agency is gradually transferred from the baseline to the
learning policy.
Initially, the algorithm exhibits strong reliance on the baseline
policy, but as training progresses, the learning policy enhances its
exploration capabilities and performance, warranting increased trust
and autonomy.


While one cannot directly control the exact number of learning policy
actions per episode, a lower bound on this quantity can be
systematically managed through the hyperparameter update strategy
outlined in lines \ref{alg:inter-episode-1}-\ref{alg:inter-episode-2}.
Consider the update strategy in more detail.

Let $T$ denote the episode length.
The probabilistic selection mechanism guarantees that in each
episode, the learning policy is selected at least $\sum_{t = \tau}^{T
+ \tau - 1} \mathbb{I}\{U_t \leq \relprob \lambda^{t - \tau}\}$ times.
Taking expectations, an adjustable lower bound is obtained: $
\mathbb{E}\left[\sum_{t = \tau}^{T + \tau - 1} \mathbb{I}\{\Action_t
= \Action^{\learningpolicy}_t\}\right] \geq \relprob \sum_{t =
0}^{T-1} \lambda^{t}.
$

The strategy behind these updates works by making this lower bound
grow linearly across training episodes:
\begin{enumerate}
  \item $\relprob$ is incrementally increased after each episode
    until it reaches unity (line \ref{alg:relprob-update}),
  \item the following equation is solved for $\lambda$ to equate the
    lower bound to the target value $\chi$ (lines
    \ref{alg:chi}--\ref{alg:inter-episode-2}): $\relprob \sum_{t =
    0}^{T-1} \lambda^{t} = \chi,$ where $\chi$ increases linearly
    from a small initial value to $T$ over the duration of training
    (lines \ref{alg:inter-episode-1}--\ref{alg:chi}).
\end{enumerate}

It is recommended to initialize $\relprob_0 \in [0.8, 1.0]$ and
choose $\lambda_0$ such that $\tfrac{\relprob_0\sum_{t = 0}^{T-1}
\lambda_0^{t}}{T} \approx 0.2$.
This choice ensures a strong reliance on the baseline policy at the
early stages of training, while preventing excessive dominance that
would otherwise cause the collected training data to consist almost
exclusively of transitions generated by the baseline policy.
An example of the resulting profile is illustrated in
\Cref{fig:schedule_parameters}: $\relprob$ increases linearly toward
one, whereas $\lambda$ follows the nonlinear profile induced by
solving the lower-bound equation in lines
\ref{alg:chi}--\ref{alg:inter-episode-2}.
\section{Theoretical Analysis}\label{sec:theoretical_analysis}

\subsection{Goal-Reaching Analysis}\label{sec:goal_reaching_analysis}

The analysis begins by clarifying the policy notation used throughout
this section.
For each training step $t$, let $\policy_t$ denote the non-stationary
executed policy induced by Algorithm~\ref{alg:enhance-intra-episode}.
That is, $\policy_t$ is the composite training-time policy that, at
each decision step, executes either the learning-policy action or the
baseline-policy action according to the arbitration rule of
Algorithm~\ref{alg:enhance-intra-episode}.

The notion of \emph{high goal-reaching rates} is interpreted as follows.
During the initial training phase, that is, for $t <
T_{\mathrm{tran}}$, the majority of episodes terminate by reaching
the goal set $\G$.

An important remark is required at this point.
Reaching the goal set $\G$ generally requires a sufficient number of
interaction steps.
Therefore, each episode must be long enough to allow the policy
$\policy_t$ to reach $\G$.
Consequently, any theoretical interpretation of goal-reaching
behavior necessarily presumes that the episode length is sufficiently
large relative to the time scale on which $\policy_t$ approaches the goal set.
For a practical heuristic for choosing such a horizon, see
\Cref{rem:practical-horizon-heuristic}.
This issue is addressed more explicitly in
\Cref{thm:uniform_goal_reaching}, which relies on a quantitative
characterization of how trajectories under the baseline policy
approach the goal set from arbitrary initial states.

\paragraph{Scope of the result}
To avoid unnecessary technical complications, a setting is considered
in which the policy $\policy_t$ generated by
Algorithm~\ref{alg:enhance-intra-episode} is allowed to take as many steps as
needed within a single episode to reach the goal set $\G$.
In \Cref{thm:goal-reaching}, a setting is considered, in which the
dynamics evolve within an episode of unbounded length; consequently,
termination or truncation does not occur once the policy $\policy_t$
reaches $\G$.
Equivalently, in the notation of \Cref{alg:enhance-intra-episode},
\texttt{episode\_complete} is taken to be identically false.
Thus, \Cref{thm:goal-reaching} below should not be read as any kind
of guarantee.
Instead, the theorem offers a theoretical \emph{interpretation}: it
identifies a mechanism by which the policy $\policy_t$ inherits the
asymptotic $\varepsilon$-improbable goal-reaching property of the
baseline $\baselinepolicy$, and thereby explains why, in practice,
high goal-reaching rates are observed for $t < T_{\mathrm{tran}}$.
This interpretive framing is maintained throughout the paper --- in
the statement and proof of the theorem, as well as in the surrounding
discussion.

\begin{theorem}
  \label{thm:goal-reaching}
  Consider an episode of unbounded length generated by $\policy_t$.
  Suppose that:
  \begin{enumerate}
    \item\label{item:goal-reaching-critic-consistency} The critic
      approximation $Q^w(\state,\action)$ is chosen consistently with
      \Cref{rem:upper-consistent-critic}.
    \item The decay factor $\lambda$ satisfies $\lambda < 1$ at the
      beginning of the episode (which holds for $t < T_{\mathrm{tran}}$).
    \item The baseline policy $\baselinepolicy$ satisfies the
      $\varepsilon$-improbable goal-reaching property.
  \end{enumerate}
  Then the policy $\policy_t$ generated by
  Algorithm~\ref{alg:enhance-intra-episode} also satisfies the
  $\varepsilon$-improbable goal-reaching property.
\end{theorem}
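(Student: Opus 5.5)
The plan is to show that, almost surely, only finitely many learning-policy actions are executed within the episode. After the last one, the trajectory is driven by $\baselinepolicy$ alone from a random but almost surely finite time on. The proof then combines the $\varepsilon$-improbable goal-reaching property of $\baselinepolicy$ with the strong Markov property at that time.

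\textbf{Step 1: finitely many critic-triggered acceptances.} Each time the condition $Q^{w_t}(\State_t,\Action^{\learningpolicy}_t)\ge Q_t^\dagger+\nu$ fires, $Q^\dagger$ increases by at least $\nu$. After the first trigger, $Q^\dagger$ is finite. By assumption~\ref{item:goal-reaching-critic-consistency} and \Cref{rem:upper-consistent-critic}, every critic value is at most $\bar Q$, so $Q^\dagger\le \bar Q$ throughout. Hence the number of triggers is at most $1+\lceil(\bar Q-Q^\dagger_{\text{first}})/\nu\rceil$, which is finite on every sample path. If the critic weights change within the episode, this still holds, because the bound $\bar Q$ is uniform in $w$.

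\textbf{Step 2: finitely many relaxation acceptances.} Since $\relprob$ and $\lambda<1$ are frozen during the episode,
\[
\sum_{t\ge\tau}\PP{U_t\le \relprob\lambda^{t-\tau}}\le \relprob/(1-\lambda)<\infty .
\]
By Borel--Cantelli, only finitely many relaxation events occur almost surely. Combining this with Step 1, the random time
\[
\sigma:=\sup\{t\ge\tau:\Action_t=\Action_t^{\learningpolicy}\}+1
\]
is finite almost surely.

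\textbf{Step 3: transfer of the baseline property (main obstacle).} The difficulty is that $\sigma$ is not a stopping time, since it depends on the future. I would instead work with the stopping times $\sigma_n:=$ the first time $\ge n$ (relative to $\tau$), and argue via
\[
\PP{\lim \goaldist=0}\ \ge\ \PP{\text{no learning action after } n}\cdot(\text{conditional baseline success}).
\]
This decomposition is only heuristic, because "no learning action after $n$" and the trajectory are coupled. The cleaner route is a coupling. Fix $n$ and run, from time $n$, two processes driven by the same randomness: the algorithm and pure $\baselinepolicy$ started from $\State_n$. They coincide on the event $E_n$ that no learning action is accepted after $n$. Pure $\baselinepolicy$ started from any (random) $\State_n$ succeeds with conditional probability at least $1-\varepsilon$, by the Markov property and because the property holds for every initial state. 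Therefore
\[
\PP{\lim\goaldist(\State_t)=0}\ge 1-\varepsilon-\PP{E_n^c}.
\]
By Step 2, $\PP{E_n^c}\to 0$ as $n\to\infty$, which gives the bound $\ge 1-\varepsilon$.

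The delicate point is constructing the coupling so that the baseline actions and transitions after $n$ are sampled independently of the past given $\State_n$. This requires using sampling noise that is fresh at each step, as in Algorithm~\ref{alg:enhance-intra-episode}, where $\Action^{\baselinepolicy}_t$ is drawn at every step regardless of which action is executed. On $E_n$ the executed actions are exactly these $\Action^{\baselinepolicy}_t$, so the two trajectories agree path by path.
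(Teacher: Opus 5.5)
Your proposal is correct and follows the paper's proof. The critic-triggered acceptances are bounded by roughly $(\bar Q - Q_1^{\dagger})/\nu + 1$ using the uniform upper bound on the critic. The relaxation acceptances are almost surely finite by Borel--Cantelli, since $\sum_t p^{\mathrm{rel}}\lambda^t = p^{\mathrm{rel}}/(1-\lambda)<\infty$. The baseline's property is then inherited.

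The only difference is the final step. The paper simply asserts the inheritance once it has an almost surely finite $t_0$ after which only baseline actions occur. Your coupling at a deterministic time $n$, combined with the Markov property at $n$ and $\mathbb{P}(E_n^c)\to 0$, is a valid way to make that inheritance rigorous, which is needed because $t_0$ is not a stopping time.
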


\paragraph{Proof idea}
The arbitration rule can select the learning policy only through two
mechanisms: the critic-improvement condition and the stochastic
relaxation condition.
The critic-improvement condition can fire only finitely many times,
because each such event increases the episode-local benchmark
$Q_t^\dagger$ by at least $\nu$, while the critic is uniformly bounded above.
The stochastic relaxation condition also fires only finitely many
times almost surely, since
$\sum_{t=0}^{\infty}\relprob\lambda^t<\infty$ when $\lambda<1$.
Hence, after a finite random time, the composite policy coincides
with the baseline policy.
Since the baseline satisfies the $\varepsilon$-improbable
goal-reaching property from arbitrary initial states, the same
property is inherited by $\policy_t$.

\begin{proof}
  See \ref{app:proof_basic_goal_reaching}.
\end{proof}

\begin{remark}
  The above result concerns the composite training-time policy $\policy_t$.
  The goal-reaching analysis is extended to the standalone
  neural-network policy $\learningpolicy$ after training in
  \ref{sec:transfer}, where \Cref{thm:transfer} provides an explicit
  bound on the degradation of the goal-reaching probability in terms
  of the expected distance between trajectories sampled by
  $\policy_t$ and $\learningpolicy$.
\end{remark}

\begin{remark}[Practical horizon heuristic]
  \label{rem:practical-horizon-heuristic}
  The proof also gives a simple way to estimate how much additional
  episode length is needed before the baseline-dominated behavior
  becomes visible.
  Even if the learning policy is poor, it is still activated by the
  stochastic relaxation branch in expectation at least
  \[
    \sum_{t=0}^{\infty}\relprob\lambda^t = \frac{\relprob}{1-\lambda}
  \]
  times.
  Thus, as a practical rule of thumb, the episode horizon should
  allow the baseline policy enough effective steps to reach the goal
  even after roughly this many learning-policy interventions.
  Equivalently, when the baseline needs about $T^{\mathrm{b}}$ steps
  to reach $\G$, one should choose the episode horizon noticeably
  larger than $T^{\mathrm{b}}+\relprob/(1-\lambda)$ during the early
  training phase.
  The next section provides a constructive uniform result that makes
  this type of horizon reasoning formal, but it requires additional
  assumptions; the estimate above is intended only as a quick heuristic.
\end{remark}

\begin{remark}[On critic consistency in practice]
  The critic-consistency requirement in
  item~\ref{item:goal-reaching-critic-consistency} of
  \Cref{thm:goal-reaching} is used to keep the theoretical interpretation clean.
  It should not be read as a fragile practical prerequisite for the
  goal-reaching phenomenon.
  In implementation, even if the critic is not explicitly constrained
  according to \Cref{rem:upper-consistent-critic}, critic estimates
  typically do not keep increasing indefinitely along an episode.
  Once the critic values saturate, or otherwise stop producing
  margin-improving updates, the critic-triggered preference for the
  learning policy is no longer activated at every step.
  The arbitration mechanism can then select the baseline policy again.
  When the baseline policy is capable of recovering the system, these
  interventions can correct poor learning-policy actions and drive
  the trajectory back toward the goal set.
  This is why goal-reaching behavior can still be observed
  empirically without explicitly enforcing the critic-consistency construction.
\end{remark}


\subsection{Uniform Goal-Reaching Property}
\label{sec:uniform_goal_reaching}

Control-theoretic analyses often pay particular attention to formal
uniform results.
To address this aspect, this section states a stronger uniform
version of the goal-reaching interpretation used in
\Cref{sec:theoretical_analysis}.
The result concerns reaching time, uniform overshoot boundedness, and
the distribution of the reaching time.
It is not used in the experiments and should be viewed as an
additional theoretical result included for completeness.
Nevertheless, the assumptions entering the result are also explained
from an implementation viewpoint, including how they can be made
feasible in practice and how they may be enforced when applying the theorem.

For the uniform result, the stochastic relaxation term in the
intra-episode mechanism of \Cref{alg:enhance-intra-episode} is
analyzed with an additional critic-monotonicity gate.
Concretely, the stochastic relaxation factor
$\lambda^{t-\tau}\relprob$ appearing in the algorithm listing is
replaced, in the analysis, by $\lambda^{t-\tau}\rho^{\mathrm{rel}}_t$, where
\begin{equation}
  \label{eq:relaxation_monotonicity_gate}
  \rho^{\mathrm{rel}}_t := \relprob \cdot
  \mathbb{I}\left\{Q^{w_t}(\State_t, \Action^{\learningpolicy}_t)
  \geq Q^{w_\tau}(\State_\tau, \Action^{\learningpolicy}_\tau)\right\}.
\end{equation}
Equivalently, the analyzed factor is $\lambda^{t-\tau}\relprob$ times
the indicator in \eqref{eq:relaxation_monotonicity_gate}.
Here, $\tau$ is the beginning of the current episode.
The coefficient $\rho^{\mathrm{rel}}_t$ coincides with $\relprob$
under normal conditions but becomes zero when the current episode
trajectory follows an unfavorable path \ie when the critic value
drops below its initial level.
The gate prevents the stochastic branch from activating the learning
policy in such cases and is used only to make the uniform argument explicit.

The exposition begins with a few auxiliary definitions.
Definition~\ref{dfn:uniform_eta_improbable} refines
Definition~\ref{dfn:goal_reaching_property} by requiring a
class-$\KL$ certificate (Definition~\ref{dfn:class_kl}) that bounds
the decay of the distance to the goal over time.
Definition~\ref{dfn:superlevel_set} fixes the terminology for superlevel sets.
Definition~\ref{dfn:function_with_bounded_superlevel_sets} then
introduces a concept that is uncommon in the literature---functions
with bounded superlevel sets.
Although this definition is not frequently encountered, it is in fact
closely related to the well-known concept of \emph{radial
unboundedness} (see Definition~\ref{dfn:radially_unbounded}).
Specifically, a function has bounded superlevel sets if and only if
its negative is radially unbounded, or if a simple logarithmic
transformation of it is radially unbounded (see
\Cref{prop:functions_with_bounded_superlevel_sets}).
\begin{dfn}[Class-$\KL$ function]
  \label{dfn:class_kl}
  A function $\beta:\R_{\ge 0}\times\R_{\ge 0}\to\R_{\ge 0}$ belongs
  to class $\KL$ if the following properties hold: for each fixed
  $t\ge 0$, the mapping $r\mapsto\beta(r,t)$ is continuous, satisfies
  $\beta(0,t)=0$, and is strictly increasing on $\R_{\ge 0}$; and for
  each fixed $r\ge 0$, the mapping $t\mapsto\beta(r,t)$ is
  nonincreasing and satisfies $\lim_{t\to\infty}\beta(r,t)=0$.
\end{dfn}

\begin{dfn}
  \label{dfn:uniform_eta_improbable}
  A policy $\pi \in \policies_{\text{nstat}}$ is said to satisfy the
  \emph{uniform $\varepsilon$-improbable goal-reaching property} if
  there exists a function $\beta \in \KL$ such that, for all initial
  states $\state_0 \in \states$, $$
  \PP{\goaldist(\State^{\pi}_t(\state_0)) \le
  \beta(\goaldist(\state_0), t) \text{ for all } t \ge 0} \ge 1 - \varepsilon.
  $$
\end{dfn}

\begin{dfn}[Superlevel set]
  \label{dfn:superlevel_set}
  Let $X$ be a set, let $f:X\to\R$, and let $a\in\R$.
  The $a$-\emph{superlevel set} of $f$ on $X$ is the set
  \[
    \{x\in X \mid f(x)\ge a\}.
  \]
\end{dfn}

\begin{dfn}
  \label{dfn:function_with_bounded_superlevel_sets}
  Let $X$ be a metric space and $f : X \to \R$.
  The function $f$ is said to have bounded superlevel sets if, for
  every $a \in f (X) = \{ f (x) \mid x \in X\}$, the $a$-superlevel
  set of $f$ on $X$ is bounded in $X$.
\end{dfn}

\begin{dfn}
  \label{dfn:radially_unbounded}
  A function $f:\R^n \to \R$ is \emph{radially unbounded} if
  $\lim_{\|x\|\to +\infty} f(x) = +\infty$.
\end{dfn}

\begin{proposition}[Equivalent characterizations of functions with
  bounded superlevel sets]
  \label{prop:functions_with_bounded_superlevel_sets}
  Let $f:\R^n \to \R$.
  The following statements are equivalent:
  \begin{enumerate}[label=(\roman*)]
    \item\label{item:bounded_superlevel_sets} $f(x)$ has bounded
      superlevel sets;
    \item\label{item:liminf} $\displaystyle \lim_{\lVert x \rVert \to
      \infty} f(x) = \inf_{x' \in \R^n} f(x') \in [-\infty,
      +\infty)$, and $\displaystyle f(x) > \inf_{x' \in \R^n} f(x')$
      for all $x \in \R^n$;
    \item\label{item:radially_unbounded} $-f(x)$ is radially
      unbounded, or there exists a finite $\inf_{x \in \R^n} f(x) =:
      f^{\inf} \in \R$ such that $-\log\!\big(f(x) - f^{\inf}\big)$
      is radially unbounded.
  \end{enumerate}
\end{proposition}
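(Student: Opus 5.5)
The plan is to prove the cycle \ref{item:bounded_superlevel_sets} $\Rightarrow$ \ref{item:liminf} $\Rightarrow$ \ref{item:bounded_superlevel_sets}, and then show \ref{item:liminf} $\Leftrightarrow$ \ref{item:radially_unbounded} by splitting on whether $m := \inf_{x\in\R^n} f(x)$ equals $-\infty$ or is finite. Throughout, $n\ge 1$, so $\R^n$ itself is unbounded. This is the one geometric fact that matters.

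\emph{\ref{item:bounded_superlevel_sets} $\Rightarrow$ \ref{item:liminf}.} First I show that the infimum is not attained. If $f(x_0)=m$ for some $x_0$, then $a=f(x_0)\in f(\R^n)$ is an admissible level. Its superlevel set is all of $\R^n$, which is unbounded, contradicting \ref{item:bounded_superlevel_sets}. Hence $f(x)>m$ for every $x$.

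For the limit, fix any $b>m$ (any real $b$ if $m=-\infty$). Since $m$ is the infimum, there is $x_1$ with $f(x_1)<b$. Set $a':=f(x_1)\in f(\R^n)$. By \ref{item:bounded_superlevel_sets}, the set $\{f\ge a'\}$ lies in some ball of radius $R$, so $f(x)<a'<b$ whenever $\|x\|>R$. Combined with $f\ge m$, this gives $\lim_{\|x\|\to\infty} f(x)=m$, in the extended sense when $m=-\infty$. The key trick, and the only delicate point, is that the definition quantifies only over levels $a\in f(\R^n)$, not arbitrary reals. This is exactly why I pick the level $a'$ as an attained value below $b$.

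\emph{\ref{item:liminf} $\Rightarrow$ \ref{item:bounded_superlevel_sets}.} Let $a\in f(\R^n)$. By the strict inequality in \ref{item:liminf}, $a>m$. The limit statement then gives $R$ with $f(x)<a$ for $\|x\|>R$, so $\{f\ge a\}$ is contained in the closed ball of radius $R$.

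\emph{\ref{item:liminf} $\Leftrightarrow$ \ref{item:radially_unbounded}.}

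Case $m=-\infty$. Here \ref{item:liminf} reads $f(x)\to-\infty$ as $\|x\|\to\infty$; the strict inequality $f(x)>-\infty$ is automatic for a real-valued $f$. This is precisely radial unboundedness of $-f$. Conversely, if $-f$ is radially unbounded, then $f\to-\infty$, so $m=-\infty$ and \ref{item:liminf} holds.

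Case $m\in\R$. The function $-\log(f-m)$ is a well-defined real function exactly when $f>m$ everywhere. It tends to $+\infty$ exactly when $f(x)-m\to 0^+$, i.e. when $f(x)\to m$. So the second alternative of \ref{item:radially_unbounded} is equivalent to \ref{item:liminf}. In this case the first alternative cannot hold, since $-f\to+\infty$ would force $m=-\infty$.

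For the direction \ref{item:radially_unbounded} $\Rightarrow$ \ref{item:liminf}, I read the second alternative as including well-definedness of $-\log(f-f^{\inf})$ on all of $\R^n$, as the statement implicitly does. Under that reading, it supplies the strict inequality $f>f^{\inf}$.

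The main obstacle is the bookkeeping in the first implication, namely choosing an attained level rather than an arbitrary one. The rest is unpacking definitions.
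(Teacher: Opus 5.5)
Your proposal is correct and follows the paper's proof essentially step for step. It proves (i)$\Leftrightarrow$(ii) by showing the infimum is not attained (otherwise its superlevel set would be all of $\R^n$) and by choosing an attained level $a\in f(\R^n)$ below any target bound, then (ii)$\Leftrightarrow$(iii) by splitting on whether $\inf f$ is finite and reading well-definedness of $-\log(f-f^{\inf})$ as supplying $f>f^{\inf}$. The only difference is cosmetic: you handle the cases $\inf f=-\infty$ and $\inf f\in\R$ of (i)$\Rightarrow$(ii) in a single argument, which is slightly cleaner than the paper's two-case treatment.
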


\begin{proof}
  See \ref{app:proof_bounded_superlevel_sets}.
\end{proof}

There are two technical assumptions needed to prove
\Cref{thm:uniform_goal_reaching}: Assumption~\ref{ass:valuebase} and
Assumption~\ref{ass:uniform_fallback}.
Assumption~\ref{ass:uniform_fallback} is simply a requirement that
the baseline policy satisfy Definition~\ref{dfn:uniform_eta_improbable}.
Assumption~\ref{ass:valuebase} requires additional discussion.

In particular, it is necessary to ensure that the critic function
$Q^w(\state, \action)$ remains uniformly bounded between two
continuous envelope functions for all states, actions, and critic
parameters: $$ \kappalow(\state) \le Q^w(\state, \action) \le
\kappahigh(\state), \quad \forall \state\in\states,
\action\in\actions, w\in\mathbb{W}.
$$ There are several practical ways to guarantee this property.
One option is to explicitly \emph{clip} the critic outputs, $$
Q^w(\state, \action) \leftarrow \operatorname{clip}\left(Q^w(\state,
\action), \kappalow(\state), \kappahigh(\state)\right), $$ which
enforces the desired bounds by construction.
Equivalently, the last critic layer can be parameterized through any
suitable bounded activation and then linearly transformed to the
interval $[\kappalow(\state),\kappahigh(\state)]$; examples include a
linearly transformed $\tanh$ activation or a bounded folding map such
as $x\mapsto \arccos(\cos x)$.
Alternatively, one can \emph{regularize the critic parameters} $w$
(for instance, using spectral normalization or weight decay) so that
the critic values cannot diverge outside the admissible range.
Both approaches make Assumption~\ref{ass:valuebase} easy to satisfy
in practice while preserving continuity and stability of the critic,
ensuring the boundedness conditions required for the proof below.

As for the practical choice of $\kappalow$ and $\kappahigh$,
\Cref{rem:upper-consistent-critic} provides the natural upper scale
$\bar Q := \sum_{k=0}^{\infty}\gamma^k r^{\max}$.
Thus, the upper envelope can be chosen as $\kappahigh(\state) \equiv \bar Q$.
If the \emph{rewards are bounded on both sides}, the lower envelope
can be chosen analogously.
When $r^{\min} \le r(\state, \action) \le r^{\max}$, one can simply
define $\kappalow(\state) = \sum_{k=0}^{\infty}\gamma^k r^{\min} -
\|\state - \state_{\G}\|^2$ and $\kappahigh(\state) \equiv \bar Q$,
where $\state_{\G}$ is the center of the goal set $\G$.
In this common case, Assumption~\ref{ass:valuebase} imposes no
additional restriction: it merely formalizes the natural boundedness
of the critic that follows from bounded rewards and discounting.

If the \emph{rewards are unbounded from below} (for example, for
negative quadratic costs), one can estimate empirical bounds by
sampling several random trajectories under the initial policy,
evaluating \(Q^{w_0}(\state_t, \action_t)\), and then defining
\(\kappalow\) using the empirical minimum.
These initial estimates can then be refined adaptively as training progresses.
Such constructions ensure that the critic operates within a
numerically stable and well-defined range, making
Assumption~\ref{ass:valuebase} both theoretically sound and
straightforward to implement in practice.

\begin{remark}[On critic envelope constraints]
  Assumption~\ref{ass:valuebase} should be understood as a design
  constraint on the critic class, not as a claim that an unconstrained
  neural-network critic automatically satisfies such bounds during
  training. The motivation for using state-dependent comparison
  envelopes is standard in Lyapunov analysis: continuous positive
  definite functions can be bounded from below and above by
  class-$\mathcal{K}$ functions, and if they are radially unbounded
  the bounds can be chosen from class-$\mathcal{K}_{\infty}$
  \cite[Lemma~4.3]{Khalil2002NonlinearSystems}.
  Thus, for an ideal value- or Lyapunov-like object, a role that the
  optimal critic can play under the usual control interpretation, the
  existence of comparison-function envelopes is a classical fact.
  In the present algorithm this fact is used as motivation for
  restricting the critic approximator by construction. For example,
  with rewards bounded below by \(r^{\min}\), one may use the same
  envelopes as above,
  \(\kappalow(\state)=\sum_{k=0}^{\infty}\gamma^k r^{\min}
  -\|\state-\state_{\G}\|^2\) and
  \(\kappahigh(\state)\equiv\bar Q\), and enforce
  \(\kappalow(\state)\le Q^w(\state,\action)\le
  \kappahigh(\state)\) through output clipping or a bounded final
  layer. The bounded-superlevel-set property required of
  \(\kappalow\) then follows from the quadratic term in
  \(\|\state-\state_{\G}\|^2\). Consequently,
  Assumption~\ref{ass:valuebase} is a feasible critic-design
  requirement rather than an automatic property of an arbitrary trained
  neural critic.
\end{remark}

\begin{theorem}
  \label{thm:uniform_goal_reaching}
  Consider the intra-episode process generated by
  \Cref{alg:enhance-intra-episode}, with the stochastic relaxation
  term interpreted through the critic-monotonicity gate
  $\rho^{\mathrm{rel}}_t$ in \eqref{eq:relaxation_monotonicity_gate}.
  Without loss of generality set $\tau = 0$.
  Let this process be initialized at $\state_0$ with
  $\goaldist(\state_0) \le d^{\circ}$, where $d^{\circ} \in \R_{>0}$
  is arbitrary.

  \noindent Assume that:
  \begin{enumerate}[label=(\subscript{\mathrm{A}}{{\arabic*}}),
      leftmargin=2.5em]
    \item\label{ass:valuebase} The function $Q^w(\state, \action)$
      admits lower and upper bounding continuous functions
      $\kappalow(\state)$ and $\kappahigh(\state)$, respectively,
      where $\kappalow(\state)$ also has bounded superlevel sets:
      \[ \kappalow(\state) \leq Q^{w}(\state,
        \action) \leq \kappahigh(\state).
      \]
      for all $\state \in \states$, $\action \in \actions$ and $w \in
      \mathbb{W}$.

    \item\label{ass:uniform_fallback} $\baselinepolicy$ satisfies the
      \emph{uniform $\eps$-improbable goal-reaching property} with
      certificate $\beta \in \KL$ (see Definition
      \ref{dfn:uniform_eta_improbable}).
  \end{enumerate}

  \noindent Then the following claims hold:
  \begin{enumerate}[label=(\subscript{\mathrm{C}}{{\arabic*}}),
      leftmargin=2.5em]

    \item\label{claim:uniform_overshoot_bound}
      \textit{(\(\eps\)-improbable uniform overshoot boundedness)}
      There exists $\delta(d^{\circ})\in\R_{>0}$ such that
      \[
        \PP{\goaldist\left(\State_t^{\policy_t}(\state_0)\right) \le
        \delta(d^{\circ})\, \text{ for all } t \ge 0} \ge 1-\eps.
      \]

    \item\label{claim:uniform_reaching_time}
      \textit{(\(\eps\)-improbable uniform reaching time)} For each
      $d^{*} \in (0, d^{\circ})$, there is an almost surely finite
      random time $T(d^{\circ},d^{*})\in\R_{\geq0}$ such that
      \[
        \PP{\goaldist\left(\State_t^{\policy_t}(\state_0)\right) \le
        d^{*} \text{ for all } t \ge T(d^{\circ},d^{*})} \ge\ 1-\eps.
      \]

    \item\label{claim:distribution_reaching_time}
      \textit{(Reaching time distribution)} There exist natural
      numbers $\tau(d^{\circ})$ and
      $\tau^{\mathrm{b}}(d^{\circ},d^{*})$ such that for all $t \in
      \mathbb{Z}_{\ge 0}$,
      \begin{equation*}
        \PP{T(d^{\circ},d^{*}) \le (\tau(d^{\circ}) + t)\,
        \tau^{\mathrm{b}}(d^{\circ},d^{*})}
        =
        \prod_{k=t}^{\infty}\left(1 - \lambda^k \relprob\right).
      \end{equation*}
      where $\prod_{k=t}^{\infty}\left(1 - \lambda^k \relprob\right)
      \ra 1$ as $t \ra \infty$.
  \end{enumerate}
  Furthermore, $\tau(d^{\circ})$,
  $\tau^{\mathrm{b}}(d^{\circ},d^{*})$, and $\delta(d^{\circ})$ are
  given explicitly in \eqref{eq:uniform_tau},
  \eqref{eq:uniform_tau_b}, and \eqref{eq:uniform_delta}, respectively.
\end{theorem}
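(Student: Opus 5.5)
The argument follows the proof idea of \Cref{thm:goal-reaching}, made quantitative with the envelopes of \ref{ass:valuebase} and the certificate $\beta$ of \ref{ass:uniform_fallback}. It has three steps: confine the trajectory to a bounded set whenever the learning policy acts, count the learning-policy interventions, and then let the baseline take over.

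\textbf{Step 1: confinement while the learning policy acts.} The learning action is executed at time $t$ only through one of two branches.
\begin{itemize}
\item[(a)] The critic trigger fires, $Q^{w_t}(\State_t,\Action^{\learningpolicy}_t)\ge Q^\dagger_t+\nu$.
\item[(b)] The gated relaxation fires. By \eqref{eq:relaxation_monotonicity_gate} this requires $Q^{w_t}(\State_t,\Action^{\learningpolicy}_t)\ge Q^{w_0}(\state_0,\Action^{\learningpolicy}_0)$.
\end{itemize}
At $t=0$ the benchmark is $-\infty$, so the trigger fires and $Q^\dagger_1$ equals the initial critic value; thereafter $Q^\dagger_t$ never decreases. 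So in either branch the executed learning action satisfies
\[
Q^{w_t}(\State_t,\Action^{\learningpolicy}_t)\ \ge\ Q^{w_0}(\state_0,\Action_0^{\learningpolicy})\ \ge\ \kappalow(\state_0)\ \ge\ \underline\kappa(d^\circ):=\inf\{\kappalow(\state): \goaldist(\state)\le d^\circ\}.
\]
Combined with $Q\le\kappahigh(\State_t)$, this gives $\kappahigh(\State_t)\ge\underline\kappa(d^\circ)$.

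To bound the state, I will instead use the lower envelope's superlevel set $S(d^\circ):=\{\state:\kappalow(\state)\ge \underline\kappa(d^\circ)-c\}$ for a suitable constant $c$. It is bounded by \ref{ass:valuebase} and \Cref{prop:functions_with_bounded_superlevel_sets}, so every state at which the learning policy acts lies in it. This requires the envelopes to be comparable in a way I must check, e.g.\ $\kappahigh-\kappalow$ bounded on the relevant region. If that fails, I would fall back on one-step growth: the bounding function $\bar p$ is upper semicontinuous on the compact $\actions$, so $\sup_{\state\in K,\action}\bar p(\state,\action)$ is finite for each bounded $K$. Either way the post-action state stays in a bounded set. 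Call the resulting radius $R(d^\circ)$.

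\textbf{Step 2: finitely many interventions.} The trigger fires at most
\[
\tau(d^\circ):=\Big\lceil\big(\sup_{S}\kappahigh-\underline\kappa(d^\circ)\big)/\nu\Big\rceil+1
\]
times. This is finite because $\kappahigh$ is continuous on the bounded set $S$, which is closed, so I need finite-dimensionality or local compactness for the supremum; I will assume $\states=\R^n$ as in \Cref{prop:functions_with_bounded_superlevel_sets}. This gives \eqref{eq:uniform_tau}. The relaxation branch fires at step $k$ with probability at most $\relprob\lambda^k$, independently through $U_k$. Hence the probability that no relaxation event occurs after step $t$ is at least $\prod_{k\ge t}(1-\relprob\lambda^k)$, which tends to $1$ because $\sum_k\lambda^k<\infty$. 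This yields the product in \ref{claim:distribution_reaching_time}.

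\textbf{Step 3: baseline takeover and the three claims.} Between interventions the baseline acts, with the Markov restart property at each switch. Along baseline segments started at $\goaldist\le R$, \ref{ass:uniform_fallback} gives $\goaldist\le\beta(R,0)$ with probability at least $1-\eps$. Set $\delta(d^\circ):=\max\{\beta(R(d^\circ),0),R(d^\circ)\}$; this is \eqref{eq:uniform_delta} and gives \ref{claim:uniform_overshoot_bound}.

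For \ref{claim:uniform_reaching_time}, choose $\tau^{\mathrm b}(d^\circ,d^*)$ as the least $s$ with $\beta(\delta(d^\circ),s)\le d^*$, which is \eqref{eq:uniform_tau_b}. After the last intervention, the baseline drives $\goaldist$ below $d^*$ within $\tau^{\mathrm b}$ steps and keeps it there. Bounding the total time by (number of interventions)$\times\tau^{\mathrm b}$ gives $T\le(\tau(d^\circ)+t)\tau^{\mathrm b}$ on the event that no relaxation fires after step $t$, which is \ref{claim:distribution_reaching_time}.

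\textbf{Main obstacles.} The first is that the $\eps$-failure events of successive baseline segments could compound. I would handle this by conditioning on the last restart only, since earlier segments are covered by the boundedness argument of Step 1 rather than by $\beta$. The second is the stated equality in \ref{claim:distribution_reaching_time}. My argument naturally produces only the inequality $\ge$, so I will read the equality as defining $T$ through the last relaxation index.
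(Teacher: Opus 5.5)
Your architecture matches the paper's. You confine the states where $\learningpolicy$ acts to a bounded superlevel set, bound trigger firings by $\tau(d^\circ)$ and relaxation firings by $T^{\mathrm{rel}}$, and hand over to the baseline certificate $\beta$. You then get \ref{claim:distribution_reaching_time} by \emph{defining} $T(d^\circ,d^*):=(\tau(d^\circ)+T^{\mathrm{rel}})\,\tau^{\mathrm b}(d^\circ,d^*)$, with $T^{\mathrm{rel}}$ read off the ungated events $\{U_k<\lambda^k\relprob\}$; the equality is then just $\PP{T^{\mathrm{rel}}\le t}=\prod_{k\ge t}(1-\lambda^k\relprob)$, exactly as in the paper.

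The genuine gap is Step~1, and neither of your remedies closes it under \ref{ass:valuebase}. You are right that both branches give only $\kappahigh(\State_t)\ge\underline{\kappa}(d^\circ)$.
\begin{itemize}
\item The comparability route needs $\kappahigh-\kappalow\le c$ globally; restricting it to ``the relevant region'' is circular. It also fails for the envelopes the paper itself recommends ($\kappahigh\equiv\bar Q$, $\kappalow=\mathrm{const}-\|\state-\state_{\G}\|^2$).
\item The $\bar p$ fallback is circular too. It bounds $\State_{t+1}$ only once $\State_t$ is known to lie in a bounded $K$. Iterated over interventions, the radius grows with their number, which is random and unbounded, so no deterministic $R(d^\circ)$ comes out.
\end{itemize}
The paper's Lemma~\ref{prop:disallowed_superset} asserts $\State_t\in\mathbb V(d^\circ)$ at exactly this point. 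It passes from $Q^{w_t}(\State_t,\Action^{\learningpolicy}_t)\ge v^{\min}(d^\circ)$ to $\kappalow(\State_t)\ge v^{\min}(d^\circ)$, which is the inference you rightly distrust. Assumption \ref{ass:valuebase} alone cannot repair it. Take $\states=\R$, $\G=\{0\}$, $\State_{t+1}=\State_t+\Action_t$ with $|\Action_t|\le L$, $\kappalow(\state)=C-\state^2$, $\kappahigh\equiv\bar Q\ge C$, and a critic identically equal to $\bar Q$. Then the gate \eqref{eq:relaxation_monotonicity_gate} is always open and $\mathbb V(d^\circ)=[-d^\circ,d^\circ]$. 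For the deterministic baseline $\Action_t=-\mathrm{sign}(\State_t)\min(|\State_t|/2,L)$ you have $\eps=0$ and may take $\beta(r,0)=r$, so \eqref{eq:uniform_delta} gives $\delta(d^\circ)=d^\circ+L$. An outward-pushing learning policy acts at $t=0$ (trigger) and, with probability $\relprob\lambda$, at $t=1$ (relaxation). It reaches $d^\circ+2L$ from $\state_0=d^\circ$, violating \ref{claim:uniform_overshoot_bound}. So an extra hypothesis is needed. One option is boundedness of $\{\state:\kappahigh(\state)\ge v^{\min}(d^\circ)\}$, used in place of $\mathbb V(d^\circ)$; in $\R^n$ it is compact, so $v^{\max}$ and $d^{\bar{\mathrm p}}$ are finite. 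Another is your $\kappahigh\le\kappalow+c$ with $v^{\min}(d^\circ)-c>\inf\kappalow$.

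Step~3 has two further problems that your remarks do not resolve, and that the paper's concluding argument does not address either.

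First, conditioning on the last restart cannot deliver \ref{claim:uniform_overshoot_bound}. That claim covers all $t$, including the baseline stretches between earlier interventions. Step~1 controls only the states where $\learningpolicy$ acts and their immediate successors, not those stretches. Each stretch respects $\beta$ only with probability at least $1-\eps$, with fresh randomness after every intervention. A learning policy that repeatedly restarts the baseline near the edge of the admissible region can therefore make failure probabilities accumulate; a union bound gives only $1-(\text{number of stretches})\,\eps$. Moreover, the last intervention time is not a stopping time, so the Markov restart cannot be invoked there without further argument.

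Second, ``number of interventions $\times\,\tau^{\mathrm b}$'' says nothing about \emph{when} the last intervention happens. Relaxation firings occur before $T^{\mathrm{rel}}$, but trigger firings are bounded only in number. The critic along a slow baseline trajectory, or after an in-episode critic update, can exceed $Q^\dagger_t+\nu$ at an arbitrarily late step. The resulting learning action can then leave the $d^*$-neighborhood after time $(\tau(d^\circ)+T^{\mathrm{rel}})\tau^{\mathrm b}(d^\circ,d^*)$. So \ref{claim:uniform_reaching_time} with the product formula \eqref{eq:uniform_reaching_time_formula}, which the paper also uses, needs either a bound on trigger times or a different definition of $T$. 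A different $T$ would no longer have the distribution stated in \ref{claim:distribution_reaching_time}.
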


\paragraph{Interpretation}
\Cref{claim:uniform_overshoot_bound} is the bounded-excursion claim.
Starting from any state with initial distance at most $d^{\circ}$,
the closed-loop trajectory remains inside a finite operational tube
of radius $\delta(d^{\circ})$ with probability at least $1-\eps$.
This claim does not say that the trajectory is already near the goal.
It says that the learning-policy insertions cannot make the process
escape arbitrarily far before recovery by the baseline policy.

\Cref{claim:uniform_reaching_time} is the eventual-settling claim.
For any requested final tolerance $d^{*}<d^{\circ}$, there is a
finite random time $T(d^{\circ},d^{*})$ after which the trajectory
stays inside the $d^{*}$-neighborhood of the goal, again with
probability at least $1-\eps$.
Thus, temporary deviations are allowed before $T(d^{\circ},d^{*})$,
but after that time the closed-loop behavior is interpreted as
settled near the goal.

\Cref{claim:distribution_reaching_time} explains how large this
settling time can be.
It states that the time needed to reach and remain inside the
$d^{*}$-neighborhood can be computed constructively from the
quantities appearing in the theorem.
In practice, this reaching time is a random variable because the
relaxation mechanism is stochastic.
However, its distribution is explicit: $\PP{T(d^{\circ},d^{*}) \le
(\tau(d^{\circ})+t)\tau^{\mathrm{b}}(d^{\circ},d^{*})} =
\prod_{k=t}^{\infty}(1-\lambda^k\relprob)$.
Thus, for any selected probability level, this formula specifies a
time by which the closed-loop process has reached the
$d^{*}$-neighborhood of the goal set.

\paragraph{Proof idea}
The proof is constructive.
First define
\begin{align}
  v^{\min}(d^{\circ})
  &:= \min\{\kappalow(\state):
  \goaldist(\state)\le d^{\circ}\},
  \label{eq:uniform_vmin}\\
  \mathbb{V}(d^{\circ})
  &:= \{\state:\kappalow(\state)\ge v^{\min}(d^{\circ})\},
  \label{eq:uniform_superlevel_set}\\
  v^{\max}(d^{\circ})
  &:= \max\{\kappahigh(\state):
  \state\in\mathbb{V}(d^{\circ})\},
  \label{eq:uniform_vmax}\\
  \tau(d^{\circ})
  &:= 1+\left\lfloor
  \frac{v^{\max}(d^{\circ})-v^{\min}(d^{\circ})}{\nu}
  \right\rfloor .
  \label{eq:uniform_tau}
\end{align}
The gated relaxation ensures that the learning policy is invoked only
inside $\mathbb{V}(d^{\circ})$.
Consequently, the critic-improvement branch can fire at most
$\tau(d^{\circ})$ times.
Next set
\begin{align}
  d^{\bar{\mathrm{p}}}(d^{\circ})
  &:= \sup\{\bar{\transit}(\state,\action):
  \state\in\mathbb{V}(d^{\circ}),\,\action\in\actions\},
  \label{eq:uniform_transition_bound}\\
  d^{\max}(d^{\circ})
  &:= \max\{d^{\circ},d^{\bar{\mathrm{p}}}(d^{\circ})\},
  \label{eq:uniform_dmax}\\
  \delta(d^{\circ})
  &:= \beta(d^{\max}(d^{\circ}),0).
  \label{eq:uniform_delta}
\end{align}
Finally, using a standard decomposition
$\beta(d,t)\le\kappa(d)\xi(e^{-t})$ with $\kappa,\xi\in\K_{\infty}$, define
\begin{equation}
  \label{eq:uniform_tau_b}
  \tau^{\mathrm{b}}(d^{\circ},d^{*})
  :=
  \max\left\{
    1,\,
    \left\lceil
    -\log\left(\xi^{-1}\left(
        \frac{d^{*}}{\kappa(d^{\max}(d^{\circ}))}
    \right)\right)\right\rceil
  \right\}.
\end{equation}
The random relaxation activations are controlled by
\begin{align}
  T^{\mathrm{rel}}
  &:= \inf\{t\ge0:U_k\ge\lambda^k\relprob
  \text{ for all }k\ge t\},
  \label{eq:uniform_relaxation_time}\\
  \PP{T^{\mathrm{rel}}\le t}
  &= \prod_{k=t}^{\infty}(1-\lambda^k\relprob).
  \label{eq:uniform_relaxation_distribution}
\end{align}
Thus the learning policy can interrupt the baseline no more than
$\tau(d^{\circ})+T^{\mathrm{rel}}$ times, and each interruption
returns the state to a region from which
$\tau^{\mathrm{b}}(d^{\circ},d^{*})$ baseline steps suffice, with
probability at least $1-\eps$, to remain within $d^{*}$ of the goal.
This gives
\begin{equation}
  \label{eq:uniform_reaching_time_formula}
  T(d^{\circ},d^{*})
  =
  \bigl(\tau(d^{\circ})+T^{\mathrm{rel}}\bigr)
  \tau^{\mathrm{b}}(d^{\circ},d^{*}),
\end{equation}
from which the overshoot, reaching-time, and distributional claims follow.

\begin{proof}
  See \ref{app:proof_uniform_goal_reaching}.
\end{proof}

\paragraph{Extension to value functions}
The preceding analysis applies equally when the critic is a
state--value function.
To make this explicit, replace $Q^{w}(\state,\action)$ by
$\Value^{w}(\state)$ throughout the gated intra-episode process
considered in \Cref{thm:uniform_goal_reaching} and track the
reference level $\Value^{\dagger}_{t}$ instead of $Q^{\dagger}_{t}$.
Equivalently, this is the special case in which the critic is
independent of the action argument.
The structural assumptions translate directly
(Assumption~\ref{ass:valuebase} $\to$
  Assumption~\ref{ass:valuebase_value},
  Assumption~\ref{ass:uniform_fallback} $\to$
Assumption~\ref{ass:uniform_fallback_value}), either by viewing
$\Value^{w}$ as an aggregation of $Q^{w}$ or by postulating envelopes
for $\Value^{w}$ itself.
With this substitution, the definitions in the proof of
\Cref{thm:uniform_goal_reaching} remain unchanged, so the overshoot
bound, the uniform reaching time, and the distributional statement
follow by the same argument.
These facts are collected in \Cref{thm:uniform_goal_reaching_value} below.

\begin{corollary}
  \label{thm:uniform_goal_reaching_value}
  Consider the value-critic analogue of the gated intra-episode
  process in \Cref{thm:uniform_goal_reaching}, initialized at
  $\state_0$ with $\goaldist(\state_0) \le d^{\circ}$, where
  $d^{\circ} \in \R_{>0}$ is arbitrary.

  \noindent Assume that:
  \begin{enumerate}[label=(\subscript{\mathrm{A}^{v}}{{\arabic*}}),
      leftmargin=2.5em]
    \item\label{ass:valuebase_value} The function
      $\Value^{w}(\state)$ admits lower and upper bounding continuous
      functions $\kappalow(\state)$ and $\kappahigh(\state)$,
      respectively, where $\kappalow(\state)$ also has bounded
      superlevel sets: $$
      \kappalow(\state) \leq \Value^{w}(\state) \leq \kappahigh(\state).
      $$ for all $\state \in \states$ and $w \in \mathbb{W}$.

    \item\label{ass:uniform_fallback_value} $\baselinepolicy$
      satisfies the \emph{uniform $\eps$-improbable goal-reaching
      property} with certificate $\beta \in \KL$ (see Definition
      \ref{dfn:uniform_eta_improbable}).
  \end{enumerate}

  \noindent Then the following claims hold:
  \begin{enumerate}[label=(\subscript{\mathrm{C}}{{\arabic*}}),
      leftmargin=2.5em]

    \item\label{claim:uniform_overshoot_bound_value}
      \textit{(\(\eps\)-improbable uniform overshoot boundedness)}
      There exists $\delta(d^{\circ})\in\R_{>0}$ such that
      \[
        \PP{\goaldist\left(\State_t^{\policy_t}(\state_0)\right) \le
        \delta(d^{\circ})\, \text{ for all } t \ge 0} \ge 1-\eps.
      \]

    \item\label{claim:uniform_reaching_time_value}
      \textit{(\(\eps\)-improbable uniform reaching time)} For each
      $d^{*} \in (0, d^{\circ})$, there is an almost surely finite
      random time $T(d^{\circ},d^{*})\in\R_{\geq0}$ such that
      \[
        \PP{\goaldist\left(\State_t^{\policy_t}(\state_0)\right) \le
        d^{*} \text{ for all } t \ge T(d^{\circ},d^{*})} \ge\ 1-\eps.
      \]

    \item\label{claim:distribution_reaching_time_value}
      \textit{(Reaching time distribution)} There exist natural
      numbers $\tau(d^{\circ})$ and
      $\tau^{\mathrm{b}}(d^{\circ},d^{*})$ such that for all $t \in
      \mathbb{Z}_{\ge 0}$,
      \begin{equation*}
        \PP{T(d^{\circ},d^{*}) \le (\tau(d^{\circ}) + t)\,
        \tau^{\mathrm{b}}(d^{\circ},d^{*})}
        =
        \prod_{k=t}^{\infty}\left(1 - \lambda^k \relprob\right).
      \end{equation*}
      where $\prod_{k=t}^{\infty}\left(1 - \lambda^k \relprob\right)
      \ra 1$ as $t \ra \infty$.
  \end{enumerate}
  Furthermore, $\tau(d^{\circ})$,
  $\tau^{\mathrm{b}}(d^{\circ},d^{*})$, and $\delta(d^{\circ})$ are
  given explicitly by the same formulas \eqref{eq:uniform_tau},
  \eqref{eq:uniform_tau_b}, and \eqref{eq:uniform_delta}, respectively.
\end{corollary}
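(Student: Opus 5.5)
The plan is to reduce the corollary to \Cref{thm:uniform_goal_reaching} by a direct embedding rather than re-running the argument. Given a value critic $\Value^{w}$ on the parameter space $\mathbb{W}$, define the action-independent critic $Q^{w}(\state,\action) := \Value^{w}(\state)$ for all $\action\in\actions$.

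Under this identification, every quantity in the gated value-critic process coincides pathwise with the corresponding quantity in the $Q$-critic process of \Cref{thm:uniform_goal_reaching}:
\begin{itemize}
  \item the comparison $Q^{w_t}(\State_t,\Action^{\learningpolicy}_t)\ge Q^\dagger_t+\nu$ becomes $\Value^{w_t}(\State_t)\ge \Value^\dagger_t+\nu$;
  \item the benchmark update and its reset $Q^\dagger_\tau=-\infty$ are unchanged;
  \item the monotonicity gate \eqref{eq:relaxation_monotonicity_gate} becomes $\relprob\,\mathbb{I}\{\Value^{w_t}(\State_t)\ge \Value^{w_\tau}(\State_\tau)\}$.
\end{itemize}
This is exactly the literal replacement described in \Cref{rem:value-function-critic-version}. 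On a common probability space carrying the same samples $U_t$, $\Action^{\learningpolicy}_t$, $\Action^{\baselinepolicy}_t$ and transitions, the two processes therefore generate identical trajectories $\State_t^{\policy_t}(\state_0)$.

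The next step is to check the hypotheses. Assumption~\ref{ass:valuebase_value} gives $\kappalow(\state)\le \Value^{w}(\state)\le\kappahigh(\state)$ for all $\state$ and $w$. Hence $\kappalow(\state)\le Q^{w}(\state,\action)\le \kappahigh(\state)$ for all $\state$, $\action$ and $w$, with the same continuous envelopes, and $\kappalow$ still has bounded superlevel sets. So Assumption~\ref{ass:valuebase} holds. Assumption~\ref{ass:uniform_fallback_value} is verbatim Assumption~\ref{ass:uniform_fallback}.

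Applying \Cref{thm:uniform_goal_reaching} then yields claims (C1)--(C3) for the embedded process, and hence for the value-critic process. The constants are given by the same formulas \eqref{eq:uniform_tau}, \eqref{eq:uniform_tau_b} and \eqref{eq:uniform_delta}. These formulas depend only on $\kappalow$, $\kappahigh$, $\nu$, $\bar{\transit}$, $\beta$, $d^{\circ}$ and $d^{*}$, and never on the action argument of the critic.

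The main point requiring care is that the $Q$-version allows the critic's action argument to be the sampled learning action, with the bounds required uniformly in $\action$. I therefore need to verify that nothing in the proof of \Cref{thm:uniform_goal_reaching} uses genuine dependence of $Q^w$ on $\action$. Examples would be measurability of $\action\mapsto Q^w(\state,\action)$, or any argument exploiting variation across actions. Since constant-in-$\action$ functions trivially satisfy any measurability or continuity requirement, I expect the check to go through. If some step in the appendix proof did rely on action variation, the fallback is to re-read that step with $\Value^w$ in place of $Q^w$, which the proof-idea sketch suggests is purely notational. That sketch uses only the level-set bound on the number of critic firings, the gate confining the learning policy to $\mathbb{V}(d^{\circ})$, and the relaxation-time distribution \eqref{eq:uniform_relaxation_distribution}.
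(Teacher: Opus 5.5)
Your proposal is correct and is essentially the paper's argument: the paper substitutes $Q^{w}(\state,\action)\to\Value^{w}(\state)$ and $Q^{\dagger}_t\to\Value^{\dagger}_t$ throughout the proof of \Cref{thm:uniform_goal_reaching}, changes the gate accordingly, and its main text notes that the corollary is the special case of an action-independent critic, which is exactly your embedding $Q^{w}(\state,\action):=\Value^{w}(\state)$. Because you check the hypotheses for the embedded critic and then apply \Cref{thm:uniform_goal_reaching} as a black box, the audit of the appendix proof in your last paragraph is unnecessary: the theorem is stated for any critic satisfying Assumption~\ref{ass:valuebase}, and an action-independent critic is a legitimate instance.
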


\paragraph{Proof idea}
The proof repeats the constructive argument for
\Cref{thm:uniform_goal_reaching} after the literal substitutions
$Q^{w}(\state,\action)\mapsto\Value^{w}(\state)$ and
$Q^{\dagger}_{t}\mapsto\Value^{\dagger}_{t}$.
The gated relaxation coefficient is replaced by
\[
  \relprob\, \mathbb{I}\{\Value^{w_t}(\State_t)\ge \Value^{w_0}(\State_0)\},
\]
and the same formulas for $\tau(d^{\circ})$,
$\tau^{\mathrm{b}}(d^{\circ},d^{*})$, $\delta(d^{\circ})$, and
$T(d^{\circ},d^{*})$ in
\eqref{eq:uniform_vmin}--\eqref{eq:uniform_reaching_time_formula} apply.

\begin{proof}
  See \ref{app:proof_uniform_value_corollary}.
\end{proof}

\subsection{Goal-Reaching Transfer to the Neural Policy}
\label{sec:transfer}

A key feature of the policy produced by~\Cref{alg:enhance} is its
reliance on the baseline fallback mechanism during training,
particularly in the initial phase where $\lambda < 1$.
As established in
\Cref{thm:goal-reaching,thm:uniform_goal_reaching,thm:uniform_goal_reaching_value},
this mechanism admits a theoretical interpretation that explains the
high goal-reaching rates observed in practice, consistent with the
experimental findings reported in~\Cref{sec:experiments}.
A natural question arises: once the learned neural-network
policy~$\learningpolicy$ is deployed without baseline
intervention---that is, with the schedule parameters set to their
terminal values ($\lambda = 1$, $\relprob = 1$), so that every action
is sampled directly from~$\learningpolicy(\bullet \mid
\State_t)$---what can be said about the ability to reach the goal set~$\G$?
This section provides a formal answer to this question.

\subsubsection{Deployment setting and trajectory distance}
\label{sec:transfer_setup}

Suppose that at training step~$t$ the learning policy has
weights~$\theta_t = \theta$.
Both deployment regimes below are obtained from
\Cref{alg:enhance-intra-episode} by initializing a new episode at
deployment time and using the rollout routine only for action
selection: learning is disabled, so \texttt{is\_critic\_update\_time}
and \texttt{is\_policy\_update\_time} always return \texttt{False}.
Thus the critic weights~$w$, learning-policy weights~$\theta$, and
the schedule parameters used in action selection remain fixed
throughout deployment.
\begin{enumerate}[label=(\subscript{\mathrm{R}}{{\arabic*}}),
    ref=\ensuremath{\mathrm{R}_{\arabic*}}, leftmargin=2.5em]
  \item\label{regime:fallback} \textbf{Policy~$\policy_t$ with
    baseline fallback} --- the action-selection mechanism of
    \Cref{alg:enhance} is retained with the current training-time
    schedule values~$\lambda < 1$ and~$\relprob \le 1$.
    The baseline policy~$\baselinepolicy$ may therefore still be
    invoked as a fallback whenever the critic condition is not met.
  \item\label{regime:neural} \textbf{Pure neural-network mode} ---
    all schedule parameters are set to their terminal values
    ($\lambda = 1$, $\relprob = 1$), so every action is sampled
    directly from~$\learningpolicy(\bullet
    \mid \State_t)$ without any baseline fallback.
\end{enumerate}

The two deployment regimes are formalized as the operation of two
distinct policies.
\ref{regime:fallback} corresponds to the policy~$\policy_t$ generated
by the frozen fallback instance of \Cref{alg:enhance}, while
\ref{regime:neural} corresponds to the pure neural-network instance
of the same algorithm, denoted by~$\learningpolicy$.
Both policies are initialized at a common state~$\state_0 \in
\states$ and produce respective random state sequences
$(\State_0^{\policy_t}, \State_1^{\policy_t}, \ldots) \in \states^T$
and $(\State_0^{\learningpolicy}, \State_1^{\learningpolicy}, \ldots)
\in \states^T$, with $\State_0^{\learningpolicy} =
\State_0^{\policy_t} = \state_0$, where $T \in \N \cup \{\infty\}$ is
the episode length.

\subsubsection{Expected trajectory distance}
\label{sec:trajectory_distance}

Hereafter, the Python-like shorthand~$X_{0:T}$ denotes the sequence
$(X_0, X_1, \dotsc, X_{T-1})$ (or $(X_0, X_1, \dotsc)$ when $T =
\infty$), so that, for instance, the expressions above contract
to~$\State_{0:T}^{\policy_t}$ and~$\State_{0:T}^{\learningpolicy}$.

The strategy for answering the central question---what can be said
about the ability of the learned policy~$\learningpolicy$ to reach
the goal set~$\G$ when deployed on its own---is straightforward: if
the trajectories $\State_{0:T}^{\learningpolicy}$
and~$\State_{0:T}^{\policy_t}$ are close to each other, then one can
estimate how close $\State_{0:T}^{\learningpolicy}$ is to the goal set.
The first step is therefore to quantify the distance between the
trajectories $\State_{0:T}^{\learningpolicy}$ and $\State_{0:T}^{\policy_t}$.
A natural single-rollout distance is
$d_{\states^T}\left(\State_{0:T}^{\learningpolicy},
\State_{0:T}^{\policy_t}\right) := \sup_{0 \le t < T}
\|\State_t^{\learningpolicy} - \State_t^{\policy_t}\|$, the
worst-case state deviation over the episode.
Since both state sequences are random processes, this supremum is
itself a random variable.
To obtain a scalar distance, the two processes are realized on a
fixed common probability space and the expected sup-distance is used.
For simplicity, this paper uses the independent-rollout convention:
conditional on the common initial state, the rollout under
$\learningpolicy$ and the rollout under $\policy_t$ are generated independently.

\begin{dfn}[Independent-rollout trajectory distance]
  \label{dfn:trajectory-distance}
  Let $\policy^1, \policy^2$ be two (generally non-stationary)
  policies acting from a common initial state $\state_0 \in \states$
  under the transition kernel~$\transit$.
  Fix a horizon $T \in \N \cup \{\infty\}$.
  For each $k \in \{1, 2\}$, denote by $\State_{0:T}^{\policy^k}$ the
  random trajectory generated by~$\policy^k$.
  The two trajectories are sampled independently.
  The trajectory distance is
  \begin{equation}\label{eq:trajectory-distance}
    \DT(\policy^1, \policy^2)
    :=
    \E{\sup_{0 \le t < T}
    \|\State_t^{\policy^1} - \State_t^{\policy^2}\|}.
  \end{equation}
\end{dfn}

The independence convention is not essential for \Cref{thm:transfer}.
The proof of \Cref{thm:transfer} only uses a fixed joint realization
of the two state sequences and an upper bound on the corresponding
expected sup-distance.
Thus, the same theorem would remain valid under a shared-noise
realization, an independent-noise realization, or any other
prescribed joint construction, provided $\DT$ is replaced by the
expected trajectory distance under that construction.
The independence convention is used only to make the quantity in
\eqref{eq:trajectory-distance} unambiguous and easy to estimate.

A concrete and readily implementable estimate can be obtained as follows.
Consider $N$ independent pairs of trajectories, indexed by $n = 1, \ldots, N$.
Within each pair, one trajectory is generated by the learning
policy~$\learningpolicy$ (with $\lambda = 1$, $\relprob = 1$) and the
other by the deployment policy~$\policy_t$ from \ref{regime:fallback}
(with the actual schedule parameters), using independent simulator
randomness after the shared initial state.
Denote the resulting state sequences
by~$\State_{0:T}^{\learningpolicy,\, n}$
and~$\State_{0:T}^{\policy_t,\, n}$, respectively.
The expected trajectory distance in~\eqref{eq:trajectory-distance}
can then be estimated by the Monte Carlo average
\begin{equation}\label{eq:mc_distance_estimate}
  \widehat{D}^T_N
  \;:=\;
  \frac{1}{N} \sum_{n=1}^{N}
  \sup_{0 \le t < T}
  \left\|\State_t^{\learningpolicy,\, n}
  - \State_t^{\policy_t,\, n}\right\|,
\end{equation}
which is a consistent estimator of $\DT(\learningpolicy, \policy_t)$
under the independent-rollout convention.

\begin{figure}[h]
  \centering
  \includegraphics[width=0.6\linewidth]{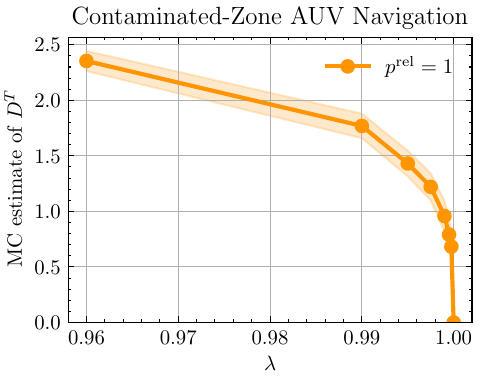}
  \caption{
    Monte Carlo estimate of the trajectory distance in
    \eqref{eq:mc_distance_estimate} for the Contaminated-Zone AUV
    Navigation environment.
    The estimate is computed from independent rollout pairs of the
    saved learning-policy checkpoint at the baseline-removal point
    and the corresponding policy with baseline fallback, using
    $p^{\mathrm{rel}}=1$ and varying the within-episode decay factor $\lambda$.
  }
  \label{fig:drone_checkpoint_lambda_distance}
\end{figure}

\subsubsection{Goal-reaching transfer theorem}
\label{sec:transfer_theorem}

\begin{theorem}[Goal-reaching transfer via trajectory distance]
  \label{thm:transfer}
  Fix a horizon $T \in \N \cup \{\infty\}$ and an initial state
  $\state_0 \in \states$.
  Consider regimes~\ref{regime:fallback} and~\ref{regime:neural},
  both executed from~$\state_0$.
  Define the \emph{settling time}
  \begin{equation}\label{eq:settling}
    \tau_T(d,\, \policy)
    \;:=\;
    \min\left\{0 \leq t' < T:
      \sup_{t' \leq k < T}
    \goaldist\left(\State_k^{\policy}\right) \le d\right\}
  \end{equation}
  (set $\tau_T = \infty$ if no such $t'$ exists).
  \noindent Assume that:
  \begin{enumerate}[label=(\subscript{\mathrm{A}}{{\arabic*}}),
      leftmargin=2.5em]
    \item\label{ass:transfer_settling} $\PP{\tau_T(d^*,\, \policy_t) < T}
      \ge 1 - \eps$ \ie the deployment policy~$\policy_t$ produced by
      regime~\ref{regime:fallback} settles within the
      $d^*$-neighborhood of~$\G$ and remains there until the end of
      the episode with probability at least $1 - \eps$;

    \item\label{ass:transfer_distance} $\DT(\learningpolicy, \policy_t)
      \le \Delta_T$ for some $\Delta_T \ge 0$.
  \end{enumerate}

  \noindent Then the following claim holds:
  \begin{enumerate}[label=(\subscript{\mathrm{C}}{{\arabic*}}),
      leftmargin=2.5em]
    \item\label{claim:transfer}
      \textit{(Settling transfer to regime~\ref{regime:neural})} For
      every $\delta > 0$:
      \begin{equation}\label{eq:transfer_bound}
        \PP{\tau_T(d^* + \delta,\, \learningpolicy) < T}
        \;\ge\;
        1 - \eps - \frac{\Delta_T}{\delta}\,.
      \end{equation}
  \end{enumerate}
\end{theorem}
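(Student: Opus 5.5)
The argument couples the two trajectories on a common probability space, uses the triangle inequality for $\goaldist$, and then applies Markov's inequality to the sup-distance. Realize $\State_{0:T}^{\policy_t}$ and $\State_{0:T}^{\learningpolicy}$ jointly, independently after the shared initial state $\state_0$ as in Definition~\ref{dfn:trajectory-distance}. Define the random variable
\[
  S := \sup_{0 \le k < T}\|\State_k^{\learningpolicy} - \State_k^{\policy_t}\|,
\]
so that assumption~\ref{ass:transfer_distance} reads $\E{S} \le \Delta_T$. Introduce two events:
\[
  E_1 := \{\tau_T(d^*,\policy_t) < T\}, \qquad E_2 := \{S \le \delta\}.
\]
The goal is to show $E_1 \cap E_2 \subseteq \{\tau_T(d^*+\delta,\learningpolicy) < T\}$.

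The deterministic inclusion rests on the fact that $\goaldist$ is $1$-Lipschitz. For any $x,y$ and any $g \in \G$ we have $\|x-g\| \le \|x-y\| + \|y-g\|$, and taking the infimum over $g$ gives $\goaldist(x) \le \goaldist(y) + \|x-y\|$. Now take an outcome in $E_1 \cap E_2$ and set $t' := \tau_T(d^*,\policy_t) < T$. For every $t' \le k < T$,
\[
  \goaldist(\State_k^{\learningpolicy}) \le \goaldist(\State_k^{\policy_t}) + S \le d^* + \delta.
\]
Hence $\sup_{t' \le k < T}\goaldist(\State_k^{\learningpolicy}) \le d^*+\delta$, and therefore $\tau_T(d^*+\delta,\learningpolicy) \le t' < T$. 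For $T=\infty$ the same pointwise argument applies verbatim, since $S$ bounds every term uniformly.

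The probability bound follows by the union bound:
\[
  \PP{\tau_T(d^*+\delta,\learningpolicy) < T} \ge \PP{E_1 \cap E_2} \ge 1 - \PP{E_1^c} - \PP{E_2^c}.
\]
Assumption~\ref{ass:transfer_settling} gives $\PP{E_1^c} \le \eps$. Markov's inequality applied to the nonnegative variable $S$ gives $\PP{E_2^c} = \PP{S > \delta} \le \E{S}/\delta \le \Delta_T/\delta$. Combining these yields \eqref{eq:transfer_bound}.

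The only delicate step is measure-theoretic. One must check that $S$ and the settling events are measurable on the coupling space, which holds because these are countable suprema and countable unions of measurable events. One must also note that the marginal laws of each trajectory under the coupling coincide with their laws under the separate regimes, so that assumption~\ref{ass:transfer_settling}, a statement about $\policy_t$ alone, transfers unchanged to the joint space. This second point is also why the argument works for any joint construction, as remarked after Definition~\ref{dfn:trajectory-distance}.
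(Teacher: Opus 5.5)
Your proposal is correct and follows the paper's proof essentially step for step: the same independent coupling, the same inclusion of $\{\tau_T(d^*,\pi_t)<T\}\cap\{S\le\delta\}$ into the settling event for the neural policy via the triangle inequality for $d_{\mathbb{G}}$, and the same Markov bound on the tube-exit probability. Your union bound $\mathbb{P}[E_1\cap E_2]\ge 1-\mathbb{P}[E_1^c]-\mathbb{P}[E_2^c]$ is only a rephrasing of the paper's decomposition $\mathbb{P}[A\cap E_\delta]\ge\mathbb{P}[A]-\mathbb{P}[E_\delta^c]$, and your measurability and marginal-law remarks spell out points the paper leaves implicit.
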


\paragraph{Proof idea}
Realize the trajectory of the fallback deployment policy~$\policy_t$
and the trajectory of the pure neural-network
policy~$\learningpolicy$ independently, as in
Definition~\ref{dfn:trajectory-distance}.
Their expected sup-norm distance is then controlled by
$\DT(\learningpolicy,\policy_t)$.
On the event that the two trajectories remain within distance
$\delta$ of each other over the horizon~$T$, the triangle inequality
implies that every trajectory of~$\policy_t$ that settles inside the
$d^*$-neighborhood of~$\G$ yields a trajectory of~$\learningpolicy$
that settles inside the $(d^*+\delta)$-neighborhood.
The only loss comes from the event that the two jointly realized
trajectories separate by more than~$\delta$; Markov's inequality
bounds this probability by $\DT(\learningpolicy,\policy_t)/\delta$,
and hence by $\Delta_T/\delta$.
Combining this loss with the assumed settling probability $1-\eps$
for~$\policy_t$ gives~\eqref{eq:transfer_bound}.

\begin{proof}
  See \ref{app:proof_transfer}.
\end{proof}

\begin{remark}[Optimal slack]
  \label{rem:optimal_delta}
  The bound~\eqref{eq:transfer_bound} involves a trade-off between
  the neighborhood size $d^* + \delta$ and the probability loss
  $\Delta_T/\delta$.
  Setting $\delta = \sqrt{\Delta_T}$ yields
  \[
    \PP{\tau_T(d^* + \sqrt{\Delta_T},\, \learningpolicy) < T} \;\ge\;
    1 - \eps - \sqrt{\Delta_T}\,,
  \]
  which converges to $1 - \eps$ as $\Delta_T \to 0$.
\end{remark}

\section{Experiments}
\label{sec:experiments}

The proposed approach is validated in two environments:
\begin{enumerate}
  \item \textbf{Contaminated-Zone Autonomous Underwater Vehicle (AUV)
    Navigation} (\Cref{sec:underwaterdrone}): the agent must reach a
    target location while avoiding contaminated areas.
  \item \textbf{Treasure-Collecting Robot} (\Cref{sec:kin-robot}):
    the agent must reach a target region while collecting a
    high-reward treasure along the way.
\end{enumerate}
The choice of environments is deliberate.
To demonstrate the key advantages of the proposed method, the
evaluation task should have a goal set with a clear operational
meaning and a baseline policy that can be specified explicitly.
Many standard Gymnasium benchmarks are less suitable for this
purpose: in complex locomotion tasks, such as humanoid control,
constructing an explicit baseline policy is itself a difficult
problem; in simpler benchmarks, such as Pendulum, Inverted Pendulum,
or Mountain Car, the goal-reaching structure is too elementary to
stress the proposed mechanism; and in many generic benchmarks the
goal set is not as explicitly interpretable.
These two environments were chosen to expose the intended setting:
goal regions are explicit, baseline policies are available in closed
form, and the reward structure still leaves room for nontrivial
learning behavior.
They are also specified transparently: the dynamics, reward
functions, goal regions, and baseline policies are given explicitly.
This makes it possible to introduce demonstration metrics, such as
goal-reaching and constraint-avoidance rates, that directly reflect
the behavior targeted by the method.
Such metrics are complementary to cumulative reward, whose numerical
value is often difficult to interpret on its own.

The section first describes the two evaluation environments and their
associated tasks.
It then reports the performance of the standard TD3 and SAC methods,
along with their residual RL variants and the proposed approach
instantiated on top of the same TD3 and SAC backbones, as summarized
in \Cref{sec:exp_results}.

\subsection{Contaminated-Zone AUV Navigation}\label{sec:underwaterdrone}

\begin{figure}[!b]
  \centering
  \begin{minipage}{0.48\linewidth}
    \centering
    \includegraphics[width=\linewidth]{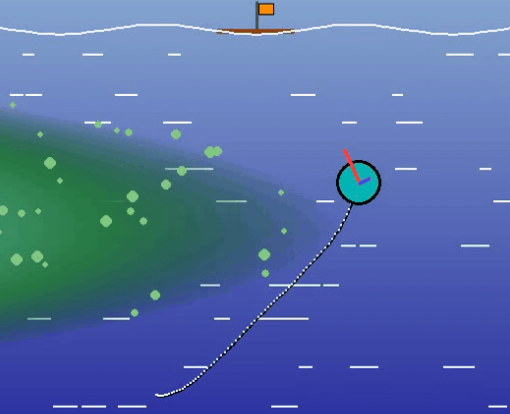}
  \end{minipage}\hfill
  \begin{minipage}{0.48\linewidth}
    \centering
    \includegraphics[width=\linewidth]{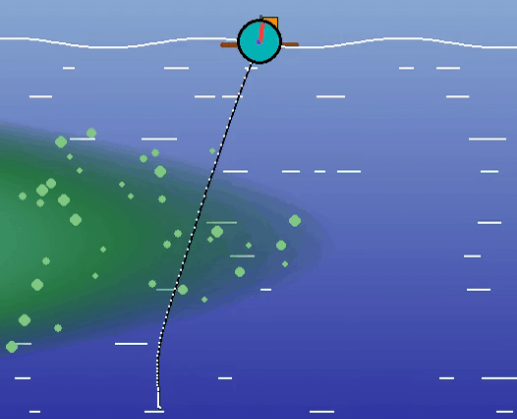}
  \end{minipage}
  \caption{
    Visualizations for the Contaminated-Zone AUV Navigation task.
    Left: environment layout.
    The contaminated region is highlighted in green, and the goal is
    located at the water surface at $(0, 4)$.
    Right: sample trajectory of the baseline policy.
    The baseline reaches the goal region but passes through the
    contaminated area, illustrating its suboptimal behavior.
  }
  \label{fig:underwater_drone_visuals}
\end{figure}

\paragraph{Environment Description} The Contaminated-Zone AUV
Navigation task shown in the left panel of
\Cref{fig:underwater_drone_visuals} is governed by the following
system of differential equations:
\begin{equation}
  \begin{aligned}
    \dot{x}   & = v_x \quad\dot{y} = v_y  \quad \dot{\vartheta} =
    \omega \quad \dot{\omega} = \tfrac{d_{\text{offset}} \cdot
    F_{\text{lat}}}{I} \\
    \dot{v}_x & = \tfrac{F_{\text{long}} \cos \vartheta -
    F_{\text{lat}} \sin \vartheta - C_d \|v\|v_x}{m}
    \\
    \dot{v}_y & = \tfrac{F_{\text{long}} \sin \vartheta +
    F_{\text{lat}} \cos \vartheta - C_d \|v\|v_y - mg}{m}
  \end{aligned}
\end{equation}
where the state vector $\state = (x, y, \vartheta, v_x, v_y, \omega)
\in \mathbb{R}^6$ represents the AUV's position coordinates $(x, y)$,
orientation angle $\vartheta$, linear velocities $(v_x, v_y)$, and
angular velocity $\omega$.
The control input $a = (F_{\text{long}}, F_{\text{lat}}) \in [-1, 1]
\times [-0.5, 0.5]$ is two-dimensional, representing the longitudinal
and lateral thrust forces.

The system parameters are: mass $m = 1$, moment of inertia $I = 0.1$,
gravitational acceleration $g = 0.5$, drag coefficient $C_d = 0.05$,
and lateral offset distance $d_{\text{offset}} = 0.2$ for torque generation.


\paragraph{Objective and Reward Function} The reward function
encourages the AUV to reach the target while penalizing excessive
velocities and contaminated area intrusion:
\begin{equation*}
  \reward(\state, \action) = -\frac{(y - 4)^2}{4} - \frac{x^2}{4} -
  \frac{v_x^2}{20} - \frac{v_y^2}{20} - \frac{\omega^2}{100} - 5
  \cdot \mathbb{I}\{(x, y) \in \mathcal{C}\}
\end{equation*}
Here the two interpretations coincide: the goal set is specified
explicitly as $\G = \{(x, y) \in \mathbb{R}^2 \mid \|x\| < 0.4 \text{
and } y \geq 4 \}$, and it also corresponds to the high-reward target
area at the water surface.
The contaminated region is a parabolic area $\mathcal{C} = \{(x, y)
  \in \mathbb{R}^2 \mid \frac{x}{0.81} + \frac{(y - 2)^2}{0.36} \leq
1\}$ that the agent must avoid, depicted in green in
\Cref{fig:underwater_drone_visuals}.
The episode length is 1500 steps.
Every step is integrated with a time step of 0.02 seconds.

\paragraph{Initial Conditions} The initial state distribution is
uniform over the domain:
\begin{equation}
  \begin{aligned}
    \transit_0(\bullet) & \sim \text{Uniform}\big([-2, 2] \times [0,
      4/3] \times [9\pi/20, 11\pi/20] \\
    & \quad \times [-0.2, 0.2] \times [-0.2, 0.2] \times [-0.2, 0.2]\big),
  \end{aligned}
\end{equation}
ensuring the AUV starts in the lower portion of the environment with
moderate initial velocities and near-vertical orientation.


\paragraph{Baseline Policy Design}
The baseline policy employs two PD controllers with coordinate
transformation to direct the AUV toward the goal.
While this approach successfully reaches the goal set $\G$, it does
not account for the contaminated region, which often results in
suboptimal trajectories through the penalty area.
A sample trajectory demonstrating this behavior is shown in the right
panel of \Cref{fig:underwater_drone_visuals}.
The complete implementation can be found in the repository: \gitrepo.
An animated visualization comparing trajectory behaviors of the
baseline policy and the final trained policy produced by the proposed
method is also available in the repository.

\subsection{Treasure-Collecting Robot}\label{sec:kin-robot}

\begin{figure}[!b]
  \centering
  \begin{minipage}{0.48\linewidth}
    \centering
    \includegraphics[width=\linewidth]{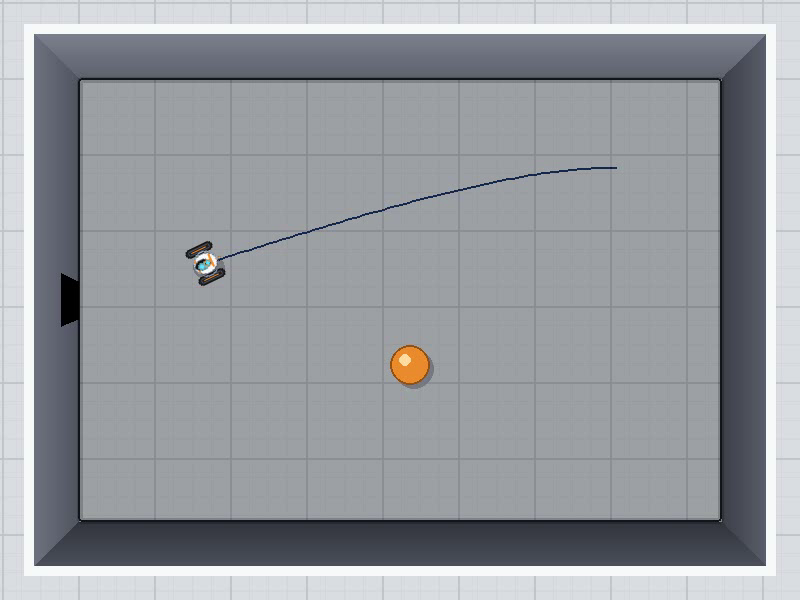}
  \end{minipage}\hfill
  \begin{minipage}{0.48\linewidth}
    \centering
    \includegraphics[width=\linewidth]{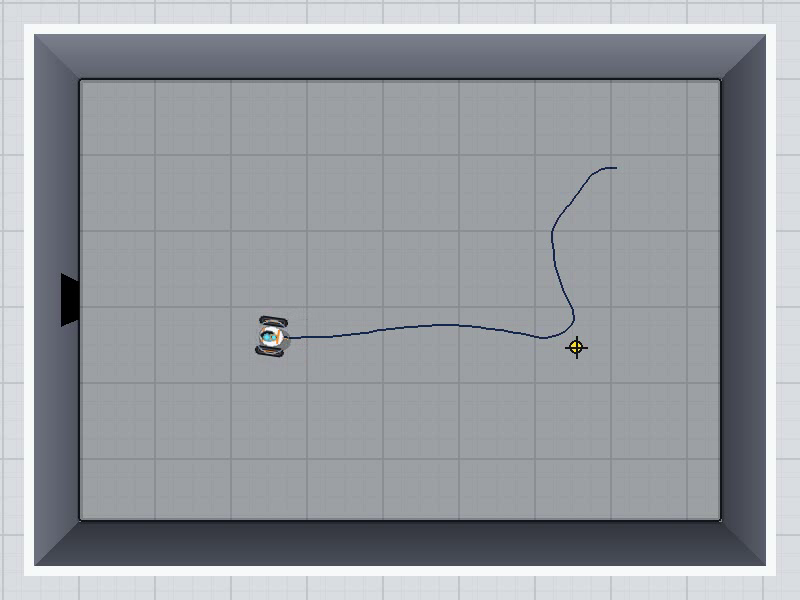}
  \end{minipage}
  \caption{
    Visualizations of the Treasure-Collecting Robot task.
    Left: representative behavior of the nominal policy.
    The black marker on the left boundary indicates the goal, and the
    orange disk denotes the collectible treasure.
    The baseline policy steers directly toward the goal and does not
    intercept the treasure.
    Right: desired behavior induced by the learned policy.
    The learned policy first deviates toward the collectible treasure
    and then returns toward the goal, instead of following a direct
    goal-seeking trajectory.
  }
  \label{fig:kin_robot_rollouts}
\end{figure}

\paragraph{Environment Description}
The robot moves with a constant linear speed.
Its dynamics are given by
\begin{equation*}
  \dot{x} = v \cos \vartheta, \quad \dot{y} = v \sin \vartheta, \quad
  \dot{\vartheta} = \omega,
\end{equation*}
where $(x, y) \in [0,1]^2$ are the robot's coordinates, $\vartheta
\in [-\pi, \pi)$ is the robot's heading, $v\equiv 0.15 \mathrm{m/s}$
denotes the robot's constant linear speed, and $\omega \in [-\pi,
\pi)$ is the angular velocity serving as the control input.

During the episode the robot must collect a treasure whose position
$(x^{\mathrm{tr}}, y^{\mathrm{tr}}) \in [0,1]^2$ is known to the robot.
The treasure is considered collected when the robot enters a
neighborhood of radius $0.05$ around the treasure, i.e.,
\[
  \left(x - x^{\mathrm{tr}}\right)^2 + \left(y -
  y^{\mathrm{tr}}\right)^2 < 0.05^2.
\]
The treasure moves stochastically in the workspace.
Let $(x_t^{\mathrm{tr}}, y_t^{\mathrm{tr}})$ and $(v_t^x, v_t^y)$
denote its position and velocity at time $t$, and let $\Delta t =
0.05\,\mathrm{s}$.
Its initial velocity $(v_0^x, v_0^y)$ has a random heading and fixed
speed $v_{\max} = 0.12$.
At each time step, the velocity is perturbed by random variables
$\epsilon_t^x, \epsilon_t^y \sim \mathcal N(0, 0.03^2)$, which are
independent across components and time, and the resulting velocity is
clipped to magnitude $v_{\max}$.
The velocity and position updates are given by
\begin{equation*}
  v^x_{t+1}\!=\! \frac{v^x_t+\epsilon^x_t}
  {\max\!\left(1,\!\frac{\sqrt{(v^x_t+\epsilon^x_t)^2 +
  (v^y_t+\epsilon^y_t)^2}}{v_{\max}}\right)}\;
  v^y_{t+1}\!=\!\frac{v^y_t+\epsilon^y_t}
  {\max\!\left(1,\!\frac{\sqrt{(v^x_t+\epsilon^x_t)^2 +
  (v^y_t+\epsilon^y_t)^2}}{v_{\max}}\right)}
\end{equation*}
\begin{equation*}
  x^{\mathrm{tr}}_{t+1} = x^{\mathrm{tr}}_t + \Delta t\, v^x_{t}\quad
  y^{\mathrm{tr}}_{t+1} = y^{\mathrm{tr}}_t + \Delta t\, v^y_{t}.
\end{equation*}
The update is followed by reflection at the workspace boundaries for
$x^{\mathrm{tr}}_{t+1}$ and $y^{\mathrm{tr}}_{t+1}$, implemented as
component-wise velocity sign flips.

After collecting the treasure, the robot must reach a goal region
that is specified explicitly and is also favored by the
distance-to-goal term in the reward.
The goal set is defined as
\[
  \G = \left\{(x, y) \in [0,1]^2 \;\middle|\; \left(x -
  x^{\mathrm{g}}\right)^2 + \left(y - y^{\mathrm{g}}\right)^2 < 0.05^2 \right\},
\]
where $(x^{\mathrm{g}}, y^{\mathrm{g}}) = (0, 0.5)$ is the goal position.

The state vector is 7-dimensional and is defined as
\[
  \state = (x, y, \cos \vartheta, \sin \vartheta, x^{\mathrm{tr}},
  y^{\mathrm{tr}}, I^{\mathrm{tr}}),
\]
where $I^{\mathrm{tr}} \in \{0,1\}$ is a binary indicator of treasure
availability:
\[
  I^{\mathrm{tr}} =
  \begin{cases}
    1, & \text{if the treasure is available for collection}, \\
    0, & \text{if the treasure has already been collected}.
  \end{cases}
\]
The dynamics are integrated using an explicit Euler scheme at 20 Hz.
Each episode lasts at most 1000 steps (50 s) and terminates early if
the goal set $\G$ is reached.
Representative rollouts for this task are shown in
\Cref{fig:kin_robot_rollouts}.

\paragraph{Reward Function and Objective}
The reward function is defined as
\begin{multline*}
  r(s,a) = -\left\lVert (x^{\mathrm{g}}, y^{\mathrm{g}}) - (x, y)
  \right\rVert_2 \\
  + 50 \cdot I^{\mathrm{tr}}
  \cdot \mathbb{I}\!\left\{
    \left\lVert (x, y) - (x^{\mathrm{tr}}, y^{\mathrm{tr}})
    \right\rVert_2 < 0.05
  \right\},
\end{multline*}
where $\lVert \bullet \rVert_2$ denotes the Euclidean norm.

Thus, the task rewards both collecting the treasure and reaching the
goal as quickly as possible.

\paragraph{Initial Conditions}

The initial state is sampled according to the following distribution:

\[
  \begin{aligned}
    \transit_0(\bullet) & \sim \text{Uniform}\big( [0.7, 0.9] \times
      [0.1, 0.9] \times \{-1\} \times \\ & \quad \{0\} \times [0.15,
    0.85] \times [0.05, 0.95] \times \{1\}\big)
  \end{aligned}
\]
\paragraph{Baseline Policy Design}
The baseline employs a simple geometric steering law that turns the
robot toward the provided target point (goal).
From the observation $[x, y, \cos\vartheta, \sin\vartheta, \ldots]$,
the controller reconstructs the current heading as $\vartheta =
\mathrm{atan2}(\sin\vartheta,\cos\vartheta)$ and computes the desired
heading to the goal as $\vartheta^\star =
\mathrm{atan2}(y^{\mathrm{g}}-y,\, x^{\mathrm{g}}-x)$.
The wrapped angle error is $e = (\vartheta^\star-\vartheta+\pi)\bmod
2\pi - \pi$, and the action is the clipped proportional control
\[
  \omega = \mathrm{clip}(k_{\text{turn}}\, e,\; -\omega_{\max},\;
  \omega_{\max}).
\]
The parameters are set to $k_{\text{turn}}=1$ and $\omega_{\max}=\pi$.
When the robot is (numerically) at the goal, $\vartheta^\star$ is set
to $\vartheta$ to avoid an ill-defined direction.
This baseline reliably reaches the goal but optimizes only the
goal-reaching objective; it is not designed to capture high-reward
treasure, which results in suboptimal behavior with respect to the
full reward function.
This behavior is illustrated in the left panel of \Cref{fig:kin_robot_rollouts}.

\subsection{Experimental results}\label{sec:exp_results}
\paragraph{Experimental Setup} The proposed method is evaluated
with two backbones, TD3 and SAC, along with the corresponding vanilla
and residual variants for each backbone.
All algorithms are trained for 3M environment steps across ten
independent random seeds for each algorithm-environment pair to
ensure statistical reliability.
The TD3 and SAC implementations are both sourced from the CleanRL
library~\cite{huang2022cleanrl} and use the same environment
interfaces and evaluation protocol.
No backbone-specific hyperparameter tuning is performed: for both TD3
and SAC, the default CleanRL hyperparameters are used and kept fixed
within each backbone family.
For the proposed method, only the method-specific parameters are selected.
$\nu=0.01$ is set to a small value, so that the critic gate does not
become overly conservative.
The relaxation-schedule parameters $\relprob_0$ and $\lambda_0$ are
chosen so that $\relprob_0 \sum_{t=0}^{T-1}\lambda_0^t / T \approx 0.2$.
This yields $\relprob_0=0.8$ and $\lambda_0=0.995$ for the
Contaminated-Zone AUV Navigation environment, and $\relprob_0=0.9$
and $\lambda_0=0.96$ for the Treasure-Collecting Robot environment.
The transition time $T_{\text{tran}}$ is set to $2.7$M; the baseline
policy is never invoked after this point, and the learning policy
operates independently from $2.7$M to $3.0$M timesteps.

\subsubsection{Evaluation}

Several types of evaluation are introduced for better understanding
the performance of the proposed method across both TD3 and SAC backbones:
\begin{itemize}
  \item \textit{Learning curves (episode return).} Episode return is
    defined as the cumulative sum of rewards within an episode.
    Learning curves report episode return versus the total number of
    environment timesteps and are presented in
    \Cref{fig:episode_return,fig:sac_episode_return}.
\end{itemize}

\begin{itemize}
  \item \textit{Goal reaching during training.} To demonstrate the
    effectiveness of the proposed method in terms of goal-reaching
    performance \emph{throughout training}, a rolling goal-reaching
    rate is computed for each random seed.
    Specifically, for every completed episode, the binary
    goal-reaching indicator is averaged over the most recent 75
    episodes in the same seed, using all available earlier episodes
    at the beginning of training.
    The resulting per-seed rolling curves are aligned on a regular
    timestep grid, and the median across seeds together with the
    interquartile range is then reported at each grid point.
  \item \textit{Final-stage metrics.} Since episode return often
    provides limited interpretability in reinforcement learning,
    additional task-oriented metrics are reported for both
    environments in~\Cref{tab:metrics} and~\Cref{tab:metrics_kin_robot}.
    These metrics are computed over the final stage of training,
    i.e., for timesteps from $2.7$M to $3.0$M (after the transition
    time $T_{\mathrm{tran}} = 2.7$M).
    The following metrics are used:
    \begin{itemize}
      \item \textit{Goal-reaching rate (both environments):}
        percentage of episodes in which the goal set is reached.
        Higher values are better (best is $100\%$).

      \item \textit{Treasure-collection rate (Treasure-Collecting
        Robot):} percentage of episodes in which the treasure is collected.
        Since successful completion requires both collecting the
        treasure and reaching the goal, \Cref{tab:metrics_kin_robot}
        reports both rates.
        A combined success score is also reported, defined as the
        average of the goal-reaching and treasure-collection rates.
        Higher values are better (best is $100\%$).

      \item \textit{Avoidance score (Contaminated-Zone AUV
        Navigation):} maximum penetration depth into $\mathcal{C}$
        during an episode,
        \[
          \max_{t \in \text{episode}} d\left((x_t, y_t),
          \bar{\mathcal{C}}\right),
        \]
        where $d(\cdot,\cdot)$ denotes the Euclidean distance,
        $\bar{\mathcal{C}}$ is the closed complement of
        $\mathcal{C}$, and $(x_t,y_t)$ is the agent position at time $t$.
        Lower values are better (best is $0$).
    \end{itemize}

\end{itemize}
\paragraph{Diagnostic plots}
To illustrate how control is gradually transferred from the baseline
policy to the learning policy, two diagnostics are reported.
The first is the fraction of actions selected by the learning policy
per episode.
The second is the evolution of the schedule parameters $\relprob$ and $\lambda$.
Figure~\ref{fig:fraction_baseline} shows the learning-policy call
fraction during training, averaged over ten independent random seeds,
while Figure~\ref{fig:schedule_parameters} shows the corresponding
transition schedule.
\begin{figure}[h]
  \centering
  \includegraphics[width=0.9\linewidth]{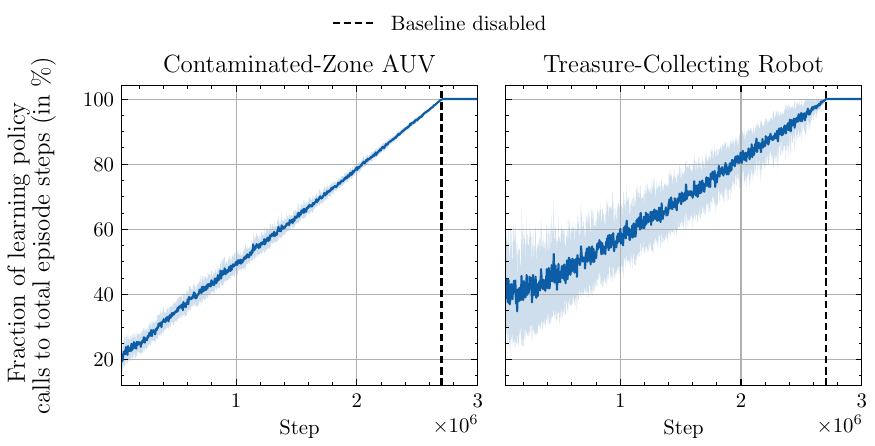}
  \caption{
    Evolution of the fraction of learning policy calls per episode
    during training.
    Results are averaged over ten independent random seeds, with
    shaded regions indicating standard deviation.
    At the beginning of training, the baseline policy produces most
    of the actions, so the learning-policy call fraction is low.
    As training progresses, agency is gradually transferred to the
    learning policy, with the learning-policy call fraction
    increasing approximately linearly---neither too abruptly nor too slowly.
    The vertical dashed line marks the Baseline disabled point, after
    which this fraction reaches $100\%$ and the baseline policy is no
    longer called (\Cref{alg:enhance}).
  }
  \label{fig:fraction_baseline}
\end{figure}

\begin{figure}[h]
  \centering
  \includegraphics[width=0.9\linewidth]{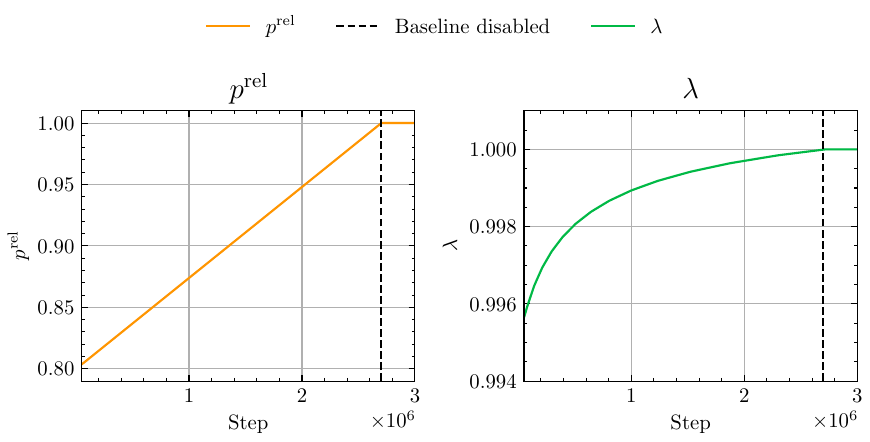}
  \caption{
    Evolution of the schedule parameters $\relprob$ and $\lambda$
    during training in the Contaminated-Zone AUV Navigation environment.
    The curves show the logged schedule values; since the schedule is
    deterministic, the same values are obtained for all ten
    independent random seeds.
    Both parameters increase monotonically toward their terminal
    value of one; the vertical dashed line marks the Baseline disabled point.
  }
  \label{fig:schedule_parameters}
\end{figure}

\subsubsection{Results}
\begin{figure*}[!t]
  \centering

  \begin{minipage}{0.48\textwidth}
    \centering
    \includegraphics[width=\linewidth]{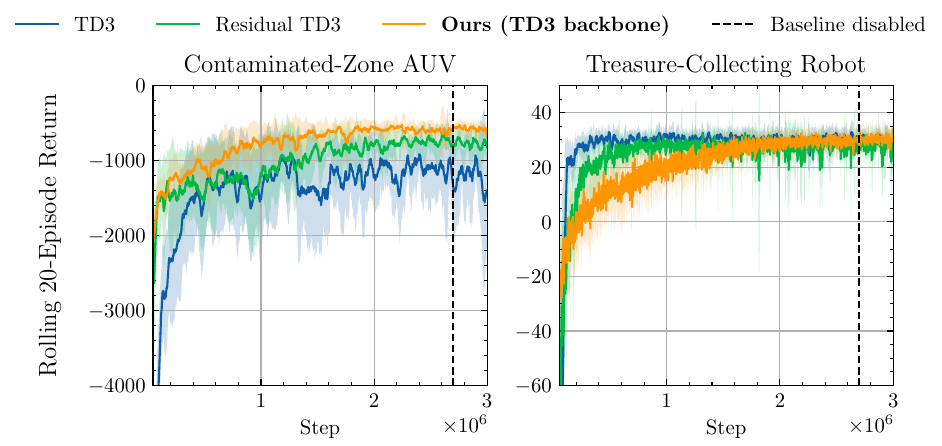}
    \vspace{-6pt}
    \caption{
      Episode return comparison for TD3-based methods.
      Results are averaged over ten independent random seeds for each
      algorithm-environment pair, with standard deviation bands.
      The vertical dashed line marks the Baseline disabled point.
    }
    \label{fig:episode_return}
  \end{minipage}\hfill
  \begin{minipage}{0.48\textwidth}
    \centering
    \includegraphics[width=\linewidth]{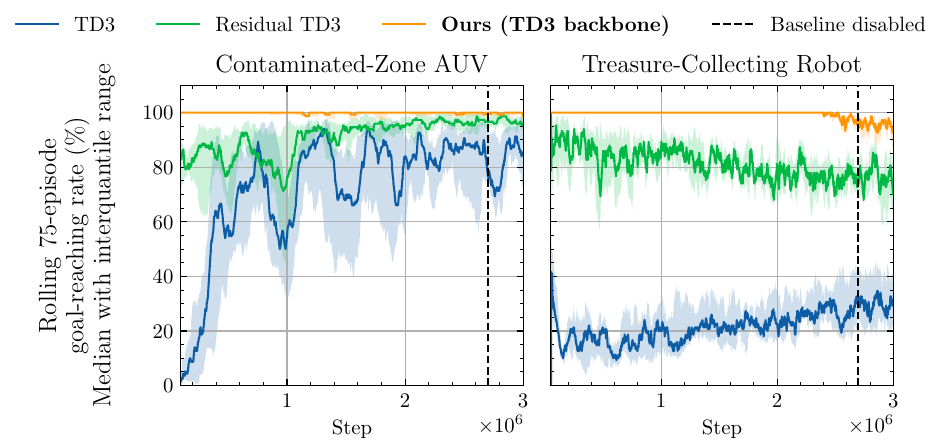}
    \caption{
      Goal-reaching rate comparison for TD3-based methods.
      The solid line denotes the median across ten independent random
      seeds, while the shaded region denotes the interquartile range.
      The vertical dashed line marks the Baseline disabled point.
    }
    \label{fig:goal_reaching_rates}
  \end{minipage}
\end{figure*}

\begin{figure*}[!t]
  \centering

  \begin{minipage}{0.48\textwidth}
    \centering
    \includegraphics[width=\linewidth]{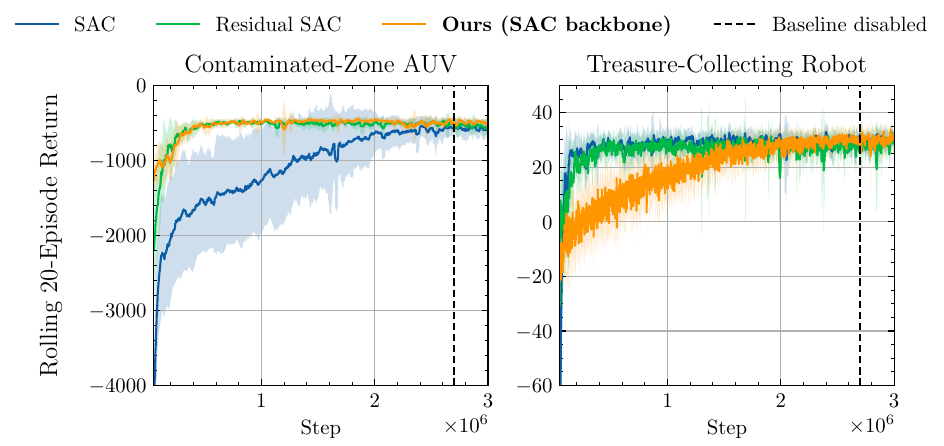}
    \vspace{-6pt}
    \caption{
      Episode return comparison for SAC-based methods.
      Results are averaged over ten independent random seeds for each
      algorithm-environment pair, with standard deviation bands.
      The vertical dashed line marks the Baseline disabled point.
    }
    \label{fig:sac_episode_return}
  \end{minipage}\hfill
  \begin{minipage}{0.48\textwidth}
    \centering
    \includegraphics[width=\linewidth]{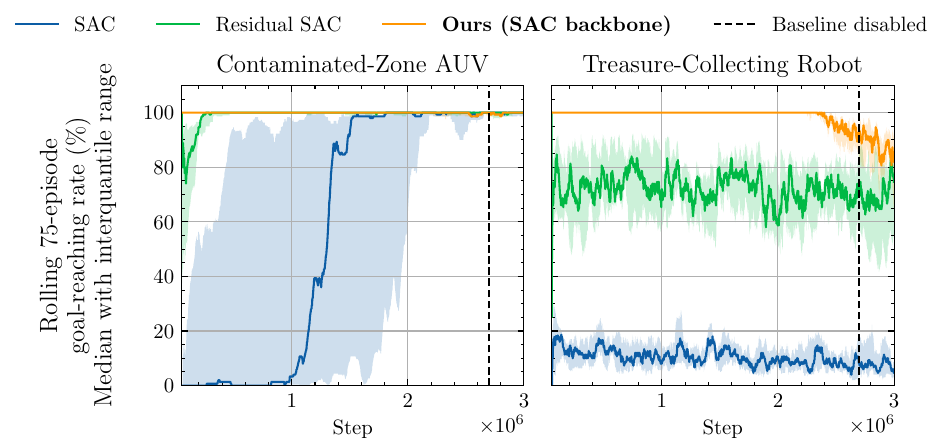}
    \caption{
      Goal-reaching rate comparison for SAC-based methods.
      The solid line denotes the median across ten independent random
      seeds, while the shaded region denotes the interquartile range.
      The vertical dashed line marks the Baseline disabled point.
    }
    \label{fig:sac_goal_reaching_rates}
  \end{minipage}
\end{figure*}

\begin{table*}[!t]
  \centering

  \begin{minipage}{0.48\textwidth}
    \centering \scriptsize \setlength{\tabcolsep}{2pt}
    \caption{
      \textbf{Final-stage metrics in the Contaminated-Zone AUV
      Navigation environment.} TD3- and SAC-based methods are
      reported in the same table.
      Results correspond to episodes occurring between 2.7M and 3.0M
      environment steps and are aggregated over ten independent random seeds.
      \textit{Goal reached (\%)} denotes the percentage of episodes
      in which the agent successfully reaches the goal set $\G$
      within the episode horizon.
      \textit{Avoidance score} is defined as the maximum penetration
      depth into contaminated region $\mathcal{C}$ during an episode,
      $\max_{t \in \text{episode}} d((x_t,y_t),\bar{\mathcal{C}})$,
      where $d(\cdot,\cdot)$ is the Euclidean distance and
      $\bar{\mathcal{C}}$ is the closed complement of $\mathcal{C}$.
      Lower avoidance score indicates safer behavior (less intrusion
      into $\mathcal{C}$), while higher goal-reaching rate indicates
      better task success.
      All values are shown as \textit{mean $\pm$ standard deviation}.
    }\label{tab:metrics}
    \begin{tabular}{lcc}
      \toprule Method & Goal reached (\%) & Avoidance score
      \\ \midrule \textbf{Ours (TD3 backbone)} & 99.15 $\pm$ 0.58 &
      0.0100 $\pm$ 0.0031 \\ Residual TD3 & 95.60 $\pm$ 3.75 & 0.0207
      $\pm$ 0.0132 \\ TD3 & 76.35 $\pm$ 25.47 & 0.0150 $\pm$ 0.0104
      \\ \midrule \textbf{Ours (SAC backbone)} & 99.45 $\pm$ 0.50 &
      0.0052 $\pm$ 0.0031 \\ Residual SAC & 99.10 $\pm$ 1.07 & 0.0115
      $\pm$ 0.0076 \\ SAC & 90.25 $\pm$ 29.43 & 0.0065 $\pm$ 0.0048
      \\ \midrule Baseline policy & 100 $\pm$ 0 & 0.54 $\pm$ 0.18 \\ \bottomrule
    \end{tabular}
  \end{minipage}\hfill
  \begin{minipage}{0.48\textwidth}
    \centering \scriptsize \setlength{\tabcolsep}{1pt}
    \caption{
      \textbf{Final-stage metrics in the Treasure-Collecting Robot
      environment.} TD3- and SAC-based methods are reported in the same table.
      Results correspond to episodes occurring between 2.7M and 3.0M
      environment steps and are aggregated over ten independent random seeds.
      \textit{Goal reached (\%)} denotes the percentage of episodes
      in which the robot reaches the goal set $\G$ within the episode horizon.
      \textit{Treasure collected (\%)} denotes the percentage of
      episodes in which the robot successfully collects the required
      treasure at least once during the episode.
      Since the task requires achieving both sub-goals, the
      \textit{Combined} metric is defined as
      $\tfrac{1}{2}(\textit{Goal reached} + \textit{Treasure collected})$.
      Higher values indicate better performance.
      All values are shown as \textit{mean $\pm$ standard deviation}.
    }\label{tab:metrics_kin_robot}
    \begin{tabular}{lccc}
      \toprule Method & Goal reached (\%) &
      \makecell{Treasure\\collected (\%)} & Combined \\ \midrule
      \textbf{Ours (TD3 backbone)} & 94.11 $\pm$ 3.85 & 99.78 $\pm$
      0.10 & 96.95 $\pm$ 1.92 \\ Residual TD3 & 76.15 $\pm$ 6.01 &
      99.27 $\pm$ 0.76 & 87.71 $\pm$ 2.92 \\ TD3 & 31.69 $\pm$ 13.27
      & 99.89 $\pm$ 0.14 & 65.79 $\pm$ 6.67 \\ \midrule \textbf{Ours
      (SAC backbone)} & 85.05 $\pm$ 11.01 & 99.56 $\pm$ 0.35 & 92.31
      $\pm$ 5.53 \\ Residual SAC & 66.62 $\pm$ 18.70 & 98.86 $\pm$
      1.09 & 82.74 $\pm$ 9.48 \\ SAC & 11.04 $\pm$ 10.36 & 99.78
      $\pm$ 0.30 & 55.41 $\pm$ 5.22 \\ \midrule Baseline policy & 100
      $\pm$ 0 & 17.30 $\pm$ 37.84 & 58.65 $\pm$ 18.92 \\ \bottomrule
    \end{tabular}

  \end{minipage}

\end{table*}

The results support the following observations.
\paragraph{Goal-reaching} \Cref{alg:enhance} achieves the
highest goal-reaching rates across all baselines in both environments.
\begin{itemize}
  \item During training, the proposed method outperforms all other
    algorithms evaluated alongside it in terms of goal-reaching rate (see
    \Cref{fig:goal_reaching_rates,fig:sac_goal_reaching_rates}).
    In particular, the goal-reaching rate is already high at the
    beginning of training.
    This is due to the design of the proposed method, which
    effectively embeds the baseline policy into RL training process.
    A formal interpretation of this behavior is provided
    in~\Cref{sec:theoretical_analysis}.
    Equivalently, the influence of the learning policy is small at
    the beginning of training and then increases at a moderate pace
    (approximately linearly on average; see the right panel of
    \Cref{fig:underwater_drone_visuals}).
  \item After the transition ($t \geq T_{\mathrm{train}} =
    2.7\text{M}$), when the baseline influence becomes exactly zero,
    the learning policy---implemented as a standalone neural network
    --- operates independently and still attains the highest
    goal-reaching rates.
    This holds in comparison with the corresponding residual and
    vanilla backbone baselines, as reported in \Cref{tab:metrics} and
    \Cref{tab:metrics_kin_robot}.
\end{itemize}
The slight decrease in goal-reaching rate observed near the end of
training is consistent with the intended transition mechanism.
During the early and intermediate stages, the baseline policy still
has substantial influence and can often recover trajectories in which
the learning policy has moved the system away from an easy path to the goal.
Near the transition time, this corrective influence becomes
negligible: the learning policy is selected almost always, and within
a finite episode there may be too little baseline intervention left
to compensate for occasional mistakes.
This finite-horizon effect is precisely why the early-stage
goal-reaching theorem in \Cref{sec:theoretical_analysis} is used as
an interpretation rather than as a direct performance guarantee: in
experiments, goal reaching is always judged over a finite episode
length, so any empirical claim that a policy reaches the goal
necessarily depends on whether the observation horizon is long enough
for the relevant recovery mechanism to act.
The same point is reflected in the trajectory-distance transfer
analysis in \Cref{thm:transfer}: after moving from the
baseline-supported regime to the baseline-free regime, the
finite-horizon goal-reaching probability may degrade by a term
controlled by the distance between the corresponding trajectory distributions.

Nevertheless, training on trajectories that predominantly reach the
goal has a lasting effect.
After the baseline is removed, the standalone learning policy
achieves higher final goal-reaching metrics than the other evaluated algorithms.
This is the empirical advantage targeted by the proposed method: the
baseline shapes the training distribution toward successful
trajectories, and this influence persists even in the final
baseline-free regime.

\paragraph{Final-stage metrics}
Beyond goal-reaching, the proposed approach performs well on
task-specific metrics (see~\Cref{tab:metrics} and~\Cref{tab:metrics_kin_robot}).
In particular, the avoidance score (Contaminated-Zone AUV Navigation)
and the treasure collection rate (Treasure-Collecting Robot) are both strong.
Combined with the best goal-reaching performance, this indicates that
the proposed method finds better overall behaviors in both cases: it
reliably reaches the goal set while also optimizing the task
objective (avoiding the contaminated region and collecting the
treasure, respectively), outperforming the corresponding vanilla and
residual backbone baselines.

\paragraph{Cumulative reward and training} The learning curves
in \Cref{fig:episode_return,fig:sac_episode_return} show that the
proposed approach achieves either superior returns or, in the worst
case, comparable returns.

\paragraph{Conclusion}
Overall, the proposed approach effectively \emph{improves} the RL
training procedure by producing a standalone neural policy that
outperforms both from-scratch methods and domain-specific residual
methods that also embed a baseline policy into the RL training process.
Moreover, the empirical results show that, when the baseline policy
is capable of solving the task, the proposed method can reach the
goal from the very beginning of training, even when the underlying
learning policy is still completely undertrained.
This yields stronger goal-reaching capability than the other
evaluated approaches.
The vanilla TD3 and SAC baselines are trained without access to the
baseline policy, providing reference points that make the cost of
learning from scratch explicit under the same training budget.
Their underperformance relative to the residual variants and to the
variants equipped with the proposed approach is therefore expected,
as the latter incorporate prior knowledge from the baseline policy
into the learning process.
As noted in the introduction, tuning RL methods can be difficult:
achieving strong performance often requires careful reward shaping
and extensive hyperparameter tuning.
The experiments suggest that when a working baseline policy is
already available, it can be leveraged to substantially simplify this
process and yield a policy that not only solves the task but also
does so more efficiently than a policy trained from scratch.

\section{Ablation and sensitivity studies}\label{sec:ablation_studies}

\subsection[Effect of nu: critic-value tracking]{Effect of $\nu$:
critic-value tracking}

The intuition behind tracking the best critic value $Q_t^{\dagger}$
is discussed in \Cref{par:intuition}; this section empirically
evaluates its practical contribution.
To this end, the tracking mechanism is ablated by disabling the
deterministic acceptance condition based on the improvement margin.
Concretely, the default setting $\nu=0.01$ is replaced by
$\nu=\infty$, which makes the condition
\[
  Q^{w_t}(\State_t,\Action^{\learningpolicy}_t) \ge Q_t^{\dagger} + \nu
\]
never satisfied.
As a result, the algorithm never performs ``certain'' acceptance
based on outrunning the episode-local benchmark and instead always
operates in the probabilistic acceptance regime.

The learning curves for this ablation are shown in
\Cref{fig:ablation_studies_kin_robot}.
Across all considered environments, enabling tracking yields
consistently better or, at worst, comparable performance throughout training.
In contrast, removing tracking leads to a systematic degradation of
returns and typically increases variability, indicating that the
benchmark-based test provides a stabilizing effect in practice.
While the method remains operational without tracking (i.e., it does
not collapse), its sample-efficiency and final performance are
noticeably reduced, which supports the conclusion that critic-value
tracking is an essential component of the proposed approach.

In addition to the return curves, the analysis also examines how
often the algorithm falls back to the baseline policy.
This effect is particularly pronounced in the Treasure-Collecting
Robot environment: when tracking is disabled, the number of baseline
policy calls increases substantially (see
\Cref{fig:ablation_baseline_policy_calls_kin_robot}).
This behavior is consistent with the intended role of tracking: by
recognizing confident improvements via the benchmark, the method more
readily commits control to the learning policy; without this
mechanism, the acceptance decisions become more conservative on
average, triggering baseline interventions more frequently and
slowing down effective transfer of agency.

\begin{figure}[h]
  \centering
  \includegraphics[width=\linewidth]{
    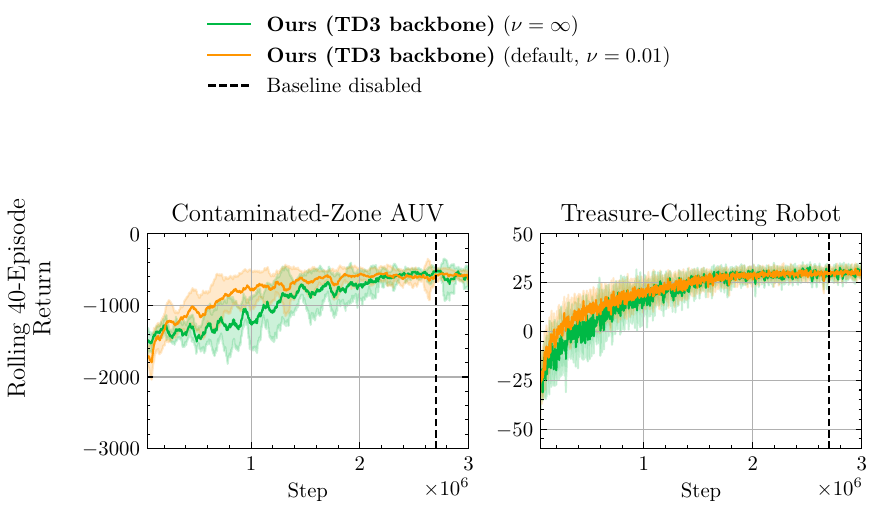
  }
  \caption{
    Comparison of training performance with critic-value tracking
    enabled ($\nu=0.01$) versus disabled (ablation with $\nu=\infty$).
    The proposed method is evaluated on top of a TD3 backbone.
    The vertical dashed line marks the Baseline disabled point in
    \Cref{alg:enhance}.
    Throughout training, tracking is consistently beneficial,
    achieving superior or comparable returns relative to the ablated variant.
  }
  \label{fig:ablation_studies_kin_robot}
\end{figure}

\begin{figure}[h]
  \centering
  \includegraphics[width=0.6\linewidth]{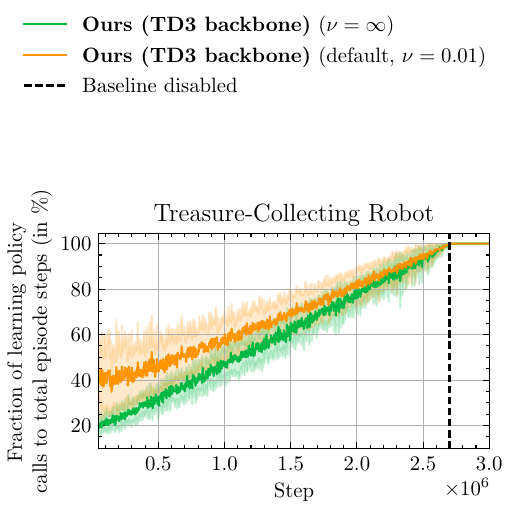}
  \caption{
    Fraction of learning policy calls during training with
    critic-value tracking enabled ($\nu=0.01$) versus disabled
    (ablation with $\nu=\infty$).
    The vertical dashed line marks the Baseline disabled point in
    \Cref{alg:enhance}.
    Critic-value tracking substantially increases the learning-policy
    call fraction, especially in the Treasure-Collecting Robot
    environment, indicating a more confident and efficient transfer of control.
  }
  \label{fig:ablation_baseline_policy_calls_kin_robot}
\end{figure}

\subsection[Effect of Ttran: baseline-removal time]{Effect of
$T_{\mathrm{tran}}$: baseline-removal time}

The next experiment evaluates how the time at which the baseline
policy is fully removed affects performance.
The baseline-removal time $T_{\mathrm{tran}}$ exposes a trade-off in
agency transfer.
Early removal can make the transition too abrupt: the learned policy
may not yet have accumulated sufficient closed-loop experience to
sustain the task without baseline support, leading to degraded
goal-reaching quality, as shown in \Cref{fig:calf_td3_drone_anneal_is_in_hole}.
Later baseline removal provides a longer protected learning phase,
improves the resulting outcome, and makes the agency transfer
smoother in the final results.

\begin{figure}[h]
  \centering
  \includegraphics[width=0.8\linewidth]{
    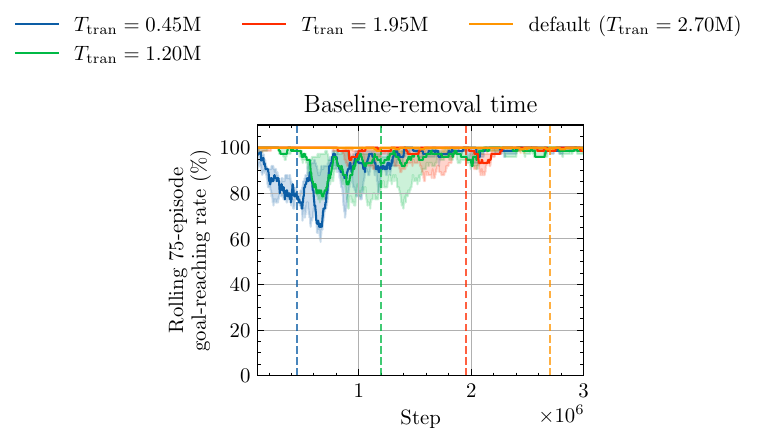
  }
  \caption{
    Rolling goal-reaching rate for the baseline-removal-time ablation
    of the proposed approach on top of a TD3 backbone in the
    Contaminated-Zone AUV Navigation environment.
    The plotted metric is the goal-reaching rate, reported as a
    rolling 75-episode median across five independent random seeds,
    with interquartile bands.
    Dashed vertical lines mark the corresponding Baseline disabled
    times $T_{\mathrm{tran}}$.
    All runs use the default configuration of the proposed approach
    on top of a TD3 backbone, with only $T_{\mathrm{tran}}$ varied.
  }
  \label{fig:calf_td3_drone_anneal_is_in_hole}
\end{figure}

\subsection[Effect of p0 and lambda0: relaxation schedule]{Effect of
$p_0$ and $\lambda_0$: relaxation schedule}

Finally, the sensitivity of the method to the within-episode
relaxation schedule, controlled by the initial probability $p_0$ and
the initial decay factor $\lambda_0$, is evaluated.
Larger values make the algorithm rely more strongly on the learning
policy from the beginning of training, whereas smaller values keep
the system closer to the baseline policy for longer.

If the initial trust in the learning policy is too high, the schedule
continues to amplify this trust during training, leaving less room
for baseline support in the early learning stages.
This can lead to worse final performance.
This effect is demonstrated by comparing the proposed method on top
of a TD3 backbone under the aggressive relaxation schedule
$p_0=1.0,\lambda_0=0.9995$ against the default schedule
($p_0=0.8,\lambda_0=0.995$) in the Contaminated-Zone AUV Navigation environment.
The return curves in
\Cref{fig:calf_td3_drone_schedule_ablation_returns} show that
starting with excessively large $p_0$ and $\lambda_0$ worsens performance.

A similar degradation can be observed in the Treasure-Collecting
Robot environment in terms of the goal-reaching rate
(\Cref{fig:calf_sac_robot_hyperparameter_sensitivity}).
There, the default schedule ($p_0=0.9,\lambda_0=0.96$) is compared
against the alternative schedule ($p_0=0.8,\lambda_0=0.995$), which
assigns substantially larger cumulative trust to the learning policy.
Indeed, the infinite-horizon schedule mass is
$p_0/(1-\lambda_0)=22.5$ for the default setting, but
$p_0/(1-\lambda_0)=160$ for the alternative setting, i.e., more than
seven times larger.

As a practical rule of thumb, following
\Cref{sec:transition-scheduling-strategy}, $p_0$ is initialized in
$[0.8, 1.0]$, and $\lambda_0$ is chosen so that
$\frac{p_0}{T}\sum_{t=0}^{T-1}\lambda_0^t \approx 0.2$.

\begin{figure}[h]
  \centering
  \includegraphics[width=.68\linewidth]{
    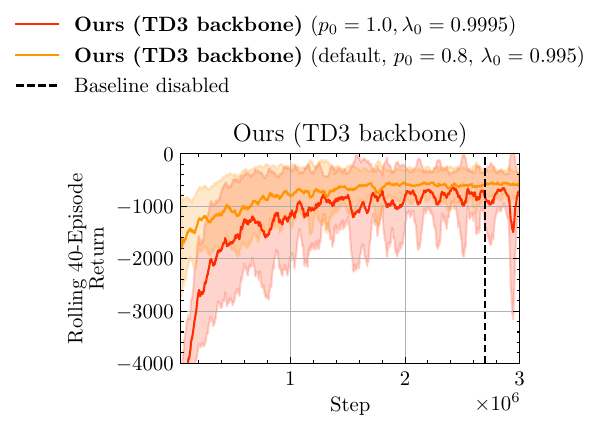
  }
  \caption{
    Return sensitivity of the proposed method on top of a TD3
    backbone in the Contaminated-Zone AUV Navigation environment
    under a relaxation schedule initialized too close to one,
    $p_0=1.0,\lambda_0=0.9995$, versus the default schedule.
    For the default TD3-backbone curve, $T_{\mathrm{tran}}=2.7$M,
    $p_0=0.8$, and $\lambda_0=0.995$.
    Curves show rolling 40-episode returns averaged over five
    independent random seeds, with standard deviation bands.
    The vertical dashed line marks the default Baseline disabled point.
  }
  \label{fig:calf_td3_drone_schedule_ablation_returns}
\end{figure}

\begin{figure}[h]
  \centering
  \includegraphics[width=.65\linewidth]{
    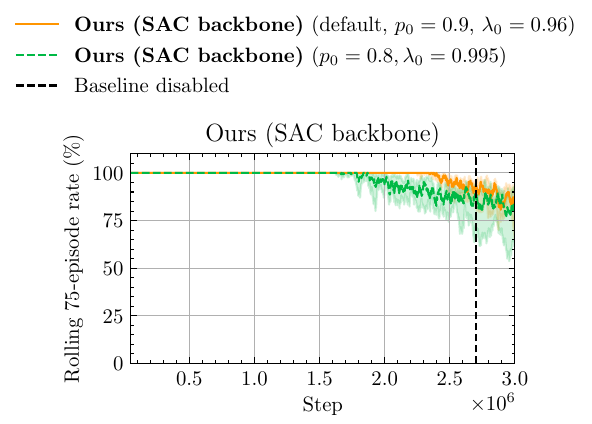
  }
  \caption{
    Rolling goal-reaching sensitivity of the proposed approach on top
    of a SAC backbone in the Treasure-Collecting Robot environment
    under the default setting versus the more conservative
    $p_0=0.8,\lambda_0=0.995$ setting.
    Curves show rolling 75-episode medians across ten independent
    random seeds, with interquartile bands.
    The vertical dashed line marks the default Baseline disabled point.
  }
  \label{fig:calf_sac_robot_hyperparameter_sensitivity}
\end{figure}

\appendix
\section{Proofs}
\label{app:proofs}

\subsection{Proof of Theorem~\ref{thm:goal-reaching}}
\label{app:proof_basic_goal_reaching}
\begin{proof}
  In the present proof, an episode is considered for which $\lambda <
  1$ at its beginning.
  Since the analysis is restricted to an intra-episode setting, the
  beginning time $\tau = 0$ may be set without loss of generality.
  By construction of \Cref{alg:enhance}, the parameter $\lambda$
  remains constant throughout the episode.

  The critical idea is to prove that the number of times the learning
  policy is triggered is bounded above \ie \( \sum_{t = 0}^{\infty}
    \mathbb{I}\{\Action_t = \Action^{\learningpolicy}_t\} < \infty.
  \) Moreover, $\Action_t = \Action^{\learningpolicy}_t$ if
  $Q^{w_t}(\State_t, \Action^{\learningpolicy}_t) \geq Q_t^{\dagger}
  + \nu$ or $U_t \leq \relprob \lambda^{t}$.
  Thus, $\sum_{t = 0}^{\infty} \mathbb{I}\{\Action_t =
  \Action^{\learningpolicy}_t\}$ is bounded above by $$ \sum_{t =
  0}^{\infty} \mathbb{I}\{Q^{w_t}(\State_t,
  \Action^{\learningpolicy}_t) \geq Q_t^{\dagger} + \nu\} + \sum_{t =
  0}^{\infty} \mathbb{I}\{U_t \leq \relprob \lambda^{t}\} $$

  Although $Q_0^{\dagger} = -\infty$ by initialization, the
  subsequent value $Q_1^{\dagger}$ is finite since the condition in
  line~\ref{alg:learning-3} is satisfied at $t = 0$.
  Since the sequence $\{Q^{\dagger}_t\}_{t\geq1}$ is non-decreasing
  and can only increase by increments of at least $\nu$, it follows
  that: \( \sum_{t = 0}^{\infty} \mathbb{I}\{Q^{w_t}(\State_t,
    \Action^{\learningpolicy}_t) \geq Q_t^{\dagger} + \nu\} \leq
    \frac{\bar{Q} - Q_1^{\dagger}}{\nu} + 1.
  \)

  The sum $\sum_{t = 0}^{\infty} \mathbb{I}\{U_t \leq \relprob
  \lambda^{t}\}$ is bounded by the Borel-Cantelli lemma since
  $\sum_{t = 0}^{\infty}\PP{U_t \leq \relprob \lambda^{t}} =
  \frac{\relprob}{1 - \lambda} < \infty$.
  Therefore, the total number of times the learning policy is
  triggered is bounded above.
  Consequently, there exists a time $t_0$ such that for all $t \geq
  t_0$, only the baseline policy is executed.
  The $\varepsilon$-improbable goal-reaching property of the
  resulting algorithm follows from the $\varepsilon$-improbable
  goal-reaching property of the baseline policy.
\end{proof}

\subsection{Proof of
Proposition~\ref{prop:functions_with_bounded_superlevel_sets}}
\label{app:proof_bounded_superlevel_sets}
\begin{proof}



  Let \(f^{\inf}:=\inf_{x\in\R^n} f(x)\in[-\infty,+\infty)\).
  The proof shows \(\textit{\ref{item:bounded_superlevel_sets}}
    \Leftrightarrow \textit{\ref{item:liminf}} \Leftrightarrow
  \textit{\ref{item:radially_unbounded}}\) by implications.

  \paragraph{\(\text{\ref{item:bounded_superlevel_sets}} \Rightarrow
  \text{\ref{item:liminf}}\)} Assume \(f(x)\) has bounded superlevel sets.

  \smallskip \emph{Case 1: \(f^{\inf}=-\infty\).}
  Fix any \(M>0\).
  Since \(f^{\inf}=-\infty\), there exists \(a\in f(\R^n)\) with \(a\le -M\).
  Boundedness of the \(a\)-superlevel set gives \(R>0\) such that
  \[
    \|x\|>R \Rightarrow f(x)<a\le -M.
  \]
  Hence \(\lim_{\|x\|\to\infty} f(x)=-\infty=f^{\inf}\).
  The inequality \(f(x)>f^{\inf}\) is automatic when \(f^{\inf}=-\infty\).

  \smallskip \emph{Case 2: \(f^{\inf}>-\infty\).}
  If \(f^{\inf}\in f(\R^{n})\), the \(f^{\inf}\)-superlevel set would
  equal \(\R^{n}\), contradicting boundedness.
  Thus \(f(x)>f^{\inf}\) for all \(x\),

  Let $a\in f(\mathbb{R}^n)$ be arbitrary with $f^{\inf}<a$.
  Because $f$ has bounded super-level sets, there is an $R>0$ such
  that $ f(x)\ge a \;\Longrightarrow\; \lVert x\rVert\le R, $ or
  equivalently, $ \lVert x\rVert>R \;\Longrightarrow\; f(x)<a.
  $ Because $f^{\inf}=\inf f(\mathbb{R}^n)$ and $f(x)>f^{\inf}$ for
  every $x\in\mathbb{R}^n$, one can pick $a$ arbitrarily close to $f^{\inf}$.

  \paragraph{\(\text{\ref{item:liminf}} \Rightarrow
  \text{\ref{item:bounded_superlevel_sets}}\)} Assume
  \(\lim_{\|x\|\to\infty} f(x)=f^{\inf}:=\inf f(\R^n)\) and
  \(f(x)>f^{\inf}\) for all \(x\).

  Let \(a\in f(\R^n)\).
  Since \(f(\R^n)\subset(f^{\inf},+\infty)\), it follows that \(a>f^{\inf}\).
  By the limit assumption, there exists \(R>0\) such that
  \(\|x\|>R\Rightarrow f(x)<a\).
  Thus \(\{x:f(x)\ge a\}\subseteq\{x:\|x\|\le R\}\), which is bounded.
  Hence \(f\) has bounded superlevel sets.

  \paragraph{\(\text{\ref{item:liminf}} \Rightarrow
  \text{\ref{item:radially_unbounded}}\).} If \(f^{\inf}=-\infty\),
  then \(\lim_{\|x\|\to\infty} f(x)=-\infty\), so \(-f(x)\to
  +\infty\) and \(-f\) is radially unbounded.

  If \(f^{\inf}\in\R\), then \(f(x)>f^{\inf}\) for all \(x\) and
  \(f(x)\to f^{\inf}\) as \(\|x\|\to\infty\).
  Hence \(f(x)-f^{\inf}\to 0^+\) and so
  \[
    -\log\big(f(x)-f^{\inf}\big)\longrightarrow +\infty\quad\text{as
    }\|x\|\to\infty,
  \]
  i.e., \(-\log(f(x)-f^{\inf})\) is radially unbounded.

  \paragraph{\(\text{\ref{item:radially_unbounded}} \Rightarrow
  \text{\ref{item:liminf}}\).} If \(-f(x)\) is radially unbounded,
  then \(f(x)\to-\infty\) as \(\|x\|\to\infty\), so
  \(f^{\inf}=-\infty\) and \(f(x)>f^{\inf}\) holds trivially.

  Otherwise, assume there exists a finite \(f^{\inf}=\inf
  f(\R^n)\in\R\) such that \(-\log(f(x)-f^{\inf})\) is radially unbounded.
  For \(-\log(f(x)-f^{\inf})\) to be well-defined everywhere, it must
  hold that \(f(x)-f^{\inf}>0\) for all \(x\), i.e., \(f(x)>f^{\inf}\).
  Moreover, \(-\log(f(x)-f^{\inf})\to+\infty\) implies
  \(f(x)-f^{\inf}\to 0^+\) as \(\|x\|\to\infty\), hence \(f(x)\to f^{\inf}\).
  This is precisely statement~\ref{item:liminf}.

\end{proof}

\subsection{Proof of Theorem~\ref{thm:uniform_goal_reaching}}
\label{app:proof_uniform_goal_reaching}
\begin{proof}
  The proof begins by introducing the definitions
  \eqref{eq:vmin}--\eqref{eq:R} for the quantities $\tau(d^{\circ})$,
  $\tau^{\mathrm{b}}(d^{\circ},d^{*})$, and $\delta(d^{\circ})$.
  The proof then explains, step by step, how these definitions
  guarantee Claims \ref{claim:uniform_overshoot_bound},
  \ref{claim:uniform_reaching_time}, and
  \ref{claim:distribution_reaching_time}, making clear how each
  object is constructed from scratch.
  Specifically, the following quantities are defined:
  \begin{align}
    v^{\min}(d^{\circ})
    & := \min\{\kappalow(\state) : \goaldist(\state)\le d^{\circ}\}
    \label{eq:vmin}
    \\
    \mathbb{V}(d^{\circ})
    & := \{\state \in \states: \kappalow(\state) \ge
    v^{\min}(d^{\circ})\}
    \label{eq:superset}
    \\
    v^{\max}(d^{\circ})
    & := \max\{\kappahigh(\state) : \state \in \mathbb{V}(d^{\circ})\}
    \label{eq:vmax}
    \\
    \tau(d^{\circ})
    & := 1 + \left\lfloor\dfrac{v^{\max}(d^{\circ}) -
    v^{\min}(d^{\circ})}{\nu}\right\rfloor
    \label{eq:T}
    \\
    d^{\bar{\mathrm{p}}}(d^{\circ})
    & := \sup\{\bar{\transit}(s, a)
    : \state \in \mathbb{V}(d^{\circ}),\,\action\in\actions\}
    \label{eq:Rrho}
    \\
    d^{\max}(d^{\circ})
    & := \max\left(d^{\circ},\, d^{\bar{\mathrm{p}}}(d^{\circ})\right)
    \label{eq:R}
    \\
    \delta(d^{\circ})
    & := \beta(d^{\max}(d^{\circ}), 0).
    \label{eq:epsH}
  \end{align}
  In the equations above, $v^{\min}(d^{\circ})$ and
  $v^{\max}(d^{\circ})$ bound critic values, the superlevel set
  $\mathbb{V}(d^{\circ})$ defines the operational domain of the
  baseline policy, and $\tau(d^{\circ})$ bounds the number of
  critic-improvement iterations, while
  $d^{\bar{\mathrm{p}}}(d^{\circ})$ bounds the maximum one-step
  transition from any state within the superlevel set
  $\mathbb{V}(d^{\circ})$, and $d^{\max}(d^{\circ})$ provides a
  composite bound incorporating both initial distance and transition magnitudes.

  Further, $\tau^{\mathrm{b}}(d^{\circ},d^{*})$ is defined as the
  minimum time steps for the baseline policy to drive any state with
  $\goaldist(\state) \leq d^{\max}(d^{\circ})$ to within $d^{*}$ of the goal:
  \begin{equation}
    \tau^{\mathrm{b}}(d^{\circ},d^{*})
    :=
    \max\left\{
      1,\,
      \left\lceil
      -\log\left(\xi^{-1}\left(
          \tfrac{d^{*}}{\kappa(d^{\max}(d^{\circ}))}
      \right)\right)\right\rceil
    \right\},
    \label{eq:Tfallback}
  \end{equation}
  where $\kappa, \xi \in \K_{\infty}$ are functions such that
  $\beta(d,t) \leq \kappa(d)\xi(e^{-t})$ for all $d \geq 0, t \geq 0$
  (a standard decomposition of $\mathcal{KL}$ functions,
  see~\cite[Lemma 8]{Sontag1998Comments}).

  A series of results is now established that together proves the
  theorem's claims:
  \begin{lemma}\label{prop:disallowed_superset}
    Whenever $\Action_t = \Action^{\learningpolicy}_t$, it holds that
    $\State_t \in \mathbb{V}(d^{\circ})$.
  \end{lemma}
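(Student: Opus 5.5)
The plan is to read off the two branches of the arbitration rule in \Cref{alg:enhance-intra-episode}, with the stochastic branch gated as in \eqref{eq:relaxation_monotonicity_gate}. Whenever $\Action_t = \Action^{\learningpolicy}_t$, at least one of two things happened. Either (a) $Q^{w_t}(\State_t,\Action^{\learningpolicy}_t) \ge Q_t^{\dagger}+\nu$, or (b) $U_t \le \lambda^{t}\rho^{\mathrm{rel}}_t$ with $\rho^{\mathrm{rel}}_t>0$. In case (b), the gate forces $Q^{w_t}(\State_t,\Action^{\learningpolicy}_t)\ge Q^{w_0}(\State_0,\Action^{\learningpolicy}_0)$; here $U_t\le 0$ has probability zero, so up to a null event the indicator must equal one. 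In both cases the aim is a single lower bound
\[
  Q^{w_t}(\State_t,\Action^{\learningpolicy}_t) \ge Q^{w_0}(\State_0,\Action^{\learningpolicy}_0).
\]
Once that bound is in hand, the envelope inequalities from \ref{ass:valuebase} finish the argument.

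The key steps, in order, are as follows.

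First, establish the base level. At $t=0$ we have $Q_0^{\dagger}=-\infty$, so the branch (a) condition holds, and by line~\ref{alg:learning-3} we get $Q_1^{\dagger}=Q^{w_0}(\State_0,\Action^{\learningpolicy}_0)$.

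Second, note that $\{Q_t^{\dagger}\}_{t\ge1}$ is nondecreasing, since it changes only when it is replaced by a strictly larger value. Hence in case (a) with $t\ge1$,
\[
  Q^{w_t}(\State_t,\Action^{\learningpolicy}_t)\ge Q_t^{\dagger}+\nu > Q_1^{\dagger} = Q^{w_0}(\State_0,\Action^{\learningpolicy}_0).
\]
For $t=0$ the bound is trivial. Case (b) gives the same bound directly from the gate.

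Third, chain this with \ref{ass:valuebase}:
\[
  \kappahigh(\State_t)\ge Q^{w_t}(\State_t,\Action^{\learningpolicy}_t)\ge Q^{w_0}(\State_0,\Action^{\learningpolicy}_0)\ge \kappalow(\state_0).
\]
The upper end, however, gives information about $\kappahigh$, not $\kappalow$. This mismatch is the main obstacle. Membership in $\mathbb{V}(d^{\circ})$ requires $\kappalow(\State_t)\ge v^{\min}(d^{\circ})$, whereas the critic only bounds $\kappalow(\State_t)$ from \emph{above}.

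At this step I would first try to exploit the definitions \eqref{eq:vmin}--\eqref{eq:superset} more carefully. Since $\goaldist(\state_0)\le d^{\circ}$, we have $\kappalow(\state_0)\ge v^{\min}(d^{\circ})$, so $\state_0\in\mathbb{V}(d^{\circ})$; the $t=0$ case is therefore immediate. For $t\ge1$, the natural route is induction: all learning actions before time $t$ occurred inside $\mathbb{V}(d^{\circ})$, and baseline segments keep the state in a controlled region. However, baseline dynamics alone cannot guarantee membership in a superlevel set of $\kappalow$.

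I therefore expect the intended argument to use the envelopes in the combined form that the paper's gating design suggests. The relevant quantity is $\kappalow$ evaluated along the trajectory, and the implication needed is
\[
  Q^{w_t}\ge Q^{w_0}\ge \kappalow(\state_0)\ge v^{\min}(d^{\circ}) \;\Rightarrow\; \kappalow(\State_t)\ge v^{\min}(d^{\circ}).
\]
This implication holds only if the critic lower envelope is tight with respect to the upper one in the sense relevant to membership. If it is not, I would strengthen the reading of \ref{ass:valuebase} to the sandwich $\kappalow\le Q^w\le\kappahigh$ with $\kappahigh$ replaced by $\kappalow$ in the gate. Concretely, I would take $\mathbb{V}(d^{\circ})$ as a superlevel set in the critic-envelope sense and verify that the gate condition places $\State_t$ there.

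In short, the proof reduces to the monotonicity of $Q^\dagger$ plus the gate, and the delicate point is converting the critic-level bound into the superlevel-set membership $\State_t\in\mathbb{V}(d^{\circ})$.
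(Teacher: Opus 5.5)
Your reduction is the same as the paper's. In both branches you obtain $Q^{w_t}(\State_t,\Action^{\learningpolicy}_t)\ge Q^{w_0}(\State_0,\Action^{\learningpolicy}_0)\ge\kappalow(\state_0)\ge v^{\min}(d^{\circ})$, and you get $\state_0\in\mathbb{V}(d^{\circ})$ directly. The paper then simply asserts ``and therefore $\State_t\in\mathbb{V}(d^{\circ})$''. In case (1) it even infers this for all $t\ge1$ from the scalar bound $Q^{\dagger}_t\ge v^{\min}(d^{\circ})$, which involves no state at all. This is exactly the step you flagged, and your diagnosis is right: \ref{ass:valuebase} bounds $\kappalow(\State_t)$ only from \emph{above} by the critic. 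So your proposal does not prove the lemma for $t\ge1$. But no missing idea could close it, because the lemma is false under the stated hypotheses.

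Here is a counterexample. Take $\states=\R$, $\G=\{0\}$, $\actions=[0,1]$, and deterministic dynamics $\State_{t+1}=\State_t/2+\Action_t$, so that $\bar p(\state,\action)=|\state|/2+\action$. Let the baseline be $\action\equiv0$, which is uniformly goal-reaching with $\beta(r,t)=r2^{-t}$ and $\eps=0$. Let the learning policy be $\action\equiv1$, the critic be constant $Q^w\equiv0$, and the envelopes be $\kappalow(\state)=-\state^2$ and $\kappahigh\equiv0$. These satisfy \ref{ass:valuebase} and have the same shape as the envelopes the paper recommends.

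With $d^{\circ}=1$ and $\state_0=1$ one gets $v^{\min}(1)=-1$ and $\mathbb{V}(1)=[-1,1]$. At $t=0$ the critic branch fires because $Q^{\dagger}_0=-\infty$, so $\State_1=3/2\notin\mathbb{V}(1)$. At $t=1$ the critic branch is off ($0<0+\nu$), but the gate is open ($0\ge0$). Hence with probability $\lambda\relprob>0$ the learning action is executed at $\State_1\notin\mathbb{V}(1)$. On that event $\State_2=7/4$, which also exceeds the explicit overshoot bound $\delta(1)=\beta(d^{\max}(1),0)=3/2$ from \eqref{eq:uniform_delta}. So the gap propagates to Claim~\ref{claim:uniform_overshoot_bound}.

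What your chain actually proves is $\kappahigh(\State_t)\ge Q^{w_t}(\State_t,\Action^{\learningpolicy}_t)\ge v^{\min}(d^{\circ})$. The lemma becomes true, with your three steps verbatim, under two changes:
\begin{itemize}
\item Redefine $\mathbb{V}(d^{\circ})$ as $\{\state:\kappahigh(\state)\ge v^{\min}(d^{\circ})\}$. It still contains $\state_0$, since $\kappahigh\ge\kappalow$.
\item Move the boundedness hypothesis to the upper envelope: require every superlevel set of $\kappahigh$ to be bounded, e.g.\ $-\kappahigh$ radially unbounded.
\end{itemize}
The notion in Definition~\ref{dfn:function_with_bounded_superlevel_sets} would not suffice here. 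It only constrains levels in the range of the function, and $v^{\min}(d^{\circ})$ need not lie in the range of $\kappahigh$. With this change the new set is compact (in finite dimensions), so $v^{\max}(d^{\circ})$ and $d^{\bar{\mathrm{p}}}(d^{\circ})$ stay finite. It also rules out the recommended choice $\kappahigh\equiv\bar Q$, which is exactly what the counterexample exploits.

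Your own suggested repairs do not work as stated:
\begin{itemize}
\item Making $\kappalow$ an upper envelope as well forces $Q^w(\state,\action)=\kappalow(\state)$, i.e.\ a frozen critic.
\item Changing only the relaxation gate cannot help, because the critic-improvement branch is not gated.
\item As you suspected, induction over baseline segments cannot yield membership in a superlevel set of $\kappalow$.
\end{itemize}
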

  \begin{proof}
    The \(\learningpolicy\) is chosen in two cases:

    (1) When \(Q^{\dagger}_{t} + \nu < Q^{w_t}(\State_t,
    \Action^{\learningpolicy}_t)\): At initialization,
    \(Q^{\dagger}_0\) is initialized to \(-\infty\).
    On the first iteration, it updates to \(Q^{\dagger}_1 =
    Q^{w_0}(\State_0, \Action^{\learningpolicy}_0)\).
    Moreover, \(Q^{\dagger}_t\) is nondecreasing by construction.
    Hence, for \(t \ge 1\),
    \[
      Q^{\dagger}_t \ge Q^{w_0}(\State_0,
      \Action^{\learningpolicy}_0) \ge \kappalow(\State_0) \ge
      v^{\min}(d^{\circ}),
    \]
    and therefore \(\State_t \in \mathbb{V}(d^{\circ})\) for all $t \ge 1$.
    By Assumption~\ref{ass:valuebase}, the critic satisfies
    \(\kappalow(\State_0) \le Q^{w_0}(\State_0,
    \Action^{\learningpolicy}_0)\), and since \(\goaldist(\State_0)
    \le d^{\circ}\), it follows that \(\kappalow(\State_0) \ge
    v^{\min}(d^{\circ})\), implying \(\State_0 \in \mathbb{V}(d^{\circ})\).

    (2) When \(U_t \leq \lambda^{t}\rho^{\mathrm{rel}}_t\): By
    definition, \(\rho^{\mathrm{rel}}_t = 0\) whenever
    \(Q^{w_t}(\State_t, \Action^{\learningpolicy}_t) <
    Q^{w_0}(\State_0, \Action^{\learningpolicy}_0)\), which can occur
    only for \(t \ge 1\).
    In such a case, the learning-policy branch is not executed, since
    the condition \(Q^{\dagger}_{t} + \nu < Q^{w_t}(\State_t,
    \Action^{\learningpolicy}_t)\) cannot hold, because for all \(t \ge 1\),
    \[
      Q^{\dagger}_t \ge Q^{w_0}(\State_0,
      \Action^{\learningpolicy}_0) \ge v^{\min}(d^{\circ}).
    \]
    When \(\rho^{\mathrm{rel}}_t > 0\), it follows that
    \(Q^{w_t}(\State_t, \Action^{\learningpolicy}_t) \ge
      Q^{w_0}(\State_0, \Action^{\learningpolicy}_0) \ge
    v^{\min}(d^{\circ})\), and therefore \(\State_t \in \mathbb{V}(d^{\circ})\).

  \end{proof}

  \begin{lemma}\label{prop:T}
    The quantity $\tau(d^{\circ})$ in~\eqref{eq:T} bounds the number
    of times the critic value can significantly improve:
    \[
      N_{\Value^{\dagger}} := \sum_{t=0}^{\infty}
      \mathbb{I}\{Q^{\dagger}_t + \nu < Q^{w_t}(\State_t,
      \Action^{\learningpolicy}_t)\} \leq \tau(d^{\circ})
    \]
  \end{lemma}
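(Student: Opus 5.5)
The plan is a monotone-counting argument on the episode-local benchmark $Q^{\dagger}_t$, exploiting that every firing of the critic-improvement branch raises $Q^{\dagger}_t$ by more than $\nu$, while all such values are confined to the interval $[v^{\min}(d^{\circ}), v^{\max}(d^{\circ})]$.

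Let $t_1<t_2<\dots$ enumerate the (possibly finitely many) times at which $Q^{\dagger}_{t}+\nu < Q^{w_t}(\State_t,\Action^{\learningpolicy}_t)$ holds. At each such time line~\ref{alg:learning-3}--\ref{alg:learning-4} of \Cref{alg:enhance-intra-episode} sets $Q^{\dagger}_{t+1}=Q^{w_t}(\State_t,\Action^{\learningpolicy}_t)$. Between firings $Q^{\dagger}$ is constant, so $Q^{\dagger}_{t_{j+1}}=Q^{\dagger}_{t_j+1}$. Hence for $j\ge1$
\[
  Q^{\dagger}_{t_{j+1}+1} > Q^{\dagger}_{t_{j+1}}+\nu = Q^{\dagger}_{t_j+1}+\nu,
\]
and by induction $Q^{\dagger}_{t_{k}+1} > Q^{\dagger}_{t_1+1}+(k-1)\nu$. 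The first firing is $t_1=0$, because $Q^{\dagger}_0=-\infty$.

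Next we bound both ends. For the lower end, $Q^{\dagger}_{t_1+1}=Q^{w_0}(\State_0,\Action^{\learningpolicy}_0)\ge\kappalow(\State_0)\ge v^{\min}(d^{\circ})$. This uses Assumption~\ref{ass:valuebase} together with $\goaldist(\state_0)\le d^{\circ}$ and the definition \eqref{eq:vmin}.

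For the upper end, each firing time $t_k$ is a time at which the learning action is executed. Lemma~\ref{prop:disallowed_superset} then gives $\State_{t_k}\in\mathbb{V}(d^{\circ})$, and so
\[
  Q^{\dagger}_{t_k+1}=Q^{w_{t_k}}(\State_{t_k},\Action^{\learningpolicy}_{t_k})\le\kappahigh(\State_{t_k})\le v^{\max}(d^{\circ}).
\]
Here we need $v^{\max}(d^{\circ})$ to be finite and attained. This follows because $\mathbb{V}(d^{\circ})$ is a superlevel set of $\kappalow$: it is bounded by the bounded-superlevel-set hypothesis and closed by continuity of $\kappalow$, and $\kappahigh$ is continuous. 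Similarly $v^{\min}$ is a minimum of a continuous function over the closed set $\{\goaldist\le d^{\circ}\}$; if that set is unbounded, one should argue via an infimum instead, and only a finite lower bound is actually needed. I expect this well-posedness point (compactness in a possibly infinite-dimensional $\states$) to be the main technical obstacle. I would handle it by working in the finite-dimensional setting implicit in \Cref{prop:functions_with_bounded_superlevel_sets}, or by replacing min/max with inf/sup, which keeps the counting argument intact.

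Combining the two ends gives $v^{\min}(d^{\circ})+(k-1)\nu < v^{\max}(d^{\circ})$. Therefore $k-1 < (v^{\max}-v^{\min})/\nu$, so $k-1\le\lfloor (v^{\max}-v^{\min})/\nu\rfloor$ (strictness only helps), and $k\le\tau(d^{\circ})$ as in \eqref{eq:T}. Since this holds for every $k$ that is realized, $N_{\Value^{\dagger}}\le\tau(d^{\circ})$ pathwise, hence almost surely.
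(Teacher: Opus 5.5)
Your counting argument is the paper's own proof of this lemma, written out more carefully. The firing at $t=0$ is the unbounded jump from $Q^{\dagger}_0=-\infty$. Every later firing raises $Q^{\dagger}$ by at least $\nu$, starting from $Q^{\dagger}_1\ge\kappalow(\State_0)\ge v^{\min}(d^{\circ})$. The firing values are capped by $v^{\max}(d^{\circ})$ because Lemma~\ref{prop:disallowed_superset} places the state in $\mathbb{V}(d^{\circ})$. Your explicit induction and your remarks on whether $v^{\min}(d^{\circ})$ and $v^{\max}(d^{\circ})$ are finite and attained go beyond the paper, which simply invokes compactness. One cosmetic fix: the algorithm updates $Q^{\dagger}$ on $\ge$, so $Q^{\dagger}$ can also rise at equality times that your strict count omits. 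Write $Q^{\dagger}_{t_{j+1}}\ge Q^{\dagger}_{t_j+1}$ instead; monotonicity gives this, and it is all your induction uses.

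The genuine problem is the step you import from Lemma~\ref{prop:disallowed_superset}, and it affects the paper's proof in exactly the same way. When the learning action is taken, all that is known is $Q^{w_t}(\State_t,\Action^{\learningpolicy}_t)\ge Q^{w_0}(\State_0,\Action^{\learningpolicy}_0)\ge v^{\min}(d^{\circ})$. Since $\kappalow\le Q^{w}\le\kappahigh$, this yields $\kappahigh(\State_t)\ge v^{\min}(d^{\circ})$, not $\kappalow(\State_t)\ge v^{\min}(d^{\circ})$, so nothing confines $\State_t$ to $\mathbb{V}(d^{\circ})$.

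Under assumption $(\mathrm{A}_1)$ as stated, the lemma is in fact false. Work on $\R^n$ with $\G=\{0\}$ and dynamics $s'=s+a$, $\|a\|\le1$. Take $\kappalow(s)=c-\|s\|^2$ and $Q^{w}(s,a)=\kappahigh(s)=c+\|s\|^2$. Use the learning policy $a=s/\|s\|$, any baseline satisfying $(\mathrm{A}_2)$ (e.g.\ $a=-s/\max\{1,\|s\|\}$, which is never invoked here), and $\|s_0\|=d^{\circ}=1$. The critic values along the trajectory are $c+1,c+4,c+9,\dots$. So for any $\nu<3$ the improvement branch fires at every step and $N_{\Value^{\dagger}}=\infty$, while $\tau(1)=1+\lfloor 2/\nu\rfloor$.

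Your counting does go through if you cap the critic globally. When $\bar Q:=\sup_{s}\kappahigh(s)<\infty$ (e.g.\ $\kappahigh\equiv\bar Q$ as in Remark~\ref{rem:upper-consistent-critic}), every firing value is at most $\bar Q$. You then obtain $N_{\Value^{\dagger}}\le 1+\lfloor(\bar Q-v^{\min}(d^{\circ}))/\nu\rfloor$ without Lemma~\ref{prop:disallowed_superset}. Confining the state itself, as the rest of the theorem requires, would need bounded superlevel sets for $\kappahigh$ rather than $\kappalow$. In that case $\mathbb{V}(d^{\circ})$ would be defined as a superlevel set of $\kappahigh$.
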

  \begin{proof}
    The quantity $\tau(d^{\circ})$ is well-defined because
    $v^{\min}(d^{\circ})$ and $v^{\max}(d^{\circ})$ are finite.
    This follows from the compactness of the relevant superlevel set
    $\mathbb{V}(d^{\circ})$ and the continuity of $\kappalow(\state)$
    and $\kappahigh(\state)$ (Assumption~\ref{ass:valuebase}).

    At $t = 0$, $Q^{\dagger}_0$ is initialized to $-\infty$, so the
    update condition $Q^{\dagger}_t + \nu < Q^{w_t}(\State_t,
    \Action^{\learningpolicy}_t)$ is satisfied trivially, resulting
    in an unbounded initial jump to $Q^{\dagger}_1 =
    Q^{w_0}(\State_0, \Action^{\learningpolicy}_0)$.
    For all subsequent iterations ($t \ge 1$), each occurrence of
    this condition increases $Q^{\dagger}_t$ by a finite jump of at least $\nu$.

    By Lemma~\ref{prop:disallowed_superset}, such updates can occur
    only when $\State_t \in \mathbb{V}(d^{\circ})$, where
    $Q^{w_t}(\State_t, \Action^{\learningpolicy}_t) \le v^{\max}(d^{\circ})$.
    Since $Q^{\dagger}_1 = Q^{w_0}(\State_0,
    \Action^{\learningpolicy}_0) \ge v^{\min}(d^{\circ})$, the number
    of finite jumps that can occur is bounded by $\lfloor
    (v^{\max}(d^{\circ}) - v^{\min}(d^{\circ}))/\nu \rfloor$.
    Including the initial unbounded update, the total number of
    possible improvements is therefore
    \[
      1 + \left\lfloor \frac{v^{\max}(d^{\circ}) -
      v^{\min}(d^{\circ})}{\nu} \right\rfloor = \tau(d^{\circ}),
    \]
    which completes the proof.
  \end{proof}
  \begin{lemma}\label{prop:Tq}
    Let \( T^{\mathrm{rel}} := \inf\{t \ge 0\!: U_{k} \ge \lambda^k
      \relprob \forall k \ge t\}.
    \)

    \text{\ref{item:bounded_superlevel_sets}} $T^{\mathrm{rel}}$
    bounds the total number of random acceptances:
    $\sum_{t=0}^{\infty} \mathbb{I}\{U_t < \lambda^t \relprob\} \leq
    T^{\mathrm{rel}}$.

    \text{\ref{item:liminf}} $T^{\mathrm{rel}}$ is almost surely finite.

    \text{\ref{item:radially_unbounded}} For all $t \in
    \mathbb{Z}_{\ge 0}$, $\PP{T^{\mathrm{rel}} \leq t} =
    \prod_{k=t}^{\infty}(1 - \lambda^k \relprob)$, and
    $\prod_{k=t}^{\infty}(1 - \lambda^k \relprob) \to 1$ as $t \to \infty$.
  \end{lemma}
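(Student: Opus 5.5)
The plan is to treat the three claims of Lemma~\ref{prop:Tq} in order. Throughout, the relaxation draws $U_0,U_1,\dots$ are taken to be i.i.d.\ $\mathrm{Uniform}[0,1]$ and independent of everything else, and we assume $\lambda<1$, which is the regime of \Cref{thm:uniform_goal_reaching} (episodes with $t<T_{\mathrm{tran}}$).

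\emph{Claim (i).} This is a counting argument. By the definition of $T^{\mathrm{rel}}$, every index $k\ge T^{\mathrm{rel}}$ satisfies $U_k\ge\lambda^k\relprob$, so $\mathbb{I}\{U_k<\lambda^k\relprob\}=0$ for all such $k$. The only indices that can contribute to the sum are therefore $k\in\{0,\dots,T^{\mathrm{rel}}-1\}$, and there are $T^{\mathrm{rel}}$ of them. Hence $\sum_{t}\mathbb{I}\{U_t<\lambda^t\relprob\}\le T^{\mathrm{rel}}$. If $T^{\mathrm{rel}}=\infty$ the inequality is trivial.

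\emph{Claim (ii).} Here I would invoke Borel--Cantelli, exactly as in the proof of \Cref{thm:goal-reaching}. We have $\sum_k\PP{U_k<\lambda^k\relprob}\le\relprob/(1-\lambda)<\infty$. So almost surely only finitely many events $\{U_k<\lambda^k\relprob\}$ occur. On that event, one plus the last such index is a finite $t$ at which the defining condition of $T^{\mathrm{rel}}$ holds, so $T^{\mathrm{rel}}<\infty$ a.s.

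\emph{Claim (iii).} The key step is the event identity
\[
\{T^{\mathrm{rel}}\le t\}=\bigcap_{k\ge t}\{U_k\ge\lambda^k\relprob\}.
\]
It holds because the set of $t'$ satisfying the defining condition is upward closed: if the condition holds for all $k\ge t'$, it holds for all $k\ge t''$ whenever $t''\ge t'$. Because the $U_k$ are independent, continuity from above applied to the finite intersections over $t\le k\le n$ gives
\[
\PP{T^{\mathrm{rel}}\le t}=\lim_{n\to\infty}\prod_{k=t}^{n}(1-\lambda^k\relprob)=\prod_{k=t}^{\infty}(1-\lambda^k\relprob).
\]
The individual factors use $\PP{U_k\ge c}=1-c$ for $c\in[0,1]$; the strict-versus-nonstrict inequality does not matter for a continuous distribution.

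For the limit as $t\to\infty$, I would use the bound $\log(1-x)\ge -x/(1-x)$. We have $\sum_{k\ge t}\lambda^k\relprob=\relprob\lambda^t/(1-\lambda)\to0$. If $\relprob<1$, every factor is bounded away from $0$, and the log of the tail product tends to $0$. If $\relprob=1$, the $k=0$ factor vanishes, but for $t\ge1$ the factors satisfy $1-\lambda^k\ge1-\lambda>0$, so the same argument applies.

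The main obstacle is this edge case and the measure-theoretic care around the infinite product; everything else is direct. An alternative route for the limit is to note that $\PP{T^{\mathrm{rel}}\le t}\to\PP{T^{\mathrm{rel}}<\infty}=1$ by (ii) and monotone continuity of probability. I would mention this as a cross-check.
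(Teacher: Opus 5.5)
Your proposal is correct and follows the same route as the paper: a counting argument for (i), Borel--Cantelli for (ii), and the event identity $\{T^{\mathrm{rel}}\le t\}=\bigcap_{k\ge t}\{U_k\ge\lambda^k\relprob\}$ combined with independence and a log-sum estimate for (iii). Your version is more careful than the paper's on two points: continuity from above for the infinite product, and the vanishing $k=0$ factor when $\relprob=1$. The only extra step in the paper is the observation $\rho^{\mathrm{rel}}_t\le\relprob$, which transfers the bound in (i) to the gated acceptances actually used in \Cref{thm:uniform_goal_reaching}.
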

  \begin{proof}
    \text{\ref{item:bounded_superlevel_sets}} Since
    $\rho^{\mathrm{rel}}_t \leq \relprob$ for all $t$:
    \begin{equation*}
      \sum_{t=0}^{\infty} \mathbb{I}\{U_t < \lambda^t
      \rho^{\mathrm{rel}}_t\} \leq \sum_{t=0}^{\infty}
      \mathbb{I}\{U_t < \lambda^t \relprob\} =: N_{\relprob}
    \end{equation*}

    Furthermore, $N_{\relprob} \leq \sum_{t=0}^{T^{\mathrm{rel}}-1} 1
    = T^{\mathrm{rel}}$, since by definition of $T^{\mathrm{rel}}$,
    the event $U_t < \lambda^t \relprob$ cannot occur for $t \geq
    T^{\mathrm{rel}}$.

    \text{\ref{item:liminf}} The Borel-Cantelli
    lemma~\cite{billingsley1995probability} ensures that
    $N_{\relprob}$ is almost surely finite because
    $\sum_{t=0}^{\infty} \lambda^t \relprob < \infty$, which implies
    that $T^{\mathrm{rel}}$ is almost surely finite as well.

    \text{\ref{item:radially_unbounded}} The event
    $\{T^{\mathrm{rel}} \leq t\}$ occurs if and only if $U_k \geq
    \lambda^t \relprob$ for all $k \geq t$.
    Since the $U_k$ are independent, it follows that:
    \[
      \textstyle \PP{T^{\mathrm{rel}} \leq t} = \prod_{k=t}^{\infty}
      \PP{U_k \geq \lambda^t \relprob} = \prod_{k=t}^{\infty} (1 -
    \lambda^k \relprob)\]

    Finally, $\prod_{k=t}^{\infty} (1 - \lambda^k \relprob) \to 1$ as
    $t \to \infty$ because $\sum_{k=t}^{\infty} \log(1 - \lambda^k
    \relprob) \to 0$ as $t \to \infty$, which follows from
    $\sum_{t=0}^{\infty} \lambda^t \relprob < \infty$.
  \end{proof}
  \begin{lemma}\label{prop:next_state_invokefallback}
    If $\State_t \in \mathbb{V}(d^{\circ})$ and $\Action_t =
    \Action^{\learningpolicy}_t$, then:

    \text{\ref{item:bounded_superlevel_sets}} The next state
    satisfies $\|\State_{t+1}\| \leq d^{\bar{\mathrm{p}}}(d^{\circ})$
    almost surely.

    \text{\ref{item:liminf}} Both $\state_0$ and all states with
    $\|\state\| \leq d^{\bar{\mathrm{p}}}(d^{\circ})$ satisfy
    $\goaldist(\state) \leq d^{\max}(d^{\circ})$.
  \end{lemma}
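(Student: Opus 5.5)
Part (i) follows directly from the standing one-step bound on the dynamics together with the definition of $d^{\bar{\mathrm{p}}}(d^{\circ})$ in \eqref{eq:Rrho}. The MDP assumptions give $\PP{\norm{\State_{t+1}} \le \bar{\transit}(\State_t,\Action_t)} = 1$ for whatever action is executed. On the event $\State_t \in \mathbb{V}(d^{\circ})$ with $\Action_t = \Action^{\learningpolicy}_t \in \actions$, we have
\[
  \bar{\transit}(\State_t,\Action_t) \le \sup\{\bar{\transit}(\state,\action) : \state\in\mathbb{V}(d^{\circ}),\,\action\in\actions\} = d^{\bar{\mathrm{p}}}(d^{\circ}).
\]
Conditioning on the history up to time $t$ then gives the almost-sure bound.

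Two facts need checking along the way. First, $d^{\bar{\mathrm{p}}}(d^{\circ})$ must be finite. I would argue this from $\mathbb{V}(d^{\circ})$ being closed (by continuity of $\kappalow$) and bounded (by the bounded-superlevel-set property in Assumption~\ref{ass:valuebase}; note $v^{\min}(d^{\circ})$ is attained, so it lies in $\kappalow(\states)$), hence compact in the finite-dimensional setting of Proposition~\ref{prop:functions_with_bounded_superlevel_sets}. The product $\mathbb{V}(d^{\circ})\times\actions$ is then compact, and an upper semicontinuous function attains a finite maximum on a compact set. Second, the conditional argument uses only that the transition kernel is applied to the realized pair $(\State_t,\Action_t)$; this holds because the arbitration decision is measurable with respect to the history up to time $t$.

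For part (ii), the initial state is immediate: $\goaldist(\state_0)\le d^{\circ}\le d^{\max}(d^{\circ})$ by the theorem's hypothesis and \eqref{eq:R}. For a state with $\norm{\state}\le d^{\bar{\mathrm{p}}}(d^{\circ})$, I would bound $\goaldist(\state)\le \norm{\state - \state'}$ for a suitable $\state'\in\G$. This gives $\goaldist(\state)\le\norm{\state}$ provided $0\in\G$ (i.e., the origin is a goal point, the usual normalization when the goal set is centered at the origin), and hence $\goaldist(\state)\le d^{\bar{\mathrm{p}}}(d^{\circ})\le d^{\max}(d^{\circ})$.

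The main obstacle is exactly this step. The inequality $\goaldist(\state)\le\norm{\state}$ requires $0\in\G$, which is not stated explicitly. If it is not assumed, I would instead shift coordinates so that a fixed goal point $\state_{\G}$ is the origin, reading the transition bound $\bar{\transit}$ as a bound on $\norm{\State'-\state_{\G}}$. Alternatively, I would add $\goaldist(0)$ to the definition of $d^{\max}(d^{\circ})$ in \eqref{eq:R}, using $\goaldist(\state)\le\norm{\state}+\goaldist(0)$. I would state this normalization explicitly before concluding.
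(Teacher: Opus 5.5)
Your proposal follows the paper's proof. Part (i) is the same chain $\norm{\State_{t+1}}\le\bar{\transit}(\State_t,\Action^{\learningpolicy}_t)\le d^{\bar{\mathrm{p}}}(d^{\circ})$ almost surely, and part (ii) is the same pair of observations $\goaldist(\state_0)\le d^{\circ}\le d^{\max}(d^{\circ})$ and $\goaldist(\state)\le\norm{\state}\le d^{\bar{\mathrm{p}}}(d^{\circ})\le d^{\max}(d^{\circ})$; your finiteness check for $d^{\bar{\mathrm{p}}}(d^{\circ})$ via compactness and upper semicontinuity is extra rigor the paper does not spell out here.

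You are also right that the step $\goaldist(\state)\le\norm{\state}$ silently requires $0\in\G$. The paper uses it without comment, and claim (ii) genuinely fails otherwise, for example for a small goal ball centred far from the origin. Your normalization, or enlarging $d^{\max}(d^{\circ})$ via $\goaldist(\state)\le\norm{\state}+\goaldist(0)$, repairs an assumption the paper leaves implicit; it is not a gap in your argument.
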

  \begin{proof}
    \text{\ref{item:bounded_superlevel_sets}} By the definition of
    $\bar{p}$ (from the system assumption) and
    $d^{\bar{\mathrm{p}}}(d^{\circ})$ in~\eqref{eq:Rrho}, when
    $\State_t \in \mathbb{V}(d^{\circ})$ and $\Action_t =
    \Action^{\learningpolicy}_t$, it holds that $\|\State_{t+1}\|
    \leq \bar{p}(\State_t, \Action^{\learningpolicy}_t) \leq
    d^{\bar{\mathrm{p}}}(d^{\circ})$ almost surely.

    \text{\ref{item:liminf}} By definition, $\goaldist(\state_0) \leq
    d^{\circ} \leq d^{\max}(d^{\circ})$.
    For any state with $\|\state\| \leq
    d^{\bar{\mathrm{p}}}(d^{\circ})$, it follows that
    $\goaldist(\state) \leq \|\state\| \leq
    d^{\bar{\mathrm{p}}}(d^{\circ}) \leq d^{\max}(d^{\circ})$.
  \end{proof}

  \begin{lemma}\label{prop:Tfallback}
    For any state $\state_0$ with $\goaldist(\state_0) \leq
    d^{\max}(d^{\circ})$:

    \text{\ref{item:bounded_superlevel_sets}}
    $\PP{\goaldist(\State_t^{\baselinepolicy}(\state_0)) \leq d^{*}
    \text{ for all } t \geq \tau^{\mathrm{b}}(d^{\circ},d^{*})} \geq 1-\eps$.

    \text{\ref{item:liminf}}
    $\PP{\goaldist(\State_t^{\baselinepolicy}(\state_0)) \leq
    \delta(d^{\circ}) \text{ for all } t \geq 0} \geq 1-\eps$.
  \end{lemma}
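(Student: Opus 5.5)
The plan is to apply the uniform $\eps$-improbable goal-reaching property of $\baselinepolicy$ (Assumption~\ref{ass:uniform_fallback}) with certificate $\beta\in\KL$ directly, and then extract both claims from the properties of $\beta$ and the decomposition $\beta(d,t)\le\kappa(d)\xi(e^{-t})$ used in \eqref{eq:Tfallback}. Fix $\state_0$ with $\goaldist(\state_0)\le d^{\max}(d^{\circ})$. By Definition~\ref{dfn:uniform_eta_improbable}, the event
\[
E:=\left\{\goaldist\!\left(\State_t^{\baselinepolicy}(\state_0)\right)\le \beta\!\left(\goaldist(\state_0),t\right)\ \text{for all } t\ge 0\right\}
\]
has probability at least $1-\eps$. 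Both claims will be shown by proving that $E$ is contained in the corresponding target event, and then using monotonicity of probability.

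For \ref{item:liminf}, I work on $E$. Because $\beta$ is nonincreasing in $t$, we have $\beta(\goaldist(\state_0),t)\le\beta(\goaldist(\state_0),0)$. Because $\beta(\cdot,0)$ is strictly increasing and $\goaldist(\state_0)\le d^{\max}(d^{\circ})$, this is at most $\beta(d^{\max}(d^{\circ}),0)=\delta(d^{\circ})$ by \eqref{eq:epsH}. Hence $E$ lies inside the event of \ref{item:liminf}.

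For \ref{item:bounded_superlevel_sets}, again on $E$, I use $\beta(d,t)\le\kappa(d)\xi(e^{-t})$ together with monotonicity of $\kappa$ and of $\xi\in\K_\infty$. For $t\ge\tau^{\mathrm{b}}:=\tau^{\mathrm{b}}(d^{\circ},d^{*})$ this gives
\[
\goaldist(\State_t)\le\kappa\!\left(d^{\max}(d^{\circ})\right)\xi\!\left(e^{-\tau^{\mathrm{b}}}\right).
\]
By \eqref{eq:Tfallback}, $\tau^{\mathrm{b}}\ge-\log\xi^{-1}\!\left(d^{*}/\kappa(d^{\max}(d^{\circ}))\right)$. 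Therefore $e^{-\tau^{\mathrm{b}}}\le\xi^{-1}\!\left(d^{*}/\kappa(d^{\max}(d^{\circ}))\right)$, and applying the increasing map $\xi$ yields the bound $d^{*}$.

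The main obstacle is making the inversion step well-defined, since $\xi^{-1}$ is only defined on the range of $\xi$. Because $\xi\in\K_\infty$, it is a bijection of $\R_{\ge0}$, so $\xi^{-1}$ exists everywhere. The remaining issue is that the argument $\xi^{-1}(\cdot)$ might be $\ge1$, which would make the logarithm nonpositive. This case is handled by the $\max\{1,\cdot\}$ in \eqref{eq:Tfallback}. If $\xi^{-1}(d^{*}/\kappa)\ge1$, then $e^{-t}\le 1\le\xi^{-1}(d^*/\kappa)$ for every $t\ge 0$, so the bound holds for all $t\ge0$, and in particular for $t\ge\tau^{\mathrm{b}}$.

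Two degenerate cases also need a brief check. First, $\kappa(d^{\max})>0$ must hold for the quotient to make sense; this follows from $d^{\max}\ge d^{\circ}>0$ and $\kappa\in\K_\infty$. Second, the case $\goaldist(\state_0)=0$ is covered by the same inequalities and needs no separate treatment.
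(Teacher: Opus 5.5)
Your proof is correct and follows the paper's route. You invoke the uniform $\varepsilon$-improbable goal-reaching property of the baseline on the event where $\beta$ bounds the distance, use monotonicity of $\beta$ for the overshoot claim, and use the decomposition $\beta(d,t)\le\kappa(d)\xi(e^{-t})$ with the definition of $\tau^{\mathrm{b}}(d^{\circ},d^{*})$ for the reaching-time claim, supplying the inversion details the paper leaves implicit.

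The case $\xi^{-1}(\cdot)\ge 1$ that you single out needs no separate handling. Since $\max\{1,\lceil x\rceil\}\ge x$ for every real $x$, you already get $e^{-\tau^{\mathrm{b}}}\le\xi^{-1}\bigl(d^{*}/\kappa(d^{\max}(d^{\circ}))\bigr)$ in general.
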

  \begin{proof}
    By Assumption~\ref{ass:uniform_fallback},
    \[
      \PP{\goaldist(\State_t^{\baselinepolicy}(\state_0)) \leq
      \beta(\goaldist(\state_0),t) \text{ for all } t} \geq 1-\eps
    \]

    \text{\ref{item:bounded_superlevel_sets}} From the definition of
    $\tau^{\mathrm{b}}(d^{\circ},d^{*})$ in \eqref{eq:Tfallback}, it
    follows that $\beta(\goaldist(\state_0),t) \leq d^{*}$ for all $t
    \geq \tau^{\mathrm{b}}(d^{\circ},d^{*})$ when
    $\goaldist(\state_0) \leq d^{\max}(d^{\circ})$.

    \text{\ref{item:liminf}} For any $t \geq 0$, it holds that
    $\beta(\goaldist(\state_0),t) \leq \beta(d^{\max}(d^{\circ}),0) =
    \delta(d^{\circ})$ when $\goaldist(\state_0) \leq d^{\max}(d^{\circ})$.
  \end{proof}
  \textit{Conclusion of the proof of Theorem \ref{thm:uniform_goal_reaching}}.
  From Lemmas~\ref{prop:T} and~\ref{prop:Tq}, the total number of
  times the baseline policy is chosen is at most $\tau(d^{\circ}) +
  T^{\mathrm{rel}}$ almost surely.

  From Lemmas~\ref{prop:disallowed_superset}
  and~\ref{prop:next_state_invokefallback}, whenever the algorithm
  switches to the baseline policy, the state satisfies
  $\goaldist(\state) \leq d^{\max}(d^{\circ})$.

  From Lemma~\ref{prop:Tfallback}, after running the baseline policy
  for $\tau^{\mathrm{b}}(d^{\circ},d^{*})$ steps from any such state,
  the system stays within $d^{*}$ of the goal thereafter with
  probability at least $1-\eps$.

  Therefore, define the reaching time as
  \begin{equation}
    \label{eq:Tcalfwpolicy}
    T(d^{\circ},d^{*}) := (\tau(d^{\circ}) +
    T^{\mathrm{rel}})\tau^{\mathrm{b}}(d^{\circ},d^{*})
  \end{equation}
  All three claims now follow:

  Claim~\ref{claim:uniform_overshoot_bound} follows from
  Lemma~\ref{prop:Tfallback}\text{\ref{item:liminf}}, showing that
  the maximum deviation from the goal is bounded by $\delta(d^{\circ})$.

  Claim~\ref{claim:uniform_reaching_time} follows as the baseline
  policy is used at most $\tau(d^{\circ}) + T^{\mathrm{rel}}$ times,
  and $\tau^{\mathrm{b}}(d^{\circ},d^{*})$ baseline steps suffice to
  maintain the system within $d^*$ of the goal with probability
  $1-\eps$, whether starting from the initial state or after baseline
  policy use.

  Claim~\ref{claim:distribution_reaching_time} follows directly from
  Lemma~\ref{prop:Tq}\text{\ref{item:radially_unbounded}} and the
  definition of $T(d^{\circ},d^{*})$ in~\eqref{eq:Tcalfwpolicy}.

  The proof of Theorem~\ref{thm:uniform_goal_reaching} is complete.
\end{proof}

\subsection{Proof of Corollary~\ref{thm:uniform_goal_reaching_value}}
\label{app:proof_uniform_value_corollary}
\begin{proof}
  Take the proof of \Cref{thm:uniform_goal_reaching} and perform the
  following replacements everywhere: $Q^{w}(\state,\action)\to
  \Value^{w}(\state)$, $Q^{\dagger}_{t}\to \Value^{\dagger}_{t}$, and
  $\rho^{\mathrm{rel}}_t\to
  \relprob\mathbb{I}\{\Value^{w_t}(\State_t)\ge
  \Value^{w_0}(\State_0)\}$, while relabeling Assumptions
  \ref{ass:valuebase} and \ref{ass:uniform_fallback} to
  \ref{ass:valuebase_value} and \ref{ass:uniform_fallback_value};
  with these substitutions, every definition, lemma, and bound is
  unchanged, and the corollary follows.
\end{proof}

\subsection{Proof of Theorem~\ref{thm:transfer}}
\label{app:proof_transfer}
\begin{proof}
  The state processes $\State_{0:T}^{\learningpolicy}$ and
  $\State_{0:T}^{\policy_t}$ are sampled independently, as in
  Definition~\ref{dfn:trajectory-distance}.

  \textit{Step~1 (geometric argument).} Define the \emph{tube event}
  and its complement:
  \[
    E_\delta := \left\{\sup_{0 \le t < T}
      \|\State_t^{\learningpolicy} - \State_t^{\policy_t}\| \le
    \delta\right\}, \; E_\delta^c := \left\{\sup_{0 \le t < T}
    \|\State_t^{\learningpolicy} - \State_t^{\policy_t}\| > \delta\right\}.
  \]
  Claim: $\left\{\tau_T(d^*,\, \policy_t) < T\right\} \cap E_\delta
  \subseteq \left\{\tau_T(d^* + \delta,\, \learningpolicy) < T\right\}$.

  Indeed, suppose $\tau_T(d^*,\, \policy_t) = t^* < T$.
  Then for every $t \in \{t^*,\ldots,T{-}1\}$,
  $\goaldist\left(\State_t^{\policy_t}\right) \le d^*$.
  On $E_\delta$, for every such $t$:
  \[
    \goaldist\left(\State_t^{\learningpolicy}\right) \;\le\;
    \|\State_t^{\learningpolicy} - \State_t^{\policy_t}\| +
    \goaldist\left(\State_t^{\policy_t}\right) \;\le\; \delta + d^*,
  \]
  so $\tau_T(d^* + \delta,\, \learningpolicy) \le t^* < T$.

  \textit{Step~2 (probability estimate).} Using the inclusion from Step~1:
  \[
    \PP{\tau_T(d^* + \delta,\, \learningpolicy) < T} \ge
    \PP{\left\{\tau_T(d^*,\, \policy_t) < T\right\} \cap E_\delta}.
  \]
  Decomposing by $E_\delta$:
  \begin{multline}
    \PP{\left\{\tau_T(d^*,\, \policy_t) < T\right\} \cap E_\delta}
    = \\
    \PP{\tau_T(d^*,\, \policy_t) < T}
    - \PP{\left\{\tau_T(d^*,\, \policy_t) < T\right\} \cap E_\delta^c}
    \\
    \ge \PP{\tau_T(d^*,\, \policy_t) < T} -
    \PP{E_\delta^c}.
    \label{eq:decompose}
  \end{multline}

  \textit{Step~3 (Markov bound on tube exit).} By Markov's inequality:
  \begin{multline}\label{eq:markov_tube}
    \PP{E_\delta^c}
    = \PP{\sup_{0 \le t < T}
      \|\State_t^{\learningpolicy} - \State_t^{\policy_t}\|
    > \delta}
    \\
    \le \frac{\E{\sup_{0 \le t < T}
        \|\State_t^{\learningpolicy} -
    \State_t^{\policy_t}\|}}{\delta}
    = \frac{\DT(\learningpolicy, \policy_t)}{\delta}
    \le \frac{\Delta_T}{\delta}\,.
  \end{multline}

  \textit{Step~4 (combine).} Substituting
  Assumption~\ref{ass:transfer_settling} and \eqref{eq:markov_tube}
  into~\eqref{eq:decompose}:
  \begin{multline}
    \PP{\tau_T(d^* + \delta,\, \learningpolicy) < T} \;\ge\;
    \PP{\tau_T(d^*,\, \policy_t) < T} - \frac{\Delta_T}{\delta}
    \\
    \;\ge\; (1 - \eps) - \frac{\Delta_T}{\delta}\,.
    \qedhere
  \end{multline}
\end{proof}

\section*{CRediT authorship contribution statement}

Anton Bolychev and Georgiy Malaniya contributed equally to this work.

Anton Bolychev: Formal analysis, Investigation, Software, Validation,
Visualization, Writing -- original draft.

Georgiy Malaniya: Methodology, Formal analysis, Software, Validation,
Writing -- original draft.

Sinan Ibrahim: Software, Visualization, Data curation.

Pavel Osinenko: Conceptualization, Methodology, Supervision, Writing
-- review and editing.

\section*{Declaration of competing interest}

The authors declare that they have no known competing financial
interests or personal relationships that could have appeared to
influence the work reported in this paper.

\section*{Acknowledgements}

Research reported in this publication was financially supported by
the Russian Science Foundation (RSF) grant No.~25-21-00872.

\section*{Data availability}

All simulation code, source code required to reproduce the
experimental runs, the resulting run data, and scripts used to
generate the figures are available in the project repository: \gitrepo.

\bibliographystyle{elsarticle-num}
\bibliography{
  bib/Osinenko__Apr2025.bib, bib/AIDA__Apr2025.bib, bib/calf-enhance.bib
}

\end{document}

%% file: preamble-AIDA.tex
\newcommand{\eps}{{\varepsilon}}										
\newcommand{\nrm}[1]{\left\lVert#1\right\rVert}							
\providecommand{\norm}[1]{\nrm{#1}}										
\newcommand{\E}[2][{}]{\mathbb E_{#1}\left[#2\right]}					
\newcommand{\PP}[1]{\mathbb P\left[#1\right]}							
\newcommand{\ra}{\rightarrow}											
\newcommand{\ie}{\unskip, i.\,e.,\xspace}								

\newcommand{\N}{{\mathbb{N}}}											
\newcommand{\R}{{\mathbb{R}}}											
\newcommand{\state}{s}													
\newcommand{\State}{S}													
\newcommand{\states}{\mathbb S}											
\newcommand{\action}{a}													
\newcommand{\Action}{A}													
\newcommand{\actions}{\mathbb A}										
\newcommand{\policy}{\pi}												
\newcommand{\policies}{\Pi}												
\newcommand{\transit}{p}												
\newcommand{\reward}{r}													
\newcommand{\Value}{v}													
\newcommand{\G}{\ensuremath{\mathbb{G}}}								
\newcommand{\K}{\ensuremath{\mathcal{K}}\xspace}						
\newcommand{\KL}{\ensuremath{\mathcal{KL}}\xspace}						